\documentclass{article}
\usepackage{ijcai26}

\usepackage{times}
\usepackage{soul}
\usepackage{url}
\usepackage[hidelinks]{hyperref}
\usepackage[utf8]{inputenc}
\usepackage[small]{caption}
\usepackage{graphicx}
\usepackage{amsmath}
\usepackage{amsthm}
\usepackage{booktabs}
\usepackage{algorithm}
\usepackage{algorithmic}
\usepackage[switch]{lineno}
\usepackage{amsmath,amssymb,amsfonts}
\usepackage{enumitem}
\usepackage{textcomp}
\usepackage{colortbl}
\usepackage{xcolor}
\usepackage[most]{tcolorbox}
\usepackage{subcaption}
\usepackage{multirow}
\usepackage{pifont}
\usepackage{tabularx}

\newtheorem{theorem}{Theorem}

\title{\emph{A$_3$B$_2$}: Adaptive Asymmetric Adapter for Alleviating Branch Bias in Vision-Language Image Classification with Few-Shot Learning}

\author{
Yiyun Zhou$^1$
\and
Zhonghua Jiang$^1$\and
Wenkang Han$^1$\and
Kunxi Li$^1$\and
Mingjing Xu$^{2}$\and
Chang Yao$^{1}$\And
Jingyuan Chen$^1$\thanks{Corresponding author.}\\
\affiliations
$^1$Zhejiang University\\
$^2$Swansea University\\
\emails
\{yiyunzhou, jiangzhonghua, wenkangh, kunxili, changy, jingyuanchen\}@zju.edu.cn,
mingjing.xu@swansea.ac.uk
}

\begin{document}
\theoremstyle{definition}                
\newtheorem{assumption}{Assumption}[section]   
\newtheorem{definition}{Definition}

\theoremstyle{plain}                     
\newtheorem{lemma}[theorem]{Lemma}       
\newtheorem{proposition}[theorem]{Proposition}
\newtheorem{corollary}[theorem]{Corollary}

\theoremstyle{remark}                    
\newtheorem{remark}{Remark}[section]

\maketitle

\begin{abstract}

Efficient transfer learning methods for large-scale vision–language models (\textit{e.g.}, CLIP) enable strong few-shot transfer, yet existing adaptation methods follow a fixed fine-tuning paradigm that implicitly assumes a uniform importance of the image and text branches, which has not been systematically studied in image classification. Through extensive analysis, we reveal a \emph{Branch Bias} issue in vision-language image classification: adapting the image encoder does not always improve performance under out-of-distribution settings. Motivated by this observation, we propose \textbf{\emph{A$_3$B$_2$}}, an \textbf{\underline{A}}daptive \textbf{\underline{A}}symmetric \textbf{\underline{A}}dapter that alleviates \textbf{\underline{B}}ranch \textbf{\underline{B}}ias in few-shot learning. \emph{A$_3$B$_2$} introduces \textit{Uncertainty-Aware Adapter Dampening} (\emph{UAAD}), which automatically suppresses image-branch adaptation when prediction uncertainty is high, enabling soft and data-driven control without manual intervention. Architecturally, \emph{A$_3$B$_2$} adopts a lightweight asymmetric design inspired by mixture-of-experts with \textit{Load Balancing Regularization}. Extensive experiments on three few-shot image classification tasks across 11 datasets demonstrate that \emph{A$_3$B$_2$} consistently outperforms 11 competitive prompt- and adapter-based baselines.

\end{abstract}

\vspace{-0.2cm}
\section{Introduction}
\label{sec:introduction}
\vspace{-0.1cm}

Large-scale Vision-Language Models (VLMs)~\cite{radford2021learning,alayrac2022flamingo,li2023scaling,li2025flowmm,jiang2026acckv,jiang2025purekv,zhou2026collaborative} with dual branches are foundational for downstream transfer learning tasks due to their strong generalization. CLIP~\cite{radford2021learning}, trained on 400 million image-text pairs, aligns visual and textual modalities in a shared feature space via separate encoders and contrastive learning. However, full fine-tuning VLMs for specific tasks is computationally expensive and prone to overfitting in few-shot settings~\cite{zhou2025cuff}.

To address the aforementioned issues, efficient transfer learning methods have been proposed~\cite{zhou2022learning,yang2024mma,guo2025mmrl,li2025mergenet,guo2025mmrl++}, enabling VLMs to adapt to different downstream tasks with minimal additional parameters. Previous approaches have mainly focused on prompt tuning and adapter-based methods. Prompt tuning aims to address the limitation of VLMs that require manually designed text prompts, which requires expert knowledge and is time-consuming. For instance, in the OxfordPets~\cite{parkhi2012cats} dataset, CLIP's text encoder computes class embeddings using the prompt ``\texttt{A photo of a [CLASS], a type of pet.}''. These embeddings are then matched with the outputs by the image encoder, and the most similar one is selected to predict the image category. Most prior prompt tuning methods~\cite{zhou2022learning,guo2025mmrl} introduce learnable continuous vectors into the text encoder to dynamically model prompts, optimizing them during training while freezing other parameters, thereby enabling adaptation to specific datasets. In contrast, adapter-based methods~\cite{yang2024mma} add lightweight, architecture-agnostic modules, making them easily integrable into different VLMs and allowing the models to adapt to potential downstream task spaces.

Although prior methods have achieved impressive few-shot performance in data and model adaptation, most  CLIP-based methods follow a fixed fine-tuning paradigm: prompt tuning methods optimize the text encoder, while adapter-based methods are attached to the image encoder~\cite{li2024vision,zhang2024vision}. This raises a compelling research question: \textit{\textbf{which matters more in different few-shot tasks, CLIP's text or image encoder?}}~\footnote{We refer to this as the CLIP's \emph{Branch Bias} issue.} Recent studies have sparked interest in this issue. \citeauthor{yang2025language} train only the image encoder while keeping the text encoder fixed, achieving surprising results on compositional understanding and long-text description tasks. \citeauthor{gong2025kernel} propose a kernel-based method, significantly improving the image encoder's performance on zero-shot target recognition, fine-grained spatial reasoning, and localization tasks. \citeauthor{fu2025hidden} find that in vision-centered benchmarks (\textit{e.g.}, depth estimation, correspondence), VLMs perform worse than their image encoders, with performance nearing random levels. In open-vocabulary semantic segmentation tasks, most past methods~\cite{li2022language,liang2023open,xu2023side} freeze the text encoder to preserve generalization, while recent studies~\cite{peng2025understanding} show that jointly fine-tuning both encoders leads to better segmentation results. However, \textbf{there is no consensus on the \emph{Branch Bias} issue in the general tasks of image classification}.

To address gaps in prior work, we conduct extensive exploratory experiments on three visual tasks: base-to-novel generalization, cross-dataset evaluation, and domain generalization. Crucially, our insights reveal a significant phenomenon: \textbf{fine-tuning the image encoder does not always yield performance gains and may even degrade robustness on out-of-distribution (OOD) tasks}. While a straightforward response to this discovery would be to rigidly lock the image encoder for OOD scenarios, such a hard-coded approach is cumbersome and impractical, as it requires prior knowledge of the target data distribution to manually toggle the adapters. To resolve this dilemma, we propose a novel \textit{\textbf{A}daptive \textbf{A}symmetric \textbf{A}dapter for alleviating \textbf{B}ranch \textbf{B}ias} (\textbf{\emph{A$_3$B$_2$}}) in vision-language image classification. Unlike rigid switching mechanisms, \emph{A$_3$B$_2$} achieves adaptability through a uncertainty-aware optimization mechanism. Specifically, we introduce \textit{Uncertainty-Aware Adapter Dampening} (\emph{UAAD}). This mechanism dynamically monitors prediction confidence; when \textbf{the model encounters high-uncertainty samples (indicative of potential OOD data), it automatically suppresses the magnitude of the image adapter's contribution} via a regularization penalty. This soft-constraint approach effectively operationalizes our insights without requiring manual intervention. Structurally, inspired by the Mixture of Experts (MoE) design~\cite{fedus2022switch,xue2022go,tian2024hydralora,gao2024higher,zhou2025cola}, \emph{A$_3$B$_2$} consists of a single dimensionality-reduction matrix and multiple dimensionality-expansion matrices, further enhanced by a \textit{Load Balancing Regularization} to ensure diverse expert utilization~\cite{shazeer2017outrageously}.

Our contributions are summarized as follows:
\begin{itemize}[leftmargin=*]
    \item{We systematically explore the \textit{Branch Bias} issue in image classification for \textbf{transformer-based CLIP models}. Three insights discovered help build the optimal adapter structure for vision-language image classification tasks.}
    \item{We propose a task-adaptive, structure-asymmetric adapter to alleviate \textit{Branch Bias} in vision-language image classification. \textbf{Empirical evidence, theoretical support for the asymmetric structure, and theoretical guarantees of the method} are provided in Appendix \ref{sec:fixed} and \ref{sec:guarantee}, respectively.}
    \item{Evaluations on three few-shot tasks show that, across 11 image classification datasets, \emph{A$_3$B$_2$} consistently achieves leading performance compared to 11 competitive baselines. The experiments also confirm the generality of our insights.}
\end{itemize}

\vspace{-0.1cm}
\section{Related Work}
\label{sec:related_work}
\vspace{-0.1cm}

\subsection{CLIP’s Branch Bias in Various Tasks}

Recent studies have sparked significant interest in the CLIP's \emph{Branch Bias} issue, particularly in tasks requiring fine-grained visual discrimination. For compositional and spatial tasks, the image branch often requires more adaptation. \citeauthor{yang2025language} demonstrate that training only the image encoder while keeping the text encoder fixed yields superior performance in compositional understanding and long-text description. Similarly, \citeauthor{gong2025kernel} propose a kernel-based alignment method that focuses on the image encoder, significantly improving zero-shot target recognition and localization. \citeauthor{fu2025hidden} further reveal that in strictly vision-centered benchmarks (\textit{e.g.}, depth estimation), full VLMs can paradoxically perform worse than their frozen image encoders, suggesting textual interference.
In contrast, for open-vocabulary semantic segmentation, the consensus has shifted. While earlier methods~\cite{li2022language,liang2023open,xu2023side} froze the text encoder to preserve generalization, recent work by \citeauthor{peng2025understanding} argues that jointly fine-tuning both encoders is essential for aligning pixel-level features with semantic concepts. Despite these diverse findings in structured prediction and reasoning tasks, the manifestation of this \emph{Branch Bias} in general image classification remains largely underexplored.

\subsection{Efficient Transfer Learning in Vision-Language Image Classification}

Prompt learning methods enhance the adaptability of Vision-Language Models (VLMs) by dynamically adjusting pretrained representations. CoOp~\cite{radford2021learning} introduces the learnable prompt vector, replacing fixed templates with optimized contexts to improve flexibility, though this sacrifices CLIP’s zero-shot generalization. CoCoOp~\cite{zhou2022conditional} addresses this limitation by incorporating dynamic instance-conditioned prompts, improving generalization to unseen categories. KgCoOp~\cite{yao2023visual} further constrains the semantic gap between learned and handcrafted prompts, preserving general textual knowledge and enhancing model generalization. RPO~\cite{lee2023read} employs a masked attention mechanism to freeze the pretrained representations, reducing internal representation changes and improving model stability, particularly in data-scarce scenarios. MaPLe~\cite{khattak2023maple} achieves cross-modal alignment by coupling visual-text prompts, enhancing adaptability to new categories and unseen domains. TCP~\cite{yao2024tcp} innovatively embeds category semantics into prompt tokens, strengthening generalization for new categories and few-shot settings. MMRL~\cite{guo2025mmrl} proposes a shared learnable representation space to facilitate efficient visual-text interaction, with regularization to prevent overfitting, especially suited for few-shot learning and task adaptation. MMRL++~\cite{guo2025mmrl++} further optimizes this framework by employing a parameter-efficient low-rank alignment and progressive feature composition, which jointly facilitate robust generalization and training stability with minimal overhead.

Recent adapter-style methods provide another effective approach for model adaptation, typically integrating lightweight modules into VLMs for downstream task. CLIP-Adapter~\cite{gao2024clip} introduces a residual MLP layer at the image encoder’s end, significantly improving performance in visual tasks. MMA~\cite{yang2024mma} proposes a multimodal adapter method, aggregating image and text features into a shared space, enhancing consistency between modalities and allowing cross-modal gradient flow, thereby improving the model's overall adaptability and task transfer capability. However, none of these methods have systematically investigated the \emph{Branch Bias} in vision-language image classification, typically adhering to static adaptation paradigms that fail to dynamically balance the dual encoders. Consistent with previous research~\cite{zhou2022learning,zhou2022conditional,guo2025mmrl++}, we use CLIP (transformer-based ViT-B/16 backbone) as the base model for our method.

\section{Methodology}
\label{sec:method}

\setlength{\abovedisplayskip}{3.5pt}
\setlength{\belowdisplayskip}{3.5pt}

\subsection{\textit{Preliminary}}
\label{sec:preliminary}

The CLIP model consists of an image encoder $\mathcal{V}$ and a text encoder $\mathcal{T}$, encoding images and corresponding texts.

\paragraph{Image Encoder}

The image encoder $\mathcal{V}$ consists of $L$ Transformer blocks~\cite{vaswani2017attention}, denoted as $\{\mathcal{V}_i\}_{i=1}^L$. The input image $I \in \mathbb{R}^{H\times W \times 3}$ is divided into $M$ patches, which are then embedded to obtain the initial features $x_0$:
\begin{align}
x_0 = \texttt{Patch\ Embedding}(I),
\end{align}
where $x_0 \in \mathbb{R}^{M\times d_v}$ ($d_v$ is the image embedding dimension).

The patch embeddings $x_{i-1}$, along with a learnable class (CLS) token $c_{i-1}$, serve as inputs to the next Transformer block $\mathcal{V}_i$ and are processed sequentially:
\begin{align}
[c_i, x_i] = \mathcal{V}_i([c_{i-1}, x_{i-1}]), \quad i = 1, 2, \dots, L. \label{eq:image_inert}
\end{align}

Finally, a patch projection layer maps the CLS token $c_L$ from the last Transformer block $\mathcal{V}_L$ into a shared Vision-Language (V-L) latent space to obtain the final image representation $x$:
\begin{align}
x = \texttt{Patch\ Projection}(c_L),
\end{align}
where $x \in \mathbb{R}^d$, and $d$ is the feature dimension of the shared V-L space.

\paragraph{Text Encoder}

For a given text $T$ (\textit{e.g.}, ``\texttt{A photo of a [CLASS].}''), it is first tokenized and converted into word embeddings $w_0$ using a Text Embedding layer:
\begin{align}
[w_0^j]_{j=1}^N = \texttt{Text\ Embedding}(T),
\end{align}
where $w_0 \in \mathbb{R}^{N\times d_t}$, with $N$ as the token length and $d_t$ as the text embedding dimension.

At each layer, $w_{i-1}$ is sequentially processed by the next Transformer block $\mathcal{T}_i$ to extract text features $w_i$:
\begin{align}
[w_i^j]_{j=1}^N = \mathcal{T}_i([w_{i-1}^j]_{j=1}^N), \quad i = 1, 2, \dots, L. \label{eq:text_inert}
\end{align}

The final text representation $w$ is obtained by projecting the last token output from the final Transformer block $\mathcal{T}_L$ into the shared V-L space via a Text Projection layer:
\begin{align}
w = \texttt{Text\ Projection}(w_L^N),
\end{align}
where $w \in \mathbb{R}^d$.

\paragraph{Zero-shot Classification}

Given the image representation $x$ and text representations $\{w_c\}_{c=1}^{N_c}$, where $N_c$ is the number of classes, CLIP computes the cosine similarity between them:
\begin{align}
\texttt{cos}(x, w_c) = \frac{x \cdot w_c}{|x||w_c|},
\end{align}
where $|\cdot|$ denotes the $L_2$ norm. The classification probability for each class $c$ is then predicted as:
\begin{align}
\textbf{p}(\hat{y} = c \mid x) = \frac{\exp(\texttt{cos}(x, w_c)/\tau)}{\sum_{i=1}^{N_c} \exp(\texttt{cos}(x, w_i)/\tau)},
\end{align}
where $\tau$ is a learnable temperature parameter in CLIP. The class with the highest probability is selected as the final prediction.

\begin{figure}[ht]
    \centering
    \begin{subfigure}[b]{\linewidth}
        \includegraphics[width=\linewidth]{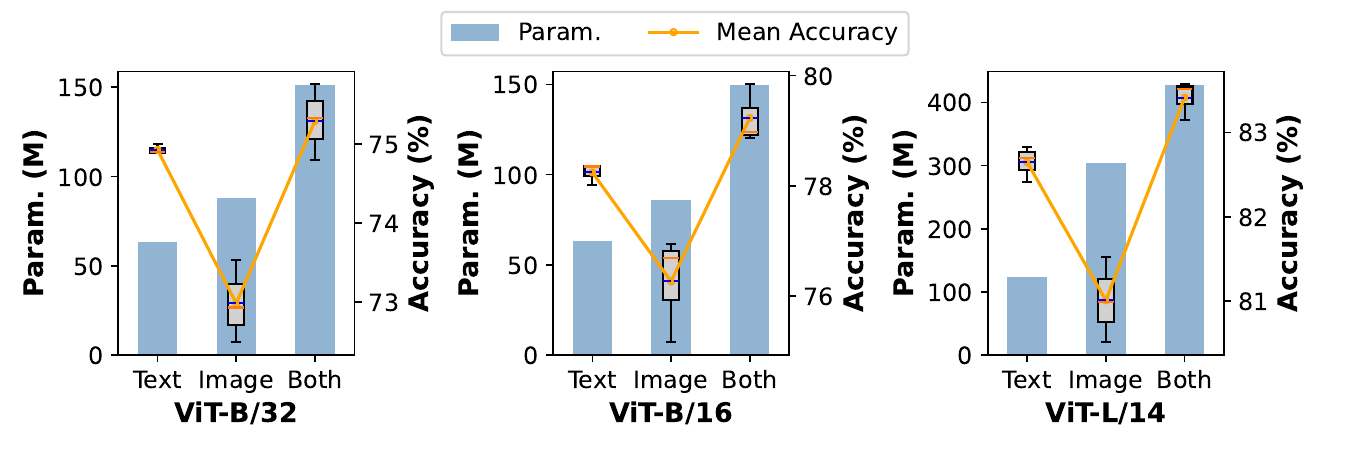}
        \vspace{-0.7cm}
        \caption{Base-to-Novel Generalization}
        \label{fig:base_to_novel}
    \end{subfigure}

    \begin{subfigure}[b]{\linewidth}
        \includegraphics[width=\linewidth]{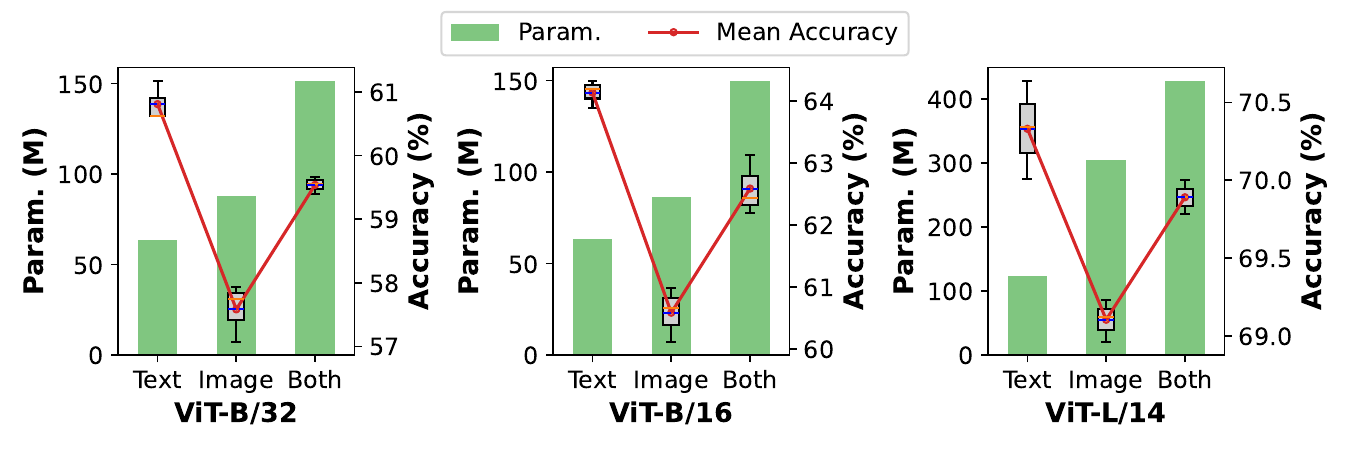}
        \vspace{-0.7cm}
        \caption{Cross-Dataset Evaluation}
        \label{fig:cross_dataset}
    \end{subfigure}

    \begin{subfigure}[b]{\linewidth}
        \includegraphics[width=\linewidth]{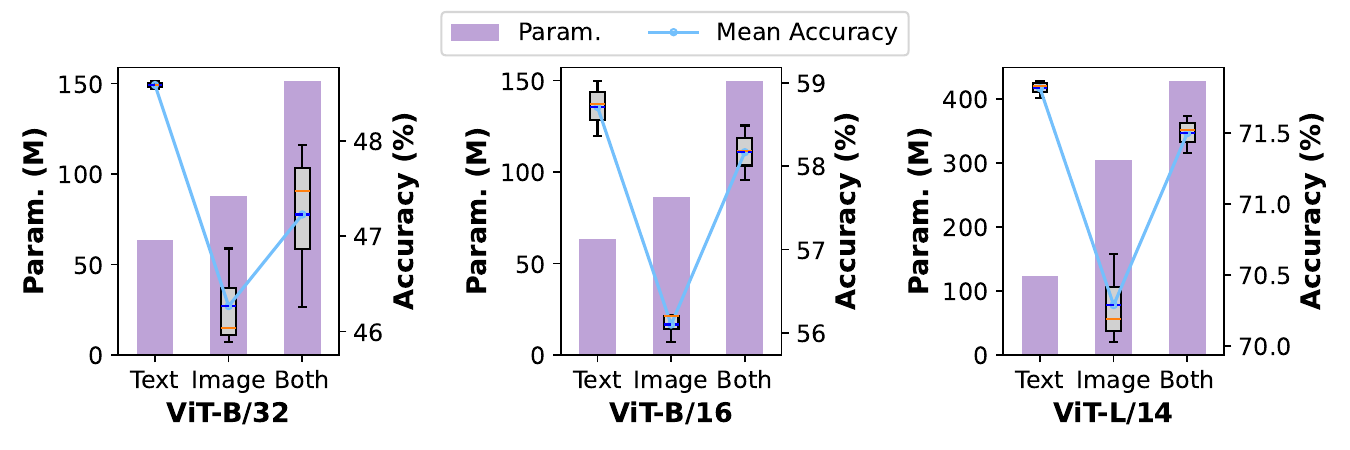}
        \vspace{-0.7cm}
        \caption{Domain Generalization}
        \label{fig:domain_generalization}
    \end{subfigure}
    \vspace{-0.7cm}
    \caption{The average performance of text or image adapters on three few-shot image classification tasks with three pretrained transformer-based CLIP models (empirical evidence from Branch Bias). The results on each dataset can be found in the Appendix~\ref{sec:motivation}.}
    \label{fig:motivation}
    \vspace{-0.4cm}
\end{figure}

\begin{figure*}[ht]
\centering
    \includegraphics[width=\linewidth]{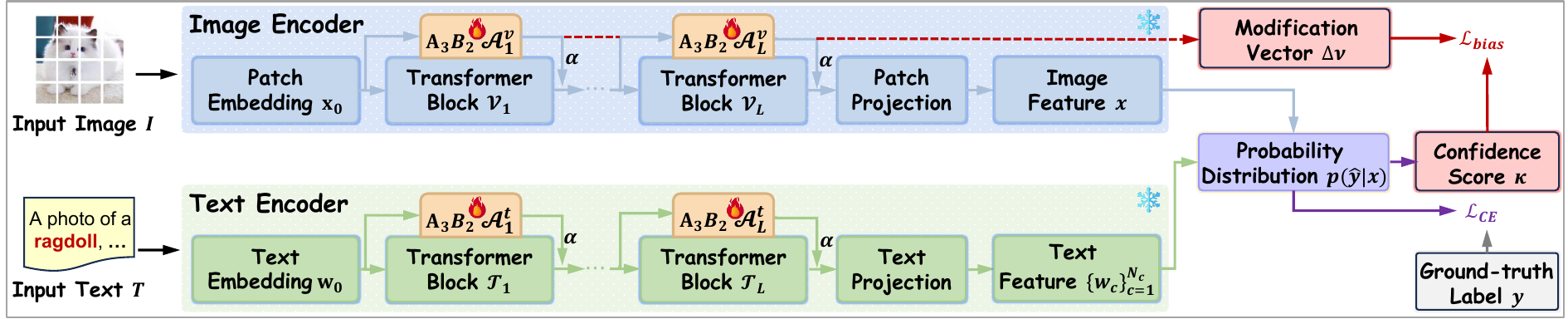}
    \vspace{-0.7cm}
    \textbf{\caption{Overview of the proposed \emph{A$_3$B$_2$} architecture. The asymmetric adapters are integrated into each Transformer layer of the CLIP.}
    \label{fig:method}}
    \vspace{-0.4cm}
\end{figure*}

\begin{figure}[ht]
\centering
    \includegraphics[width=\linewidth]{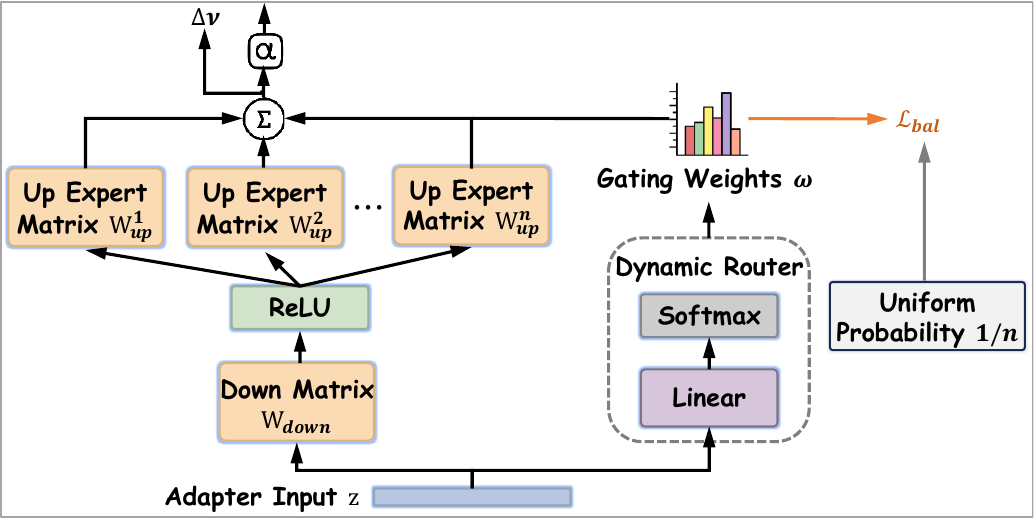}
    \vspace{-0.7cm}
    \textbf{\caption{Detailed structure of the \emph{A$_3$B$_2$} adapter. The module consists of a shared down-projection layer and a dynamic router that adaptively weights multiple up-projection experts.}
    \label{fig:adapter}}
    \vspace{-0.4cm}
\end{figure}

\subsection{\emph{A$_3$B$_2$}: \textit{Adaptive Asymmetric Adapter}}
\label{sec:a3}
\subsubsection{\textit{Experimental Observations on Branch Bias}}
We conduct extensive exploratory experiments to investigate CLIP's \emph{Branch Bias} in image classification tasks. Specifically, we apply 11 diverse image datasets across three widely used image classification tasks (base-to-novel generalization, cross-dataset evaluation, and domain generalization). By fine-tuning a basic adapter (a pair of dimensionality reduction and expansion matrices) on each layer of the image or text encoders in three different CLIP variants (ViT-B/32, ViT-B/16, ViT-L/14), we evaluate performance across different tasks. Fig.~\ref{fig:motivation} shows the average performance across these tasks, along with the parameter sizes of the image and text encoders in various CLIP variants. Further details (experimental setup and additional results) are available in the Appendix. We summarize the following three insights:

\paragraph{\texttt{Insight I}} In various tasks, fine-tuning additional parameters on the image encoder results in lower performance gains compared to the text encoder.

\paragraph{\texttt{Insight II}} In in-distribution tasks (\textit{e.g.}, base-to-novel generalization), fine-tuning adapters on both encoders  achieves the best performance.

\paragraph{\texttt{Insight III}} In out-of-distribution  tasks (\textit{e.g.}, cross-dataset evaluation and domain generalization), fine-tuning additional parameters on the image encoder may harm the transferability of VLMs on non-distribution data.

\subsubsection{\textit{Task-adaptive and Structure-asymmetric Adapter}}
\label{sec:macro}

Based on the insights above, we propose an asymmetric architecture where adapters are always active on the text encoder (\texttt{Insight I}), while the image encoder's adaptation is dynamically regulated (\texttt{Insight II \& III}).

As shown in Fig.~\ref{fig:method}, we integrate the \emph{A$_3$B$_2$} adapter (details in Fig.~\ref{fig:adapter}) into the CLIP backbone. For the text encoder, we modify Eq.~\ref{eq:text_inert} by adding the adapter $\mathcal{A}^t$ to each layer:

\begin{align}
\begin{cases}
[w_i^j]_{j=1}^N = \mathcal{T}_i([w_{i-1}^j]_{j=1}^N) + \alpha \underline{\mathcal{A}_i^t([w_{i-1}^j]_{j=1}^N)}, \\
\quad \quad \quad \quad \quad \quad \quad \quad \quad \quad \quad \quad \quad i = 1, \dots, L,\\
\mathcal{A}_i^t(z=[w_{i-1}^j]_{j=1}^N) = \sum_k^n \overset{t}{\mathfrak{w}}_i^k {W_{up}^{t}}_i^k \\
\quad \quad \quad \quad \quad \quad \quad \quad \quad \left(\texttt{ReLU}({W_{down}^{t}}_i(z))\right)
\end{cases}
\end{align}
where \underline{underline} indicates trainable modules, and $\alpha$ controls the adaptation strength. $\texttt{ReLU}(\cdot)$ is the activation function. ${W_{down}^{t}}_i\in \mathbb{R}^{d_{ht} \times r}$ and ${W_{up}^{t}}_i^k \in \mathbb{R}^{r\times d_{ht}}$, where $d_{ht}$ represents the hidden layer dimension of the text encoder and rank $r \ll d_{ht}$, represent the ``Down'' matrix and \textit{k}-th ``Up'' expert matrix in text encoder layer $\mathcal{T}_i$, respectively. Additionally, a dynamic router computes the expert weights:
\begin{align}
\overset{t}{\mathfrak{w}}_i = \texttt{softmax}({\overset{t}{W}}_i^g z), \label{eq:router_text}
\end{align}
where ${\overset{t}{W}}_i^g\in \mathbb{R}^{d_{ht}\times n}$, $n$ is the number of experts.

\begin{table*}[!t]
\centering
\resizebox{0.95\textwidth}{!}{%
\begin{tabular}{@{}c|c|ccccccccccc|c@{}}
\toprule[1.5pt]
\midrule
\textbf{Dataset} & \textbf{Setting} & \textbf{CLIP} & \textbf{CoOp} & \textbf{CoCoOp} & \textbf{KgCoOp} & \textbf{RPO}   & \textbf{MaPLe} & \textbf{CLIP-Adapter} & \textbf{TCP} & \textbf{MMA}   & \textbf{MMRL} & \textbf{MMRL++} & \textbf{\emph{A$_3$B$_2$}}                          \\ \midrule 
                 & Base             & 72.40         & 76.18         & 75.63           & 76.08           & 74.53          & 76.73          & 74.10                 & 75.39        & \underline{77.19}    & 77.00     & 76.87     & \multicolumn{1}{c}{\cellcolor[HTML]{FFCCC9}\textbf{77.37}} \\
\textbf{ImageNet}         & Novel            & 68.09         & 70.47         & 70.45           & 70.59     & 70.26          & 70.49          & 68.89                 & 70.21        & 69.63          & 70.70 & \underline{70.72} & \multicolumn{1}{c}{\cellcolor[HTML]{FFCCC9}\textbf{70.83}}          \\
                 & HM               & 70.18         & 73.21         & 72.95           & 73.23           & 72.33          & 73.48          & 71.40                 & 72.08        & 73.22          & \underline{73.72}  & 73.67  & \multicolumn{1}{c}{\cellcolor[HTML]{FFCCC9}\textbf{73.96}} \\ \midrule
                 & Base             & 97.16         & 98.36   & 97.59           & 98.15           & 97.14          & 98.26          & 97.29                 & 97.61        & \underline{98.54} & 98.19   & 98.26       & \multicolumn{1}{c}{\cellcolor[HTML]{FFCCC9}\textbf{98.73}} \\
\textbf{Caltech101}       & Novel            & 94.21         & 94.18         & \underline{94.69}  & 94.65           & 94.32          & 94.14          & 94.10                 & 94.54        & 93.63          & 94.07   & 94.36       & \multicolumn{1}{c}{\cellcolor[HTML]{FFCCC9}\textbf{95.17}}    \\
                 & HM               & 95.66         & 96.22         & 96.12           & \underline{96.37}     & 95.71          & 82.54          & 95.67                 & 96.05        & 96.02          & 96.09     & 96.27     & \multicolumn{1}{c}{\cellcolor[HTML]{FFCCC9}\textbf{96.92}} \\ \midrule
                 & Base             & 91.26         & 95.27         & 95.39           & 94.84           & 88.76          & 95.21          & 94.22                 & 94.79        & 95.20          & \underline{95.41} & 95.16   & \multicolumn{1}{c}{\cellcolor[HTML]{FFCCC9}\textbf{95.52}} \\
\textbf{OxfordPets}       & Novel            & 97.15         & 97.78         & 97.32           & 97.43           & 96.66          & \textbf{97.93}    & 97.05                 & 97.59        & 97.59          & 97.26     & 97.60      & \multicolumn{1}{c}{\cellcolor[HTML]{FFCCC9}\underline{97.83}} \\
                 & HM               & 94.11         & 96.51         & 96.35           & 96.12           & 92.54          & \underline{96.55}    & 95.61                 & 96.17        & 96.38          & 96.33    & 96.36      & \multicolumn{1}{c}{\cellcolor[HTML]{FFCCC9}\textbf{96.66}} \\ \midrule
                 & Base             & 63.77         & 74.89         & 69.28           & 71.93           & 66.18          & 76.17          & 65.93                 & 70.47        & \underline{78.95}    & 75.55   & 75.60       & \multicolumn{1}{c}{\cellcolor[HTML]{FFCCC9}\textbf{81.96}} \\
\textbf{StanfordCars}     & Novel            & 74.96         & 71.70         & 73.83           & 73.35           & \textbf{75.65} & 73.33          & 75.44           & 75.40        & 72.53          & 74.99   & 74.83       & \multicolumn{1}{c}{\cellcolor[HTML]{FFCCC9}\underline{75.58}}          \\
                 & HM               & 68.91         & 73.26         & 71.48           & 72.63           & 70.60          & 74.72          & 70.37                 & 72.85        & \underline{75.60}    & 75.27    & 75.21      & \multicolumn{1}{c}{\cellcolor[HTML]{FFCCC9}\textbf{78.64}} \\ \midrule
                 & Base             & 71.73         & 97.12         & 86.29           & 96.20           & 73.82          & 97.09          & 75.18                 & 93.41        & \underline{97.79}    & \underline{97.79}  & 97.68  & \multicolumn{1}{c}{\cellcolor[HTML]{FFCCC9}\textbf{98.14}} \\
\textbf{Flowers102}       & Novel            & \underline{77.45}   & 71.94         & 74.21           & 72.88           & \textbf{78.44} & 72.03          & 75.58                 & 75.25        & 75.96          & 75.13     & 76.59      & \multicolumn{1}{c}{\cellcolor[HTML]{FFCCC9}77.39}          \\
                 & HM               & 74.48         & 82.65         & 79.80           & 82.93           & 76.06          & 82.70          & 75.38                 & 83.35        & 85.50 & 84.98    & \underline{85.86}      & \multicolumn{1}{c}{\cellcolor[HTML]{FFCCC9}\textbf{86.54}}    \\ \midrule
                 & Base             & 90.07         & 90.26         & \underline{90.66}  & 90.24           & 88.92          & 90.29          & 90.29                 & 90.57  & 88.88          & 90.52   & \textbf{90.73}        & \multicolumn{1}{c}{\cellcolor[HTML]{FFCCC9}89.21}          \\
\textbf{Food101}          & Novel            & 91.15         & 91.25         & 91.58           & 91.50           & 89.73          & 91.54          & 91.12                 & 91.53        & 90.51          & \underline{91.82}  & \textbf{92.35}  & \multicolumn{1}{c}{\cellcolor[HTML]{FFCCC9}91.75} \\
                 & HM               & 90.61         & 90.75         & 91.12     & 90.87           & 89.32          & 90.91          & 90.70                 & 91.05        & 89.69          & \underline{91.17} & \textbf{91.53} & \multicolumn{1}{c}{\cellcolor[HTML]{FFCCC9}90.46}          \\ \midrule
                 & Base             & 27.63         & 37.82         & 33.53           & 36.85           & 28.15          & 39.22          & 30.69                 & 35.79        & \underline{43.18}    & 38.70     & 39.53     & \multicolumn{1}{c}{\cellcolor[HTML]{FFCCC9}\textbf{47.59}} \\
\textbf{FGVCAircraft}     & Novel            & 35.95         & 33.31         & 32.27           & 33.29           & 34.25          & 34.35          & 36.35                 & 35.19        & 35.39          & \textbf{37.51} & 37.16 & \multicolumn{1}{c}{\cellcolor[HTML]{FFCCC9}\underline{37.48}}    \\
                 & HM               & 31.25         & 35.42         & 32.89           & 34.98           & 30.90          & 36.62          & 33.28                 & 35.49        & \underline{38.90}    & 38.10     & 38.31    & \multicolumn{1}{c}{\cellcolor[HTML]{FFCCC9}\textbf{41.93}} \\ \midrule
                 & Base             & 69.35         & 81.23         & 78.39           & 80.82           & 74.40          & 81.85    & 74.57                 & 79.96        & 81.50          & \underline{81.87} & 81.55 & \multicolumn{1}{c}{\cellcolor[HTML]{FFCCC9}\textbf{82.36}}          \\
\textbf{SUN397}           & Novel            & 75.56         & 75.43         & 77.37           & 74.97           & 77.67          & 76.93          & 77.40                 & 78.11        & 77.75          & \textbf{79.16} & \underline{79.05} & \multicolumn{1}{c}{\cellcolor[HTML]{FFCCC9}78.72}    \\
                 & HM               & 72.32         & 78.22         & 77.88           & 77.79           & 76.00          & 79.31          & 75.96                 & 79.02        & 79.58          & \underline{80.49} & 80.28 & \multicolumn{1}{c}{\cellcolor[HTML]{FFCCC9}\textbf{80.50}}    \\ \midrule
                 & Base             & 53.24         & 78.63         & 67.75           & 78.32           & 56.02          & 80.71          & 57.72                 & 78.20        & \underline{84.03}    & 83.22     & 83.18     & \multicolumn{1}{c}{\cellcolor[HTML]{FFCCC9}\textbf{84.18}} \\
\textbf{DTD}              & Novel            & 60.87         & 50.48         & 55.64           & 56.88           & 61.31          & 55.63          & 60.79                 & 53.90        & \textbf{65.05}    & 62.04   & 62.57       & \multicolumn{1}{c}{\cellcolor[HTML]{FFCCC9}\underline{64.71}}    \\
                 & HM               & 56.80         & 61.49         & 61.10           & 65.90           & 58.55          & 65.86          & 59.22                 & 63.81        & \textbf{73.33} & 71.09     & 71.42    & \multicolumn{1}{c}{\cellcolor[HTML]{FFCCC9}\underline{73.17}}    \\ \midrule
                 & Base             & 57.02         & 88.81         & 78.41           & 85.82           & 53.52          & 92.63          & 62.80                 & 84.25        & \underline{94.22}    & 90.39     & 90.64     & \multicolumn{1}{c}{\cellcolor[HTML]{FFCCC9}\textbf{95.86}} \\
\textbf{EuroSAT}          & Novel            & 64.03         & 59.38         & 66.90           & 58.89           & 65.53          & \underline{70.13}    & 62.76                 & \textbf{70.17}  & 62.94          & 68.44   & 68.73       & \multicolumn{1}{c}{\cellcolor[HTML]{FFCCC9}69.75}          \\
                 & HM               & 60.32         & 71.17         & 72.20           & 69.85           & 58.92          & \underline{79.82}          & 62.78                 & 76.57        & 75.47          & 77.90  & 78.18  & \multicolumn{1}{c}{\cellcolor[HTML]{FFCCC9}\textbf{80.75}} \\ \midrule
                 & Base             & 70.89         & 83.71         & 78.71           & 83.54           & 71.53          & 84.89          & 75.52                 & 81.76        & \underline{86.20}    & 85.52    & 85.47      & \multicolumn{1}{c}{\cellcolor[HTML]{FFCCC9}\textbf{86.49}} \\
\textbf{UCF101}           & Novel            & \underline{78.42}   & 74.17         & 74.56           & 74.36           & 73.12          & 78.06          & 77.54                 & 77.93        & 77.73          & 78.35    & \underline{79.57}      & \multicolumn{1}{c}{\cellcolor[HTML]{FFCCC9}\textbf{79.58}} \\
                 & HM               & 74.47         & 78.65         & 76.58           & 78.68           & 72.32          & 81.33          & 76.52                 & 79.80        & 81.75          & 81.78  & \underline{82.42}  & \multicolumn{1}{c}{\cellcolor[HTML]{FFCCC9}\textbf{82.89}} \\ \midrule
                 & Base             & 69.50         & 82.03         & 77.42           & 81.16           & 70.27          & 83.00          & 72.57                 & 80.20        & \underline{84.15}    & 83.11     & 83.15     & \multicolumn{1}{c}{\cellcolor[HTML]{FFCCC9}\textbf{85.22}} \\
\textbf{Average}          & Novel            & 74.35         & 71.83         & 73.53           & 72.62           & 74.27          & 74.05          & 74.27                 & 74.53        & 74.43          & 75.41  & \underline{75.78}  & \multicolumn{1}{c}{\cellcolor[HTML]{FFCCC9}\textbf{76.25}} \\
                 & HM               & 71.84         & 76.59         & 75.42           & 76.65           & 72.21          & 78.27          & 73.41                 & 77.26        & 78.99          & 79.07  & \underline{79.29}  & \multicolumn{1}{c}{\cellcolor[HTML]{FFCCC9}\textbf{80.49}} \\ \bottomrule[1.5pt]
\end{tabular}%
}
\vspace{-0.3cm}
\caption{Comparison of \emph{A$_3$B$_2$} with 11 methods in base-to-novel generalization across 11 datasets.The averages with 3 different seeds are reported. \textbf{Best results} in bold, \underline{next best} underlined.}
\label{tab:base2new}
\vspace{-0.2cm}
\end{table*}

Similarly, for the image encoder, we introduce an adapter $\mathcal{A}^{v}$ at each layer to modify Eq.~\ref{eq:image_inert}:

\begin{align}
\begin{cases}
[c_i, x_i] = \mathcal{V}_i([c_{i-1}, x_{i-1}]) + \alpha \underline{\mathcal{A}_i^{v}([c_{i-1}, x_{i-1}])}, \\ 
\quad \quad \quad \quad \quad \quad \quad \quad \quad \quad \quad \quad \quad i = 1, 2, \dots, L, \\
\mathcal{A}_i^{v}(z=[c_{i-1}, x_{i-1}]) = \sum_k^n \overset{v}{\mathfrak{w}}_i^k {W_{up}^{v}}_i^k \\
\quad \quad \quad \quad \quad \quad \quad \quad \quad \quad \left(\texttt{ReLU}({W_{down}^{v}}_i(z))\right)
\end{cases}
\end{align}
where ${W_{down}^{v}}_i\in \mathbb{R}^{d_{hv}\times r}$ and ${W_{up}^{v}}_i^k\in \mathbb{R}^{r\times d_{hv}}$ are the ``Down'' expert matrix and $k$-th ``Up'' matrix of image encoder layer $\mathcal{V}_i$, respectively, with $d_{hv}$ as the hidden dimension of the image encoder. $\overset{v}{\mathfrak{w}}_i\in \mathbb{R}^{n}$ represents the gating weights from the router:
\begin{align}
\overset{v}{\mathfrak{w}}_i = \texttt{softmax}({\overset{v}{W}}_i^g z), \label{eq:router_image}
\end{align}
where ${\overset{v}{W}}_i^g\in \mathbb{R}^{d_{hv}\times n}$.

Rather than rigidly locking the image encoder, we fine-tune the image adapters while constraining their optimization via a novel uncertainty-aware mechanism, described below.
\subsubsection{\textit{Optimization Objectives}}
To effectively alleviate \emph{Branch Bias} and ensure structural diversity within the adapters~\cite{mu2025comprehensive}, we introduce two auxiliary objectives: \textit{Uncertainty-Aware Adapter Dampening} (\emph{UAAD}) and \textit{Load Balancing Regularization}.

\paragraph{Uncertainty-Aware Adapter Dampening}
\texttt{Insight III} suggests that aggressive modification of image features on out-of-distribution (OOD) data degrades performance. To address this without manual locking, we propose \emph{UAAD}. This mechanism leverages the model's prediction confidence to gauge whether a sample lies within the distribution. \textbf{Low confidence implies potential OOD data, triggering a penalty on the image adapter's activation magnitude}~\cite{hendrycks2016baseline}.

Let $\mathbf{p}(\hat{y}|x)$ be the softmax probability distribution for the image feature $x$. We define the confidence score as $\kappa = \max \mathbf{p}(\hat{y}|x)$. The bias alleviation loss $\mathcal{L}_{bias}$ is formulated as:
\begin{align}
\mathcal{L}_{bias} = \frac{1}{B} \sum_{j=1}^B (1 - \kappa_j) \cdot \sum_{i=1}^L \| \Delta \mathbf{v}_{i,j} \|_2,
\end{align}
where $B$ is the batch size, and $\Delta \mathbf{v}_{i,j} = \mathcal{A}_i^v(z_j)$ represents the feature modification vector produced by the image adapter at layer $i$. When the model is uncertain (low $\kappa$), $\mathcal{L}_{bias}$ forces the adapter output $\Delta \mathbf{v}$ towards zero, effectively reverting the model to the robust pre-trained CLIP features.

\paragraph{Load Balancing Regularization}

Standard MoE routing often suffer from expert collapse. To prevent this, we enforce a uniform distribution of expert usage across the batch. Unlike methods relying on selection frequency~\cite{fedus2022switch}, we minimize the variance of the routing probabilities directly. Let $\bar{\mathfrak{w}}_{l,j,k}$ (derived from Eq.~\ref{eq:router_text} and Eq.~\ref{eq:router_image}) denote the gating probability of the $k$-th expert for the $j$-th sample in the $l$-th layer, averaged over the batch. We adopt a coefficient of variation~\cite{shazeer2017outrageously} based loss:
\begin{align}
\mathcal{L}_{bal} = n^2 \cdot \frac{1}{L \cdot n} \sum_{l=1}^L \sum_{k=1}^n \left( \left( \frac{1}{B} \sum_{j=1}^B \bar{\mathfrak{w}}_{l,j,k} \right) - \frac{1}{n} \right)^2,
\end{align}
where $1/n$ represents the target uniform probability. Minimizing this objective penalizes deviations from a balanced distribution, ensuring diverse expert utilization.

\paragraph{Total Objective}
The model is primarily optimized using the cross-entropy loss, derived from the classification probabilities defined in Sec.~\ref{sec:preliminary}. For a batch of size $B$, given the ground-truth label $y_j$ for the image future $x^j$, this loss is defined as:
\begin{align}
\mathcal{L}_{CE} = - \frac{1}{B} \sum_{j=1}^B \log \textbf{p}(\hat{y} = y_j \mid x^j).
\end{align}

The final training objective is a weighted sum of the cross-entropy loss and the auxiliary regularizations:
\begin{align}
\mathcal{L}_{total} = \mathcal{L}_{CE} + \lambda_{bias}\mathcal{L}_{bias} + \lambda_{bal}\mathcal{L}_{bal},
\end{align}
where $\lambda_{bias}$ governs the suppression strength for uncertain samples, and $\lambda_{bal}$ enforces the diversity of expert utilization. This formulation allows \emph{A$_3$B$_2$} to adaptively balance between general linguistic knowledge and task-specific visual adaptation, dynamically alleviating branch bias during training.

\begin{table*}[!t]
\centering
\resizebox{0.9\textwidth}{!}{%
\begin{tabular}{@{}c|c|cccccccccc|c@{}}
\toprule[1.5pt]
\midrule
\rotatebox{35}{\textbf{Method}} & \rotatebox{35}{\textbf{ImageNet}} & \rotatebox{35}{\textbf{Caltech101}}           & \rotatebox{35}{\textbf{OxfordPets}}                 & \rotatebox{35}{\textbf{StanfordCars}}                  & \rotatebox{35}{\textbf{Flowers102}}           & \rotatebox{35}{\textbf{Food101}}              & \rotatebox{35}{\textbf{FGVCAircraft}}               & \rotatebox{35}{\textbf{SUN397}}                     & \rotatebox{35}{\textbf{DTD}}                           & \rotatebox{35}{\textbf{EuroSAT}}              & \rotatebox{35}{\textbf{UCF101}}               & \rotatebox{35}{\textbf{Average}}                    \\ \midrule 
\textbf{CoOp}   & 71.30             & 94.20                         & 89.91                               & 64.20                                  & 70.66                         & 85.82                         & 22.34                               & 66.36                               & 43.36                                  & \underline{48.50}                & 68.90                   & 65.43                               \\
\textbf{KgCoOp} & 71.12             & \underline{94.24}                & 90.29                               & \textbf{65.39}                                  & 71.23                         & 86.24                   & 21.99                               & 67.03                               & 44.39                                  & 43.89                         & 68.67                         & 65.34                               \\
\textbf{MaPLe}  & 71.91             & 93.39                         & 90.41                               & 64.37                                  & 70.28                         & 85.63                         & 22.88                               & 66.04                               & 44.60                                  & 39.87                         & 67.91                         & 64.54                               \\
\textbf{TCP}    & 70.36             & 94.21                   & 90.16                               & 65.23                                  & 71.46                   & \underline{86.73}                & 23.74                               & 67.13                               & \underline{45.06}                            & 43.00                         & \underline{68.91}                & 65.56                               \\
\textbf{MMA}    & \underline{72.45}       & 92.39                         & 89.59                               & 61.69                                  & 68.52                         & 81.94                         & 24.19                               & 66.32                               & 44.29                                  & 39.70                         & 67.86                         & 63.65                               \\
\textbf{MMRL}   & 72.10             & 93.90                         & 91.31                      & 65.07                                  & \underline{72.22}                & 85.68                         & \textbf{25.44}                      & \underline{67.24}                      & 44.78                                  & \textbf{48.66}                   & 68.07                         & 66.24                      \\
\textbf{MMRL++}   & 71.96             & 93.97                         & \textbf{91.43}                      & \underline{65.38}                                  & \textbf{72.57}                & 85.72                         & \underline{25.28}                      & \textbf{67.65}                      & 44.83                                  & 48.49                   & 68.52                         & \underline{66.38}                      \\ \midrule
\textbf{\emph{A$_3$B$_2$}}   & \cellcolor[HTML]{FFCCC9}\textbf{72.57}    & \cellcolor[HTML]{FFCCC9}\textbf{94.52} & \cellcolor[HTML]{FFCCC9}\underline{91.36} & \cellcolor[HTML]{FFCCC9}65.32 & \cellcolor[HTML]{FFCCC9}72.15 & \cellcolor[HTML]{FFCCC9}\textbf{86.75} & \cellcolor[HTML]{FFCCC9}24.53 & \cellcolor[HTML]{FFCCC9}67.15 & \cellcolor[HTML]{FFCCC9}\textbf{45.30} & \cellcolor[HTML]{FFCCC9}47.42 & \cellcolor[HTML]{FFCCC9}\textbf{69.43} & \multicolumn{1}{c}{\cellcolor[HTML]{FFCCC9}\textbf{66.39}} \\ \bottomrule[1.5pt]
\end{tabular}%
}
\vspace{-0.3cm}
\caption{Comparison of \emph{A$_3$B$_2$} and 7 leading methods in cross-dataset evaluation, with results of other methods provided in the Appendix~\ref{sec:additional}.}
\label{tab:cross}
\vspace{-0.2cm}
\end{table*}

\section{Experiments}
\label{sec:experiments}

We comprehensively evaluate \emph{A$_3$B$_2$}'s performance around three core image classification tasks in few-shot setting and validate the effectiveness of the discovered insights.

\begin{table}[ht]
\centering
\resizebox{0.85\columnwidth}{!}{%
\begin{tabular}{@{}c|c|cccc|c@{}}
\toprule[1.5pt]
\midrule
\textbf{Method} & \textbf{ImageNet}                   & \textbf{-V2}                        & \textbf{-S}                   & \textbf{-A}                   & \textbf{-R}                            & \textbf{Average}                    \\ \midrule 
\textbf{CoOp}   & \multicolumn{1}{c|}{71.30}          & 64.52                               & \underline{49.02}                & 51.29                   & 76.69                                  & \underline{60.38}                      \\
\textbf{KgCoOp} & \multicolumn{1}{c|}{71.12}          & 64.33                               & 48.97                   & \underline{51.41}                & 76.60                                  & 60.33                               \\
\textbf{MaPLe}  & \multicolumn{1}{c|}{71.91}          & 64.69                               & 48.91                         & 49.26                         & 76.47                                  & 59.83                               \\
\textbf{TCP}    & \multicolumn{1}{c|}{70.36}          & 63.60                               & 48.74                         & 50.50                         & \underline{76.70}                            & 59.89                               \\
\textbf{MMA}    & \multicolumn{1}{c|}{\underline{72.45}}    & \underline{65.23}                      & 48.09                         & 48.07                         & 75.43                                  & 59.21                               \\
\textbf{MMRL}   & \multicolumn{1}{c|}{72.10}          & 64.35                               & 48.70                         & 48.82                         & 76.40                                  & 59.57                               \\
\textbf{MMRL++}   & \multicolumn{1}{c|}{71.97}          & 64.61                               & 48.95                         & 48.69                         & 76.52                                  & 59.69                               \\ \midrule
\textbf{\emph{A$_3$B$_2$}}   & \multicolumn{1}{c|}{\cellcolor[HTML]{FFCCC9}\textbf{72.52}} & \cellcolor[HTML]{FFCCC9}\textbf{65.47} & \cellcolor[HTML]{FFCCC9}\textbf{49.13} & \cellcolor[HTML]{FFCCC9}\textbf{51.75} & \cellcolor[HTML]{FFCCC9}\textbf{76.91} & \multicolumn{1}{c}{\cellcolor[HTML]{FFCCC9}\textbf{60.81}} \\ \bottomrule[1.5pt]
\end{tabular}%
}
\vspace{-0.3cm}
\caption{Comparison of \emph{A$_3$B$_2$} and 7 leading methods in domain generalization, with results of other methods provided in the Appendix~\ref{sec:additional}.}
\label{tab:domain}
\vspace{-0.2cm}
\end{table}

\begin{table*}[ht]
\centering
\small
\setlength{\tabcolsep}{2.5pt}

\begin{tabular}{c|ccc||c|ccc||c|ccc||c|ccc||c|ccc}
\toprule[1.5pt]
\midrule
$n$ & Base & Novel & HM 
& $r$ & Base & Novel & HM
& $\alpha$ & Base & Novel & HM
& $\lambda_{bias}$ & Base & Novel & HM
& $\lambda_{bal}$ & Base & Novel & HM \\
\midrule
1   & 82.75 & 73.95 & 78.10
& 8   & 82.93 & 74.69 & 78.59
& 0.0001 & 85.13 & 75.62 & 80.09
& 0.1 & 84.82 & 75.69 & 80.00
& 0.01 & 85.15 & 75.93 & 80.28 \\

2   & 84.89 & 75.83 & 80.10
& 16  & 84.24 & 75.26 & 79.50
& 0.0005 & 85.07 & 75.98 & 80.27
& 0.2 & 84.93 & 75.91 & 80.17
& 0.05 & \textbf{85.26} & 76.13 & 80.44 \\

\rowcolor[HTML]{EFEFEF}
3   & \textbf{85.22} & \textbf{76.25} & \textbf{80.49}
& 32  & 85.22 & \textbf{76.25} & \textbf{80.49}
& 0.001 & 85.22 & \textbf{76.25} & \textbf{80.49}
& 0.3 & \textbf{85.22} & \textbf{76.25} & \textbf{80.49}
& 0.1 & 85.22 & \textbf{76.25} & \textbf{80.49} \\

4   & 84.92 & 75.52 & 79.94
& 64  & \textbf{85.37} & 75.63 & 80.21
& 0.005 & 85.25 & 76.15 & 80.44
& 0.4 & 85.08 & 75.62 & 80.07
& 0.15 & 85.23 & 76.17 & 80.45 \\

5   & 83.41 & 74.73 & 78.83
& 128 & 84.57 & 73.82 & 78.83
& 0.01 & \textbf{85.38} & 76.09 & 80.47
& 0.5 & 84.67 & 75.59 & 79.87
& 0.2 & 85.03 & 75.82 & 80.16 \\
\bottomrule[1.5pt]
\end{tabular}
\vspace{-0.3cm}
\caption{Analysis experiments on the \emph{A$_3$B$_2$} hyperparameters.}
\label{tab:parameter}
\vspace{-0.2cm}
\end{table*}

\subsection{Experimental Setup}

\paragraph{Base-to-Novel Generalization} Consistent with previous studies~\cite{brown2020language,guo2025mmrl}, this evaluation divides the dataset into base and novel categories. The model is trained on base categories under a few-shot setting and tested separately on both base and novel categories. This evaluation includes 11 visual classification benchmark datasets: ImageNet~\cite{deng2009imagenet}, Caltech101~\cite{fei2004learning}, OxfordPets~\cite{parkhi2012cats}, StanfordCars~\cite{krause20133d}, Flowers102~\cite{nilsback2008automated}, Food101~\cite{bossard2014food}, FGVCAircraft~\cite{maji2013fine}, SUN397~\cite{xiao2010sun}, DTD~\cite{cimpoi2014describing}, EuroSAT~\cite{helber2019eurosat}, and UCF101~\cite{soomro2012ucf101}.

\paragraph{Cross-Dataset Evaluation} This evaluation measures a model's discriminative ability on unseen datasets and its grasp of general knowledge. Following CoCoOp~\cite{zhou2022conditional}, the model is trained on ImageNet with 1k classes and 16 shots per class, then directly evaluated on 10 other datasets mentioned in the base-to-novel generalization task.

\paragraph{Domain Generalization} This task evaluates the model's robustness to domain shifts and its discriminative ability to out-of-distribution data. Following the setup of CoCoOp~\cite{zhou2022conditional}, we fine-tune the model on ImageNet and test it on four of its variants: ImageNetV2~\cite{recht2019imagenet}, ImageNet-Sketch~\cite{wang2019learning}, ImageNet-A~\cite{hendrycks2021natural}, and ImageNet-R~\cite{hendrycks2021many}.

\paragraph{Baselines}
We compare the proposed \emph{A$_3$B$_2$} with 11 competitive methods, including: CLIP (zero-shot)~\cite{radford2021learning}, CoOp~\cite{radford2021learning}, CoCoOp~\cite{zhou2022conditional}, KgCoOp~\cite{yao2023visual}, RPO~\cite{lee2023read}, MaPLe~\cite{khattak2023maple}, CLIP-Adapter~\cite{gao2024clip}, TCP~\cite{yao2024tcp}, MMA~\cite{yang2024mma}, MMRL~\cite{guo2025mmrl}, and
MMRL++~\cite{guo2025mmrl++}.

\paragraph{Implementation Details}

Consistent with previous studies~\cite{yang2024mma,guo2025mmrl++}, all experiments are conducted using a pre-trained transformer-based ViT-B/16 backbone of the CLIP model under a 16-shot setting (\textit{i.e.}, 16 training samples per class). To ensure consistency with prior work~\cite{radford2021learning,zhou2022conditional,yang2024mma} as much as possible, we set the batch size to 8, with 5 epochs for ImageNet and 10 epochs for other datasets. The same image augmentation methods are used, and the SGD optimizer is employed with an initial learning rate of 0.0015, a momentum of 0.9, and a weight decay of 0.0005, combined with a cosine annealing scheduler with 1 epoch of warm-up. The text templates for the proposed \emph{A$_3$B$_2$} follow~\cite{gao2024clip}, with the default number of experts $n=3$, low-rank $r=32$, coefficient $\alpha=0.001$, suppression strength $\lambda_{bias}=0.3$ and balance parameter $\lambda_{bal}=0.1$. The optimization strategies for other baselines follow the settings of the original papers. \textbf{For fairness, all baselines are re-implemented under the same experimental setup} and trained with mixed-precision acceleration on a single NVIDIA A800 GPU. Final performance is reported as the average over three random seeds (seed=1,2,3). Our code and datasets are available at \url{https://github.com/zyy-2001/A3B2}.

\subsection{Base-to-Novel Generalization}

In this task, we report the model's accuracy on both base and novel classes, along with their harmonic mean (HM), as shown in Table~\ref{tab:base2new}. Overall, our method consistently outperforms all baselines across all metrics and datasets. The key findings are as follows:

\begin{itemize}[leftmargin=*]
    \item{Among all baselines, the recently proposed MMA and MMRL++ achieve the best average performance. MMRL++ leverages its decoupled inference strategy and progressive inter-layer composition to achieve leading novel-class accuracy with high parameter efficiency. MMA, on the other hand, integrates features from different branches into a shared space, enhancing gradient flow across branches, thus improving consistency between modalities and achieving optimal base-class recognition.}
    
    \item{The proposed \emph{A$_3$B$_2$} consistently leads all metrics across 11 datasets. Compared to MMA, which performs best on base-class recognition, and MMRL++, which excels on the novel and HM metrics, \textbf{\emph{A$_3$B$_2$} shows a significant advantage in average performance}. These impressive results validate the effectiveness of our insights.}
    
    \item{In this in-distribution task, methods with dual-branch optimization (\textit{e.g.}, MaPLe, MMA, and MMRL++) generally outperform single-branch optimization methods (\textit{e.g.}, CoOp, CoCoOp, KgCoOp, and TCP), \textbf{further confirming the universality and broad applicability of our \texttt{Insight II}}.}
\end{itemize}

\subsection{Cross-Dataset Evaluation}
We have compared the top 7 methods in the base-to-novel generalization task with the proposed \emph{A$_3$B$_2$} in the cross-dataset evaluation task, as shown in Table~\ref{tab:cross}. We observe that MMA performs poorly in our experimental setup, which we attribute to MMA simultaneously adapting the image encoder in out-of-distribution tasks, contrary to our \texttt{Insight III}. And this experiment validates the broad applicability of the \texttt{Insight III}: Text-only encoder optimization methods (\textit{e.g.}, CoOp, CoCoOp, KgCoOp) generally outperform methods with image encoder optimization (\textit{e.g.}, RPO, MaPLe, MMA). Notably, the MMRL and MMRL++ with image encoder optimization even outperform the proposed \emph{A$_3$B$_2$} in some cases. We believe these unexpected results stem from their use of a loss-function–driven model exploration strategy during optimization, differing from other methods, whereas we adopt a more general and adaptive structural design based on exploratory observations.

\subsection{Domain Generalization}
As shown in Table~\ref{tab:domain}, the text-only encoder optimization
methods significantly outperform the methods with image encoder optimization, \textbf{further validating the effectiveness of the \texttt{Insight III}}. Meanwhile, we also observe that the MMRL and MMRL++ methods perform poorly on this task, while our \emph{A$_3$B$_2$} method consistently achieves leading performance. This suggests that the \emph{A$_3$B$_2$}, which focuses more on model structure optimization, is more stable, providing strong support for its future application and the insights derived for related research.

\subsection{In-depth Analysis}

We conduct analytical experiments on 11 datasets for the base-to-novel task; detailed results are in the Appendix~\ref{sec:additional}.
\paragraph{Few-Shot Learning}

\begin{figure}[t]
\centering
    \vspace{-0.3cm}
    \includegraphics[width=0.95\linewidth]{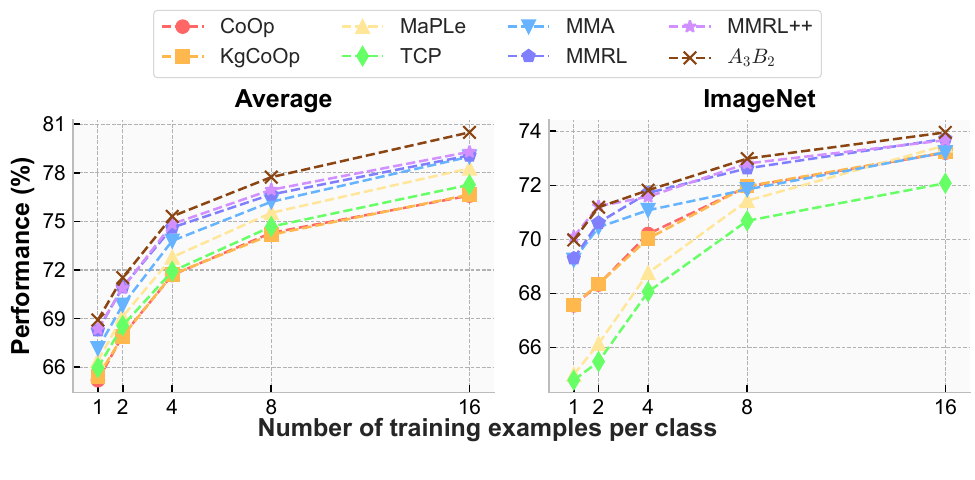}
    \vspace{-0.6cm}
    \textbf{\caption{Comparison (HM) of \emph{A$_3$B$_2$} and 7 leading methods on few-shot learning, with results on all datasets provided in the Appendix~\ref{sec:additional}.}
    \label{fig:shot}}
    \vspace{-0.4cm}
\end{figure}
As shown in Fig.~\ref{fig:shot}, \emph{A$_3$B$_2$} consistently achieves the best performance across 11 datasets under all shot settings, with its advantage increasing as the number of shots grows. This demonstrates its strong transfer capability and robustness, even in data-scarce few-shot scenarios.

\paragraph{Sensitivity Analysis}
We analyze the sensitivity of the \emph{A$_3$B$_2$} hyperparameters, as shown in Table \ref{tab:parameter}.

\textbf{Number of Experts $n$.} Too few experts limit model expressiveness, while too many may lead to overfitting. The optimal balance is achieved when $n=3$.

\textbf{Low-rank $r$.} The low-rank parameter faces a similar trade-off to the number of experts. Performance is best when $r=32$.

\textbf{Balance Coefficient $\alpha$.} The coefficient $\alpha$ controls the relationship between pre-trained and learnable weights, which are more effective for base and novel settings, respectively. A good balance is achieved when $\alpha=0.001$.

\textbf{Suppression Strength $\lambda_{bias}$.} This term controls the strength of uncertainty-aware dampening on image adapters. Moderate suppression ($\lambda_{bias}=0.3$) best balances robustness and adaptability, while overly large values over-constrain visual adaptation.

\textbf{Balance Parameter $\lambda_{bal}$.} This parameter regulates expert utilization diversity. A moderate value ($\lambda_{bal}=0.1$) effectively prevents expert collapse and yields the most stable performance across base and novel classes.

\paragraph{Ablation Study} As shown in Fig.~\ref{fig:ablation}, removing any component degrades performance, confirming their effectiveness. Notably, removing $\mathcal{L}_{bias}+\mathcal{L}_{bal}$ leads to the largest drop, highlighting their synergy. Applying $\mathcal{L}_{bias}$ to the text encoder (``w. bi-encoder $\mathcal{L}_{bias}$'') also causes a significant decline, validating the applicability of \texttt{Insight II \& III}.

We provide a computational cost analysis of \emph{A$_3$B$_2$} and additional experiments on ResNet-based CLIP in Appendix~\ref{sec:cost} and~\ref{sec:cnn}, respectively.

\begin{figure}[t]
\centering
    \vspace{-0.2cm}\includegraphics[width=\linewidth]{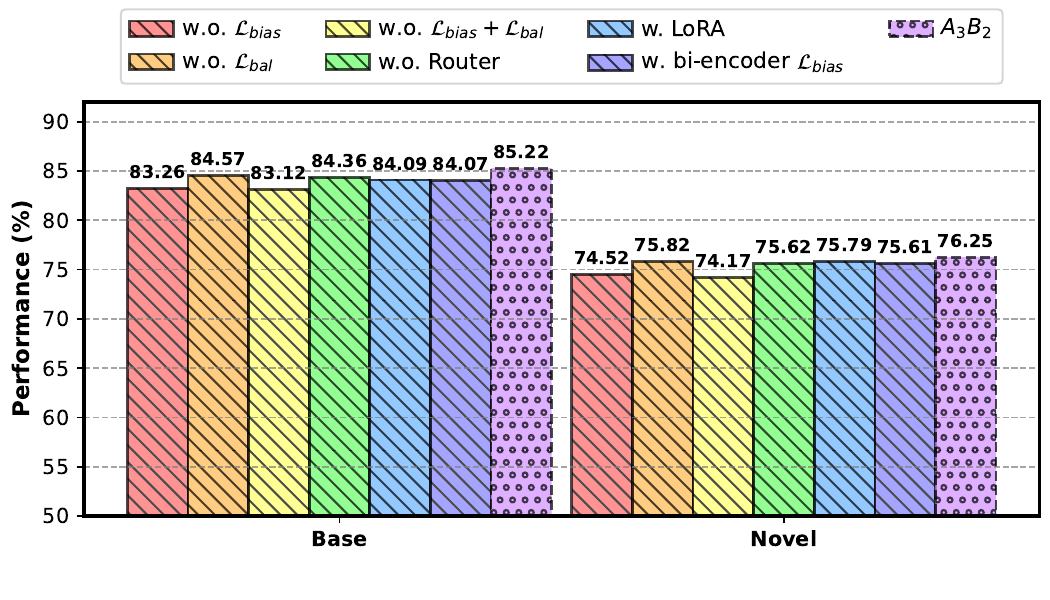}
    \vspace{-1.1cm}
    \textbf{\caption{The performance of different variants in \emph{A$_3$B$_2$}.}
    \label{fig:ablation}}
    \vspace{-0.4cm}
\end{figure}

\section{Conclusion}
\label{sec:conclusion}
This paper identifies the Branch Bias issue in few-shot vision-language image classification, showing that fine-tuning the image encoder can be detrimental to robustness under out-of-distribution settings. Towards this end, we propose \emph{A$_3$B$_2$}, an adaptive asymmetric adapter that modulates the image branch based on prediction uncertainty. Extensive experiments show that \emph{A$_3$B$_2$} consistently improves performance across various few-shot settings, offering an effective solution for branch-aware adaptation in vision-language models.

\section*{Acknowledgments}
This research was supported by grants from the ``Pioneer'' and ``Leading Goose'' R\&D Program of Zhejiang under Grant No. 2025C02022 and the National Natural Science Foundation of China (No.62307032).

\bibliographystyle{named}
\bibliography{ijcai26}

\appendix

We provide supplementary information here, covering all the details mentioned in the main text, including:

\begin{tcolorbox}[title=\textbf{Content}]
\begin{itemize}[leftmargin=*]
\item \textbf{\ref{sec:fixed}~Fixed Asymmetric Structure}
\begin{itemize}
    \item \ref{sec:empirical}~Empirical Evidence
    \item \ref{sec:theoretical}~Theoretical Support
\end{itemize}
\item {\textbf{\ref{sec:guarantee}~Theoretical Guarantees of the Proposed Method}}
\item {\textbf{\ref{sec:motivation}~Detailed Motivation Experiment}}
\item \textbf{\ref{sec:additional}~Additional Experimental Results} 
\item \textbf{\ref{sec:cost}~Computational Overhead} 
\item \textbf{\ref{sec:cnn}~CNN-based CLIP Model Experiment}

\end{itemize}
\end{tcolorbox}

\begin{table*}[t]
\resizebox{\textwidth}{!}{%
\begin{tabular}{@{}c|c|cccccccccccc@{}}
\toprule[1.5pt]
\midrule
\textbf{Setup}                     & \textbf{Method} & \textbf{ImageNet}             & \textbf{Caltech101}           & \textbf{OxfordPets}           & \textbf{StanfordCars}         & \textbf{Flowers102}           & \textbf{Food101}              & \textbf{FGVCAircraft}         & \textbf{SUN397}               & \textbf{DTD}                  & \textbf{EuroSAT}              & \textbf{UCF101}               & \textbf{Average}              \\ \midrule
                                   & \textbf{CoOp}   & 70.29                         & 94.28                         & 91.72                         & 59.81                         & 79.55                         & 87.38                         & 18.63                         & 70.62                         & 57.55                         & 58.43                         & 72.49                         & 69.16                         \\
                                   & \textbf{KgCoOp} & 70.20                         & 94.08                         & 91.32                         & 58.41                         & 79.83                         & 87.58                         & 18.16                         & 70.26                         & 57.31                         & 58.48                         & 72.36                         & 68.91                         \\
                                   & \textbf{MaPLe}  & 69.51                         & 94.72                         & 91.40                         & 60.72                         & 83.40                         & 85.18                         & 21.66                         & 70.17                         & 58.97                         & 58.05                         & 71.70                         & 69.59                         \\
                                   & \textbf{TCP}    & 69.39                         & 94.07                         & 90.96                         & 57.17                         & 80.24                         & 85.48                         & 19.73                         & 68.56                         & 57.13                         & 57.52                         & 70.05                         & 68.21                         \\
                                   & \textbf{MMA}    & 73.01                         & 95.03                         & 93.15                         & 62.53                         & 83.45                         & 87.27                         & 23.56                         & 71.90                         & 59.77                         & 59.18                         & 74.09                         & 71.18                         \\
                                   & \textbf{MMRL}   & 72.41                         & 95.22                         & 92.40                         & 60.08                         & 85.43                         & 88.25                         & 19.19                         & 72.57                         & 59.31                         & 59.60                         & 74.19                         & 70.79                         \\
                                   & \textbf{MMRL++} & 73.72                         & 95.55                         & 92.81                         & 59.84                         & 83.71                         & 87.71                         & 19.09                         & 72.38                         & 59.76                         & 59.63                         & 73.79                         & 70.73                         \\ \cmidrule(l){2-14} 
\multirow{-8}{*}{\textbf{1-shot}}  & \textbf{\emph{A$_3$B$_2$}}   & \cellcolor[HTML]{FFCCC9}73.24 & \cellcolor[HTML]{FFCCC9}95.62 & \cellcolor[HTML]{FFCCC9}93.41 & \cellcolor[HTML]{FFCCC9}62.91 & \cellcolor[HTML]{FFCCC9}83.71 & \cellcolor[HTML]{FFCCC9}87.59 & \cellcolor[HTML]{FFCCC9}23.95 & \cellcolor[HTML]{FFCCC9}72.63 & \cellcolor[HTML]{FFCCC9}59.93 & \cellcolor[HTML]{FFCCC9}59.13 & \cellcolor[HTML]{FFCCC9}74.27 & \multicolumn{1}{c}{\cellcolor[HTML]{FFCCC9}71.49} \\ \midrule
                                   & \textbf{CoOp}   & 71.11                         & 95.77                         & 93.12                         & 63.58                         & 87.35                         & 86.45                         & 22.85                         & 72.37                         & 60.34                         & 60.13                         & 74.76                         & 71.62                         \\
                                   & \textbf{KgCoOp} & 71.03                         & 95.58                         & 92.69                         & 61.04                         & 86.50                         & 86.55                         & 22.25                         & 72.00                         & 60.13                         & 60.82                         & 74.62                         & 71.20                         \\
                                   & \textbf{MaPLe}  & 69.04                         & 96.17                         & 93.19                         & 65.28                         & 89.00                         & 86.18                         & 25.02                         & 72.72                         & 62.79                         & 60.58                         & 74.45                         & 72.22                         \\
                                   & \textbf{TCP}    & 67.84                         & 95.55                         & 92.81                         & 60.44                         & 85.60                         & 86.48                         & 22.80                         & 71.04                         & 60.89                         & 60.51                         & 71.67                         & 70.51                         \\
                                   & \textbf{MMA}    & 74.28                         & 96.17                         & 93.90                         & 66.12                         & 90.12                         & 86.69                         & 26.14                         & 73.33                         & 65.76                         & 61.00                         & 76.09                         & 73.60                         \\
                                   & \textbf{MMRL}   & 73.76                         & 95.89                         & 93.11                         & 63.77                         & 90.66                         & 86.96                         & 22.99                         & 75.39                         & 67.85                         & 61.20                         & 76.61                         & 73.47                         \\
                                   & \textbf{MMRL++} & 74.74                         & 96.42                         & 93.01                         & 63.75                         & 89.19                         & 86.63                         & 21.99                         & 74.69                         & 67.99                         & 61.70                         & 77.08                         & 73.38                         \\ \cmidrule(l){2-14} 
\multirow{-8}{*}{\textbf{2-shot}}  & \textbf{\emph{A$_3$B$_2$}}   & \cellcolor[HTML]{FFCCC9}74.52 & \cellcolor[HTML]{FFCCC9}96.29 & \cellcolor[HTML]{FFCCC9}94.24 & \cellcolor[HTML]{FFCCC9}67.66 & \cellcolor[HTML]{FFCCC9}90.46 & \cellcolor[HTML]{FFCCC9}86.97 & \cellcolor[HTML]{FFCCC9}28.78 & \cellcolor[HTML]{FFCCC9}74.12 & \cellcolor[HTML]{FFCCC9}66.19 & \cellcolor[HTML]{FFCCC9}62.07 & \cellcolor[HTML]{FFCCC9}76.73 & \multicolumn{1}{c}{\cellcolor[HTML]{FFCCC9}74.37} \\ \midrule
                                   & \textbf{CoOp}   & 72.84                         & 96.90                         & 95.97                         & 67.16                         & 92.19                         & 87.55                         & 26.89                         & 76.13                         & 66.08                         & 77.04                         & 78.50                         & 76.11                         \\
                                   & \textbf{KgCoOp} & 72.74                         & 96.84                         & 95.52                         & 64.52                         & 91.31                         & 87.53                         & 26.18                         & 75.75                         & 65.84                         & 74.54                         & 78.36                         & 75.38                         \\
                                   & \textbf{MaPLe}  & 71.80                         & 96.64                         & 94.24                         & 68.65                         & 92.75                         & 86.55                         & 28.26                         & 76.61                         & 69.04                         & 84.75                         & 78.34                         & 77.06                         \\
                                   & \textbf{TCP}    & 70.56                         & 96.02                         & 93.87                         & 63.50                         & 89.20                         & 86.80                         & 25.79                         & 74.89                         & 66.87                         & 78.05                         & 75.46                         & 74.64                         \\
                                   & \textbf{MMA}    & 74.96                         & 96.78                         & 94.15                         & 70.45                         & 92.81                         & 87.30                         & 30.81                         & 76.41                         & 74.13                         & 84.97                         & 80.00                         & 78.43                         \\
                                   & \textbf{MMRL}   & 74.92                         & 97.07                         & 94.14                         & 68.35                         & 94.04                         & 87.23                         & 27.18                         & 77.87                         & 75.00                         & 84.84                         & 80.72                         & 78.31                         \\
                                   & \textbf{MMRL++} & 75.26                         & 97.32                         & 94.28                         & 68.26                         & 93.13                         & 87.43                         & 27.77                         & 77.26                         & 74.95                         & 85.47                         & 80.65                         & 78.34                         \\ \cmidrule(l){2-14} 
\multirow{-8}{*}{\textbf{4-shot}}  & \textbf{\emph{A$_3$B$_2$}}   & \cellcolor[HTML]{FFCCC9}75.20 & \cellcolor[HTML]{FFCCC9}96.85 & \cellcolor[HTML]{FFCCC9}94.47 & \cellcolor[HTML]{FFCCC9}72.19 & \cellcolor[HTML]{FFCCC9}93.21 & \cellcolor[HTML]{FFCCC9}87.66 & \cellcolor[HTML]{FFCCC9}33.96 & \cellcolor[HTML]{FFCCC9}77.82 & \cellcolor[HTML]{FFCCC9}74.19 & \cellcolor[HTML]{FFCCC9}85.42 & \cellcolor[HTML]{FFCCC9}80.34 & \multicolumn{1}{c}{\cellcolor[HTML]{FFCCC9}79.21} \\ \midrule
                                   & \textbf{CoOp}   & 74.85                         & 97.13                         & 94.65                         & 71.52                         & 94.99                         & 88.31                         & 33.99                         & 77.84                         & 72.89                         & 81.65                         & 81.62                         & 79.04                         \\
                                   & \textbf{KgCoOp} & 74.77                         & 96.93                         & 94.19                         & 68.68                         & 94.09                         & 88.27                         & 33.15                         & 77.42                         & 72.61                         & 79.93                         & 81.48                         & 78.32                         \\
                                   & \textbf{MaPLe}  & 74.59                         & 97.43                         & 94.96                         & 72.41                         & 95.88                         & 88.43                         & 34.03                         & 79.37                         & 75.27                         & 87.03                         & 81.22                         & 80.06                         \\
                                   & \textbf{TCP}    & 73.27                         & 96.78                         & 94.58                         & 66.96                         & 92.25                         & 87.75                         & 32.03                         & 77.50                         & 72.92                         & 80.11                         & 78.21                         & 77.49                         \\
                                   & \textbf{MMA}    & 75.75                         & 97.57                         & 94.74                         & 74.98                         & 95.80                         & 88.21                         & 36.71                         & 79.95                         & 77.75                         & 88.20                         & 83.35                         & 81.18                         \\
                                   & \textbf{MMRL}   & 75.86                         & 97.35                         & 94.63                         & 72.17                         & 96.03                         & 88.82                         & 33.29                         & 80.09                         & 79.15                         & 85.87                         & 82.63                         & 80.54                         \\
                                   & \textbf{MMRL++} & 75.97                         & 97.65                         & 94.52                         & 72.28                         & 95.84                         & 88.21                         & 34.46                         & 79.49                         & 79.20                         & 86.46                         & 82.64                         & 80.61                         \\ \cmidrule(l){2-14} 
\multirow{-8}{*}{\textbf{8-shot}}  & \textbf{\emph{A$_3$B$_2$}}   & \cellcolor[HTML]{FFCCC9}76.29 & \cellcolor[HTML]{FFCCC9}97.75 & \cellcolor[HTML]{FFCCC9}95.05 & \cellcolor[HTML]{FFCCC9}76.82 & \cellcolor[HTML]{FFCCC9}96.11 & \cellcolor[HTML]{FFCCC9}88.53 & \cellcolor[HTML]{FFCCC9}38.51 & \cellcolor[HTML]{FFCCC9}80.80 & \cellcolor[HTML]{FFCCC9}78.84 & \cellcolor[HTML]{FFCCC9}89.67 & \cellcolor[HTML]{FFCCC9}83.63 & \multicolumn{1}{c}{\cellcolor[HTML]{FFCCC9}82.00} \\ \midrule
                                   & \textbf{CoOp}   & 76.18                         & 98.36                         & 95.27                         & 74.89                         & 97.12                         & 90.26                         & 37.82                         & 81.23                         & 78.63                         & 88.81                         & 83.71                         & 82.03                         \\
                                   & \textbf{KgCoOp} & 76.08                         & 76.08                         & 94.84                         & 71.93                         & 96.20                         & 90.24                         & 36.85                         & 80.82                         & 78.32                         & 85.82                         & 83.54                         & 81.16                         \\
                                   & \textbf{MaPLe}  & 76.73                         & 98.26                         & 95.21                         & 76.17                         & 97.09                         & 90.29                         & 39.22                         & 81.85                         & 80.71                         & 92.63                         & 84.89                         & 83.00                         \\
                                   & \textbf{TCP}    & 75.39                         & 97.61                         & 94.79                         & 70.47                         & 93.41                         & 90.57                         & 35.79                         & 79.96                         & 78.20                         & 84.25                         & 81.76                         & 80.20                         \\
                                   & \textbf{MMA}    & 77.19                         & 98.54                         & 95.20                         & 78.95                         & 97.79                         & 88.88                         & 43.18                         & 81.50                         & 84.03                         & 94.22                         & 86.20                         & 84.15                         \\
                                   & \textbf{MMRL}   & 77.00                         & 98.19                         & 95.41                         & 75.55                         & 97.79                         & 90.52                         & 38.70                         & 81.87                         & 83.22                         & 90.39                         & 85.52                         & 83.11                         \\
                                   & \textbf{MMRL++} & 76.87                         & 98.26                         & 95.16                         & 75.60                         & 97.68                         & 90.73                         & 39.53                         & 81.55                         & 83.18                         & 90.64                         & 85.47                         & 83.15                         \\ \cmidrule(l){2-14} 
\multirow{-8}{*}{\textbf{16-shot}} & \textbf{\emph{A$_3$B$_2$}}   & \cellcolor[HTML]{FFCCC9}77.37 & \cellcolor[HTML]{FFCCC9}98.73 & \cellcolor[HTML]{FFCCC9}95.52 & \cellcolor[HTML]{FFCCC9}81.96 & \cellcolor[HTML]{FFCCC9}98.14 & \cellcolor[HTML]{FFCCC9}89.21 & \cellcolor[HTML]{FFCCC9}47.59 & \cellcolor[HTML]{FFCCC9}82.36 & \cellcolor[HTML]{FFCCC9}84.18 & \cellcolor[HTML]{FFCCC9}95.86 & \cellcolor[HTML]{FFCCC9}86.49 & \multicolumn{1}{c}{\cellcolor[HTML]{FFCCC9}85.22} \\ \bottomrule[1.5pt]
\end{tabular}%
}
\vspace{-0.3cm}
\caption{Comparison (Base) of \emph{A$_3$B$_2$} with 7 leading methods on few-shot learning across 11 datasets.}
\label{tab:shot-base}
\end{table*}

\begin{table*}[t]
\centering
\resizebox{\textwidth}{!}{%
\begin{tabular}{c|c|cccccccccccc}
\toprule[1.5pt]
\midrule
\textbf{Setup}                     & \textbf{Method} & \textbf{ImageNet}             & \textbf{Caltech101}           & \textbf{OxfordPets}           & \textbf{StanfordCars}         & \textbf{Flowers102}           & \textbf{Food101}              & \textbf{FGVCAircraft}         & \textbf{SUN397}               & \textbf{DTD}                  & \textbf{EuroSAT}              & \textbf{UCF101}               & \textbf{Average}              \\ \midrule
                                   & \textbf{CoOp}   & 65.03                         & 91.26                         & 95.20                         & 58.21                         & 57.47                         & 88.40                         & 16.40                         & 67.42                         & 36.29                         & 38.42                         & 64.23                         & 61.67                         \\
                                   & \textbf{KgCoOp} & 65.14                         & 91.30                         & 95.35                         & 59.33                         & 58.25                         & 88.63                         & 16.41                         & 67.02                         & 40.91                         & 38.12                         & 64.41                         & 62.26                         \\
                                   & \textbf{MaPLe}  & 61.04                         & 90.79                         & 94.00                         & 58.44                         & 61.87                         & 86.37                         & 18.95                         & 65.94                         & 40.66                         & 54.51                         & 65.96                         & 63.50                         \\
                                   & \textbf{TCP}    & 60.77                         & 91.13                         & 93.65                         & 60.07                         & 64.61                         & 86.35                         & 19.40                         & 66.92                         & 39.40                         & 54.49                         & 65.85                         & 63.88                         \\
                                   & \textbf{MMA}    & 65.84                         & 90.27                         & 95.52                         & 57.42                         & 64.80                         & 88.86                         & 19.29                         & 66.69                         & 46.26                         & 37.52                         & 66.79                         & 63.57                         \\
                                   & \textbf{MMRL}   & 66.45                         & 91.19                         & 94.18                         & 59.59                         & 65.65                         & 89.52                         & 18.57                         & 70.22                         & 46.44                         & 55.69                         & 67.97                         & 65.95                         \\
                                   & \textbf{MMRL++} & 66.82                         & 91.75                         & 95.14                         & 59.19                         & 65.60                         & 89.31                         & 17.91                         & 70.38                         & 47.17                         & 56.15                         & 68.21                         & 66.15                         \\\cmidrule(l){2-14}
\multirow{-8}{*}{\textbf{1-shot}}  & \textbf{\emph{A$_3$B$_2$}}   & \cellcolor[HTML]{FFCCC9}66.99 & \cellcolor[HTML]{FFCCC9}91.81 & \cellcolor[HTML]{FFCCC9}95.73 & \cellcolor[HTML]{FFCCC9}59.83 & \cellcolor[HTML]{FFCCC9}66.02 & \cellcolor[HTML]{FFCCC9}90.06 & \cellcolor[HTML]{FFCCC9}19.39 & \cellcolor[HTML]{FFCCC9}70.48 & \cellcolor[HTML]{FFCCC9}47.05 & \cellcolor[HTML]{FFCCC9}55.89 & \cellcolor[HTML]{FFCCC9}68.43 & \cellcolor[HTML]{FFCCC9}66.52 \\ \midrule
                                   & \textbf{CoOp}   & 65.76                         & 91.74                         & 95.55                         & 60.88                         & 64.70                         & 88.48                         & 20.09                         & 67.20                         & 42.75                         & 47.57                         & 66.23                         & 64.63                         \\
                                   & \textbf{KgCoOp} & 65.87                         & 92.17                         & 95.20                         & 62.30                         & 65.56                         & 88.70                         & 20.11                         & 66.82                         & 43.67                         & 47.20                         & 66.38                         & 64.91                         \\
                                   & \textbf{MaPLe}  & 63.48                         & 92.14                         & 95.89                         & 62.80                         & 66.03                         & 87.42                         & 21.92                         & 68.33                         & 43.31                         & 59.46                         & 68.48                         & 66.30                         \\
                                   & \textbf{TCP}    & 63.28                         & 92.51                         & 95.54                         & 64.61                         & 68.99                         & 87.41                         & 22.41                         & 69.41                         & 41.99                         & 59.53                         & 68.38                         & 66.73                         \\
                                   & \textbf{MMA}    & 66.99                         & 91.37                         & 96.28                         & 60.77                         & 70.04                         & 88.28                         & 21.43                         & 70.00                         & 50.35                         & 46.78                         & 68.60                         & 66.44                         \\
                                   & \textbf{MMRL}   & 67.74                         & 91.83                         & 94.90                         & 63.27                         & 69.65                         & 90.23                         & 22.27                         & 72.88                         & 50.58                         & 60.73                         & 70.19                         & 68.57                         \\
                                   & \textbf{MMRL++} & 68.04                         & 92.58                         & 95.37                         & 63.12                         & 69.96                         & 90.21                         & 20.70                         & 72.39                         & 51.17                         & 59.69                         & 70.75                         & 68.54                         \\\cmidrule(l){2-14}
\multirow{-8}{*}{\textbf{2-shot}}  & \textbf{\emph{A$_3$B$_2$}}   & \cellcolor[HTML]{FFCCC9}68.15 & \cellcolor[HTML]{FFCCC9}92.90 & \cellcolor[HTML]{FFCCC9}96.56 & \cellcolor[HTML]{FFCCC9}63.36 & \cellcolor[HTML]{FFCCC9}70.36 & \cellcolor[HTML]{FFCCC9}89.45 & \cellcolor[HTML]{FFCCC9}22.64 & \cellcolor[HTML]{FFCCC9}71.90 & \cellcolor[HTML]{FFCCC9}51.03 & \cellcolor[HTML]{FFCCC9}60.18 & \cellcolor[HTML]{FFCCC9}71.24 & \cellcolor[HTML]{FFCCC9}68.89 \\ \midrule
                                   & \textbf{CoOp}   & 67.75                         & 92.02                         & 96.56                         & 64.30                         & 68.28                         & 88.52                         & 23.68                         & 70.68                         & 42.39                         & 60.49                         & 70.56                         & 67.75                         \\
                                   & \textbf{KgCoOp} & 67.52                         & 92.46                         & 96.23                         & 65.76                         & 69.19                         & 88.76                         & 23.64                         & 70.25                         & 47.76                         & 60.10                         & 70.72                         & 68.40                         \\
                                   & \textbf{MaPLe}  & 65.97                         & 92.58                         & 96.95                         & 66.10                         & 68.83                         & 87.70                         & 24.73                         & 72.00                         & 47.57                         & 64.16                         & 72.02                         & 68.96                         \\
                                   & \textbf{TCP}    & 65.73                         & 93.00                         & 96.60                         & 67.97                         & 71.88                         & 87.65                         & 25.29                         & 73.07                         & 46.07                         & 64.18                         & 71.89                         & 69.39                         \\
                                   & \textbf{MMA}    & 67.58                         & 91.70                         & 96.56                         & 64.72                         & 72.10                         & 87.89                         & 25.25                         & 72.87                         & 56.59                         & 59.13                         & 72.13                         & 69.68                         \\
                                   & \textbf{MMRL}   & 68.79                         & 92.98                         & 95.93                         & 67.81                         & 72.25                         & 90.49                         & 26.33                         & 75.32                         & 55.92                         & 64.25                         & 73.92                         & 71.27                         \\
                                   & \textbf{MMRL++} & 68.25                         & 93.44                         & 96.69                         & 67.56                         & 73.00                         & 91.02                         & 26.09                         & 74.87                         & 56.39                         & 65.61                         & 75.11                         & 71.64                         \\\cmidrule(l){2-14}
\multirow{-8}{*}{\textbf{4-shot}}  & \textbf{\emph{A$_3$B$_2$}}   & \cellcolor[HTML]{FFCCC9}68.73 & \cellcolor[HTML]{FFCCC9}93.24 & \cellcolor[HTML]{FFCCC9}96.74 & \cellcolor[HTML]{FFCCC9}67.86 & \cellcolor[HTML]{FFCCC9}73.45 & \cellcolor[HTML]{FFCCC9}91.07 & \cellcolor[HTML]{FFCCC9}26.73 & \cellcolor[HTML]{FFCCC9}74.80 & \cellcolor[HTML]{FFCCC9}56.27 & \cellcolor[HTML]{FFCCC9}65.93 & \cellcolor[HTML]{FFCCC9}74.80 & \cellcolor[HTML]{FFCCC9}71.78 \\ \midrule
                                   & \textbf{CoOp}   & 69.27                         & 93.03                         & 97.13                         & 68.42                         & 70.39                         & 89.60                         & 29.92                         & 72.25                         & 50.79                         & 57.58                         & 72.36                         & 70.07                         \\
                                   & \textbf{KgCoOp} & 69.38                         & 93.50                         & 96.76                         & 70.02                         & 71.29                         & 89.87                         & 29.91                         & 71.82                         & 52.72                         & 57.10                         & 72.55                         & 70.45                         \\
                                   & \textbf{MaPLe}  & 68.52                         & 93.35                         & 97.35                         & 69.73                         & 71.16                         & 89.67                         & 29.82                         & 74.57                         & 51.84                         & 65.63                         & 74.69                         & 71.48                         \\
                                   & \textbf{TCP}    & 68.29                         & 93.72                         & 97.32                         & 71.72                         & 74.34                         & 89.62                         & 30.56                         & 75.67                         & 51.19                         & 65.66                         & 74.54                         & 72.06                         \\
                                   & \textbf{MMA}    & 68.34                         & 92.70                         & 97.12                         & 68.91                         & 74.39                         & 88.84                         & 30.10                         & 75.32                         & 60.19                         & 58.91                         & 75.15                         & 71.82                         \\
                                   & \textbf{MMRL}   & 69.67                         & 93.26                         & 96.44                         & 71.65                         & 73.77                         & 91.09                         & 31.31                         & 77.46                         & 58.99                         & 65.02                         & 75.73                         & 73.13                         \\
                                   & \textbf{MMRL++} & 69.91                         & 93.75                         & 96.94                         & 71.55                         & 75.15                         & 91.79                         & 31.42                         & 77.03                         & 59.60                         & 65.55                         & 76.95                         & 73.60                         \\\cmidrule(l){2-14}
\multirow{-8}{*}{\textbf{8-shot}}  & \textbf{\emph{A$_3$B$_2$}}   & \cellcolor[HTML]{FFCCC9}69.97 & \cellcolor[HTML]{FFCCC9}94.25 & \cellcolor[HTML]{FFCCC9}97.68 & \cellcolor[HTML]{FFCCC9}71.75 & \cellcolor[HTML]{FFCCC9}75.77 & \cellcolor[HTML]{FFCCC9}92.01 & \cellcolor[HTML]{FFCCC9}31.93 & \cellcolor[HTML]{FFCCC9}77.25 & \cellcolor[HTML]{FFCCC9}59.86 & \cellcolor[HTML]{FFCCC9}65.28 & \cellcolor[HTML]{FFCCC9}76.90 & \cellcolor[HTML]{FFCCC9}73.88 \\ \midrule
                                   & \textbf{CoOp}   & 70.47                         & 94.18                         & 97.78                         & 71.70                         & 71.94                         & 91.25                         & 33.31                         & 75.43                         & 50.48                         & 59.38                         & 74.17                         & 71.83                         \\
                                   & \textbf{KgCoOp} & 70.59                         & 94.65                         & 97.43                         & 73.35                         & 72.88                         & 91.50                         & 33.29                         & 74.97                         & 56.88                         & 58.89                         & 74.36                         & 72.62                         \\
                                   & \textbf{MaPLe}  & 70.49                         & 94.14                         & 97.93                         & 73.33                         & 72.03                         & 91.54                         & 34.35                         & 76.93                         & 55.63                         & 70.13                         & 78.06                         & 74.05                         \\
                                   & \textbf{TCP}    & 70.21                         & 94.54                         & 97.59                         & 75.40                         & 75.25                         & 91.53                         & 35.19                         & 78.11                         & 53.90                         & 70.17                         & 77.93                         & 74.53                         \\
                                   & \textbf{MMA}    & 69.63                         & 93.63                         & 97.59                         & 72.53                         & 75.96                         & 90.51                         & 35.39                         & 77.75                         & 65.05                         & 62.94                         & 77.73                         & 74.43                         \\
                                   & \textbf{MMRL}   & 70.70                         & 94.07                         & 97.26                         & 74.99                         & 75.13                         & 91.82                         & 37.51                         & 79.16                         & 62.04                         & 68.44                         & 78.35                         & 75.41                         \\
                                   & \textbf{MMRL++} & 70.72                         & 94.36                         & 97.60                         & 74.83                         & 76.59                         & 92.35                         & 37.16                         & 79.05                         & 62.57                         & 68.73                         & 79.57                         & 75.78                         \\\cmidrule(l){2-14}
\multirow{-8}{*}{\textbf{16-shot}} & \textbf{\emph{A$_3$B$_2$}}   & \cellcolor[HTML]{FFCCC9}70.83 & \cellcolor[HTML]{FFCCC9}95.17 & \cellcolor[HTML]{FFCCC9}97.83 & \cellcolor[HTML]{FFCCC9}75.58 & \cellcolor[HTML]{FFCCC9}77.39 & \cellcolor[HTML]{FFCCC9}91.75 & \cellcolor[HTML]{FFCCC9}37.48 & \cellcolor[HTML]{FFCCC9}78.72 & \cellcolor[HTML]{FFCCC9}64.71 & \cellcolor[HTML]{FFCCC9}69.75 & \cellcolor[HTML]{FFCCC9}79.58 & \cellcolor[HTML]{FFCCC9}76.25 \\ \bottomrule[1.5pt]
\end{tabular}%
}
\vspace{-0.3cm}
\caption{Comparison (Novel) of \emph{A$_3$B$_2$} with 7 leading methods on few-shot learning across 11 datasets.}
\label{tab:shot-novel}
\end{table*}

\begin{table*}[t]
\resizebox{\textwidth}{!}{%
\begin{tabular}{c|c|cccccccccccc}
\toprule[1.5pt]
\midrule
\textbf{Setup}                     & \textbf{Method} & \textbf{ImageNet}             & \textbf{Caltech101}           & \textbf{OxfordPets}           & \textbf{StanfordCars}         & \textbf{Flowers102}           & \textbf{Food101}              & \textbf{FGVCAircraft}         & \textbf{SUN397}               & \textbf{DTD}                  & \textbf{EuroSAT}              & \textbf{UCF101}               & \textbf{Average}              \\ \midrule
                                   & \textbf{CoOp}   & 67.56                         & 92.75                         & 93.43                         & 59.00                         & 66.73                         & 87.89                         & 17.44                         & 68.98                         & 44.51                         & 46.36                         & 68.11                         & 65.20                         \\
                                   & \textbf{KgCoOp} & 67.58                         & 92.67                         & 93.29                         & 58.87                         & 67.35                         & 88.10                         & 17.24                         & 68.60                         & 47.74                         & 46.15                         & 68.15                         & 65.42                         \\
                                   & \textbf{MaPLe}  & 65.00                         & 92.71                         & 92.68                         & 59.56                         & 71.04                         & 85.77                         & 20.21                         & 67.99                         & 48.13                         & 56.22                         & 68.71                         & 66.41                         \\
                                   & \textbf{TCP}    & 64.79                         & 92.58                         & 92.29                         & 58.58                         & 71.58                         & 85.91                         & 19.56                         & 67.73                         & 46.64                         & 55.96                         & 67.89                         & 65.97                         \\
                                   & \textbf{MMA}    & 69.24                         & 92.59                         & 94.32                         & 59.87                         & 72.95                         & 88.06                         & 21.21                         & 69.20                         & 52.15                         & 45.92                         & 70.25                         & 67.16                         \\
                                   & \textbf{MMRL}   & 69.30                         & 93.16                         & 93.28                         & 59.83                         & 74.25                         & 88.88                         & 18.87                         & 71.38                         & 52.09                         & 57.58                         & 70.94                         & 68.28                         \\
                                   & \textbf{MMRL++} & 70.10                         & 93.61                         & 93.96                         & 59.51                         & 73.56                         & 88.50                         & 18.48                         & 71.37                         & 52.72                         & 57.84                         & 70.89                         & 68.36                         \\ \cmidrule(l){2-14}
\multirow{-8}{*}{\textbf{1-shot}}  & \textbf{\emph{A$_3$B$_2$}}   & \cellcolor[HTML]{FFCCC9}69.98 & \cellcolor[HTML]{FFCCC9}93.68 & \cellcolor[HTML]{FFCCC9}94.56 & \cellcolor[HTML]{FFCCC9}61.33 & \cellcolor[HTML]{FFCCC9}73.82 & \cellcolor[HTML]{FFCCC9}88.81 & \cellcolor[HTML]{FFCCC9}21.43 & \cellcolor[HTML]{FFCCC9}71.54 & \cellcolor[HTML]{FFCCC9}52.71 & \cellcolor[HTML]{FFCCC9}57.46 & \cellcolor[HTML]{FFCCC9}71.23 & \cellcolor[HTML]{FFCCC9}68.92 \\ \midrule
                                   & \textbf{CoOp}   & 68.33                         & 93.71                         & 94.32                         & 62.20                         & 74.34                         & 87.45                         & 21.38                         & 69.69                         & 50.04                         & 53.12                         & 70.24                         & 67.95                         \\
                                   & \textbf{KgCoOp} & 68.35                         & 93.84                         & 93.93                         & 61.66                         & 74.59                         & 87.61                         & 21.13                         & 69.31                         & 50.59                         & 53.15                         & 70.26                         & 67.91                         \\
                                   & \textbf{MaPLe}  & 66.14                         & 94.11                         & 94.52                         & 64.02                         & 75.81                         & 86.80                         & 23.37                         & 70.46                         & 51.26                         & 60.01                         & 71.34                         & 69.13                         \\
                                   & \textbf{TCP}    & 65.48                         & 94.01                         & 94.16                         & 62.46                         & 76.40                         & 86.94                         & 22.60                         & 70.22                         & 49.70                         & 60.02                         & 69.99                         & 68.57                         \\
                                   & \textbf{MMA}    & 70.45                         & 93.71                         & 95.08                         & 63.33                         & 78.82                         & 87.48                         & 23.55                         & 71.63                         & 57.03                         & 52.95                         & 72.15                         & 69.84                         \\
                                   & \textbf{MMRL}   & 70.62                         & 93.82                         & 94.00                         & 63.52                         & 78.78                         & 88.56                         & 22.62                         & 74.11                         & 57.96                         & 60.96                         & 73.26                         & 70.94                         \\
                                   & \textbf{MMRL++} & 71.23                         & 94.46                         & 94.18                         & 63.43                         & 78.41                         & 88.38                         & 21.33                         & 73.52                         & 58.39                         & 60.68                         & 73.78                         & 70.88                         \\ \cmidrule(l){2-14}
\multirow{-8}{*}{\textbf{2-shot}}  & \textbf{\emph{A$_3$B$_2$}}   & \cellcolor[HTML]{FFCCC9}71.19 & \cellcolor[HTML]{FFCCC9}94.56 & \cellcolor[HTML]{FFCCC9}95.39 & \cellcolor[HTML]{FFCCC9}65.44 & \cellcolor[HTML]{FFCCC9}79.15 & \cellcolor[HTML]{FFCCC9}88.19 & \cellcolor[HTML]{FFCCC9}25.34 & \cellcolor[HTML]{FFCCC9}72.99 & \cellcolor[HTML]{FFCCC9}57.63 & \cellcolor[HTML]{FFCCC9}61.11 & \cellcolor[HTML]{FFCCC9}73.88 & \cellcolor[HTML]{FFCCC9}71.53 \\ \midrule
                                   & \textbf{CoOp}   & 70.20                         & 94.40                         & 96.26                         & 65.70                         & 78.45                         & 88.03                         & 25.18                         & 73.30                         & 51.65                         & 67.77                         & 74.32                         & 71.69                         \\
                                   & \textbf{KgCoOp} & 70.03                         & 94.60                         & 95.87                         & 65.13                         & 78.73                         & 88.14                         & 24.85                         & 72.90                         & 55.36                         & 66.55                         & 74.34                         & 71.72                         \\
                                   & \textbf{MaPLe}  & 68.76                         & 94.57                         & 95.58                         & 67.35                         & 79.02                         & 87.12                         & 26.38                         & 74.23                         & 56.33                         & 73.03                         & 75.05                         & 72.79                         \\
                                   & \textbf{TCP}    & 68.06                         & 94.49                         & 95.22                         & 65.66                         & 79.61                         & 87.22                         & 25.54                         & 73.97                         & 54.55                         & 70.44                         & 73.63                         & 71.92                         \\
                                   & \textbf{MMA}    & 71.08                         & 94.17                         & 95.34                         & 67.46                         & 81.15                         & 87.59                         & 27.75                         & 74.60                         & 64.18                         & 69.73                         & 75.86                         & 73.80                         \\
                                   & \textbf{MMRL}   & 71.72                         & 94.98                         & 95.03                         & 68.08                         & 81.72                         & 88.83                         & 26.75                         & 76.57                         & 64.07                         & 73.12                         & 77.17                         & 74.62                         \\
                                   & \textbf{MMRL++} & 71.58                         & 95.34                         & 95.47                         & 67.91                         & 81.85                         & 89.19                         & 26.90                         & 76.05                         & 64.36                         & 74.23                         & 77.78                         & 74.84                         \\ \cmidrule(l){2-14}
\multirow{-8}{*}{\textbf{4-shot}}  & \textbf{\emph{A$_3$B$_2$}}   & \cellcolor[HTML]{FFCCC9}71.82 & \cellcolor[HTML]{FFCCC9}95.01 & \cellcolor[HTML]{FFCCC9}95.59 & \cellcolor[HTML]{FFCCC9}69.96 & \cellcolor[HTML]{FFCCC9}82.16 & \cellcolor[HTML]{FFCCC9}89.33 & \cellcolor[HTML]{FFCCC9}29.91 & \cellcolor[HTML]{FFCCC9}76.28 & \cellcolor[HTML]{FFCCC9}64.00 & \cellcolor[HTML]{FFCCC9}74.42 & \cellcolor[HTML]{FFCCC9}77.47 & \cellcolor[HTML]{FFCCC9}75.31 \\ \midrule
                                   & \textbf{CoOp}   & 71.95                         & 95.04                         & 95.87                         & 69.94                         & 80.86                         & 88.95                         & 31.83                         & 74.94                         & 59.87                         & 67.53                         & 76.71                         & 74.29                         \\
                                   & \textbf{KgCoOp} & 71.97                         & 95.18                         & 95.46                         & 69.34                         & 81.12                         & 89.06                         & 31.45                         & 74.51                         & 61.09                         & 66.61                         & 76.76                         & 74.18                         \\
                                   & \textbf{MaPLe}  & 71.43                         & 95.35                         & 96.14                         & 71.04                         & 81.69                         & 89.05                         & 31.79                         & 76.90                         & 61.40                         & 74.83                         & 77.82                         & 75.53                         \\
                                   & \textbf{TCP}    & 70.69                         & 95.23                         & 95.93                         & 69.26                         & 82.33                         & 88.68                         & 31.28                         & 76.57                         & 60.15                         & 72.17                         & 76.33                         & 74.68                         \\
                                   & \textbf{MMA}    & 71.85                         & 95.07                         & 95.92                         & 71.82                         & 83.75                         & 88.52                         & 33.08                         & 77.57                         & 67.85                         & 70.64                         & 79.04                         & 76.21                         \\
                                   & \textbf{MMRL}   & 72.63                         & 95.26                         & 95.53                         & 71.91                         & 83.44                         & 89.94                         & 32.27                         & 78.75                         & 67.60                         & 74.00                         & 79.03                         & 76.66                         \\
                                   & \textbf{MMRL++} & 72.81                         & 95.66                         & 95.71                         & 71.91                         & 84.24                         & 89.96                         & 32.87                         & 78.24                         & 68.02                         & 74.57                         & 79.69                         & 76.95                         \\ \cmidrule(l){2-14}
\multirow{-8}{*}{\textbf{8-shot}}  & \textbf{\emph{A$_3$B$_2$}}   & \cellcolor[HTML]{FFCCC9}72.99 & \cellcolor[HTML]{FFCCC9}95.97 & \cellcolor[HTML]{FFCCC9}96.35 & \cellcolor[HTML]{FFCCC9}74.20 & \cellcolor[HTML]{FFCCC9}84.74 & \cellcolor[HTML]{FFCCC9}90.24 & \cellcolor[HTML]{FFCCC9}34.91 & \cellcolor[HTML]{FFCCC9}78.99 & \cellcolor[HTML]{FFCCC9}68.05 & \cellcolor[HTML]{FFCCC9}75.56 & \cellcolor[HTML]{FFCCC9}80.12 & \cellcolor[HTML]{FFCCC9}77.73 \\ \midrule
                                   & \textbf{CoOp}   & 73.21                         & 96.22                         & 96.51                         & 73.26                         & 82.65                         & 90.75                         & 35.42                         & 78.22                         & 61.49                         & 71.17                         & 78.65                         & 76.59                         \\
                                   & \textbf{KgCoOp} & 73.23                         & 96.37                         & 96.12                         & 72.63                         & 82.93                         & 90.87                         & 34.98                         & 77.79                         & 65.90                         & 69.85                         & 78.68                         & 76.65                         \\
                                   & \textbf{MaPLe}  & 73.48                         & 82.54                         & 96.55                         & 74.72                         & 82.70                         & 90.91                         & 36.62                         & 79.31                         & 65.86                         & 79.82                         & 81.33                         & 78.27                         \\
                                   & \textbf{TCP}    & 72.08                         & 96.05                         & 96.17                         & 72.85                         & 83.35                         & 91.05                         & 35.49                         & 79.02                         & 63.81                         & 76.57                         & 79.80                         & 77.26                         \\
                                   & \textbf{MMA}    & 73.22                         & 96.02                         & 96.38                         & 75.60                         & 85.50                         & 89.69                         & 38.90                         & 79.58                         & 73.33                         & 75.47                         & 81.75                         & 78.99                         \\
                                   & \textbf{MMRL}   & 73.72                         & 96.09                         & 96.33                         & 75.27                         & 84.98                         & 91.17                         & 38.10                         & 80.49                         & 71.09                         & 77.90                         & 81.78                         & 79.07                         \\
                                   & \textbf{MMRL++} & 73.67                         & 96.27                         & 96.36                         & 75.21                         & 85.86                         & 91.53                         & 38.31                         & 80.28                         & 71.42                         & 78.18                         & 82.42                         & 79.29                         \\ \cmidrule(l){2-14}
\multirow{-8}{*}{\textbf{16-shot}} & \textbf{\emph{A$_3$B$_2$}}   & \cellcolor[HTML]{FFCCC9}73.96 & \cellcolor[HTML]{FFCCC9}96.92 & \cellcolor[HTML]{FFCCC9}96.66 & \cellcolor[HTML]{FFCCC9}78.64 & \cellcolor[HTML]{FFCCC9}86.54 & \cellcolor[HTML]{FFCCC9}90.46 & \cellcolor[HTML]{FFCCC9}41.93 & \cellcolor[HTML]{FFCCC9}80.50 & \cellcolor[HTML]{FFCCC9}73.17 & \cellcolor[HTML]{FFCCC9}80.75 & \cellcolor[HTML]{FFCCC9}82.89 & \cellcolor[HTML]{FFCCC9}80.49 \\ \bottomrule[1.5pt]
\end{tabular}%
}
\vspace{-0.3cm}
\caption{Comparison (HM) of \emph{A$_3$B$_2$} with 7 leading methods on few-shot learning across 11 datasets.}
\label{tab:shot-hm}
\end{table*}

\section{Fixed Asymmetric Structure}
\label{sec:fixed}

To better analyze the asymmetric design of \emph{A$_3$B$_2$}, we remove $\mathcal{L}_{bias}$ and $\mathcal{L}_{bal}$ and denote the resulting variant as A3.
\subsection{Empirical Evidence}
\label{sec:empirical}
We attempt to invert A3 into $\overline{A3}$ (\textit{i.e.}, multiple single-dimensionality reduction matrices and a dimensionality expansion matrix). We compare the performance of $\overline{A3}$ and A3 across all image classification tasks, as shown in Figures~\ref{fig:inverse_base},~\ref{fig:inverse_novel},~\ref{fig:inverse_hm},~\ref{fig:inverse_cross}, and~\ref{fig:inverse_domain}. From these results, we observe that \textbf{$\overline{A3}$ generally performs worse than A3 across different tasks}. This strongly demonstrates the effectiveness of the proposed fixed asymmetric design. In the following, we analyze the underlying reasons behind this outcome.

\begin{figure*}
\centering
    \includegraphics[width=0.9\linewidth]{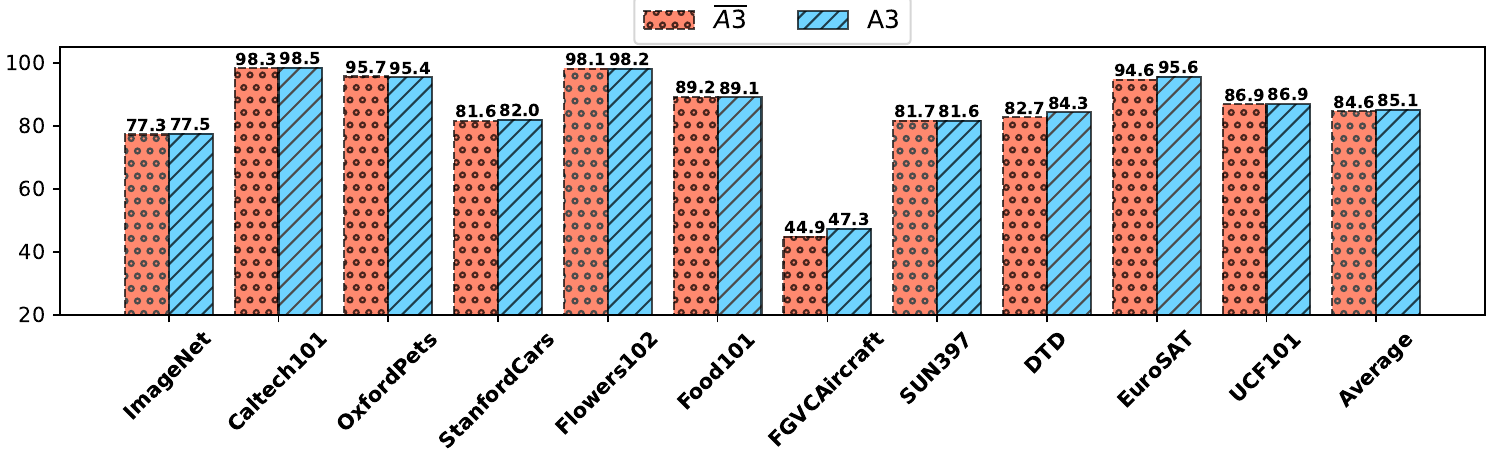}
    \vspace{-0.2cm}
    \textbf{\caption{Performance comparison of $\overline{A3}$ and A3 in terms of the base setting in base-to-novel generalization.}
    \label{fig:inverse_base}}
    \vspace{-0.4cm}
\end{figure*}

\begin{figure*}
\centering
    \includegraphics[width=0.9\linewidth]{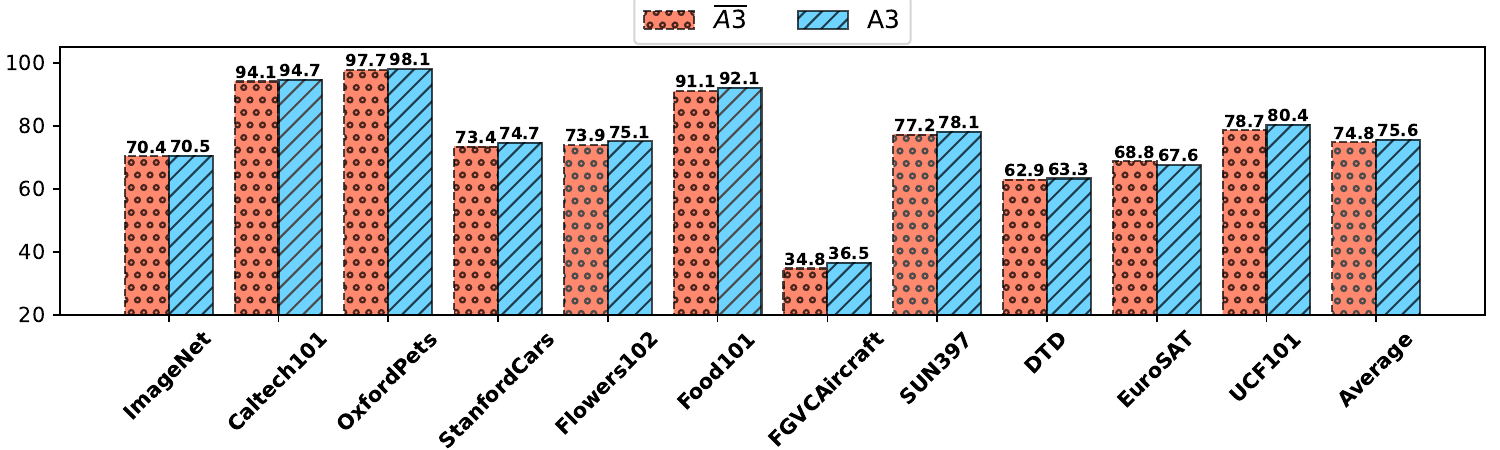}
    \vspace{-0.2cm}
    \textbf{\caption{Performance comparison of $\overline{A3}$ and A3 in terms of the novel setting in base-to-novel generalization.}
    \label{fig:inverse_novel}}
    \vspace{-0.4cm}
\end{figure*}

\begin{figure*}
\centering
    \includegraphics[width=0.9\linewidth]{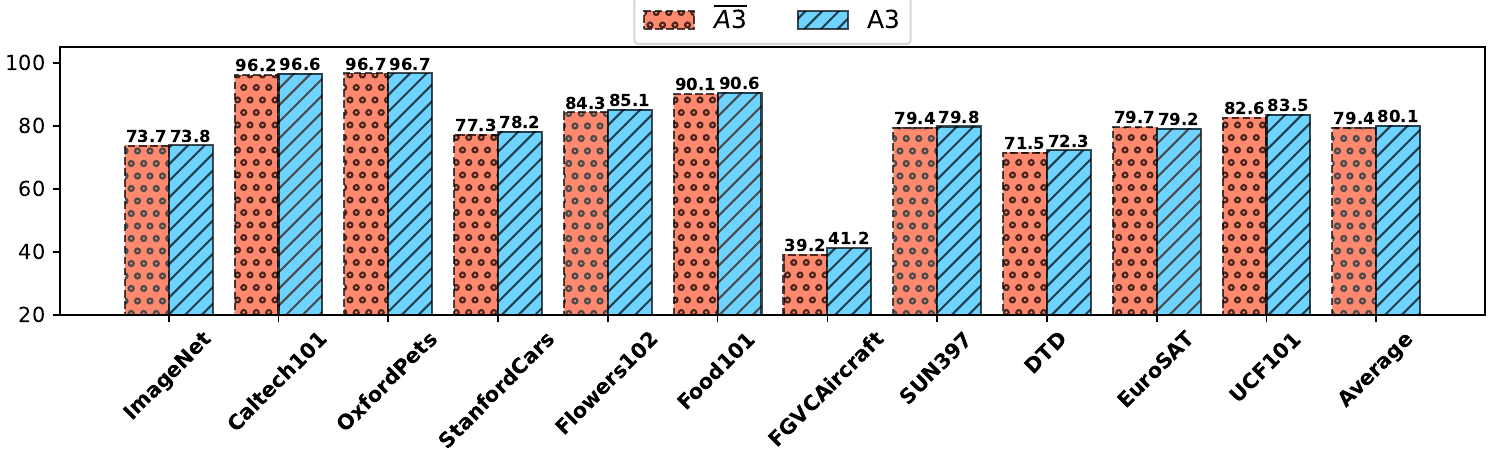}
    \vspace{-0.2cm}
    \textbf{\caption{Performance comparison of $\overline{A3}$ and A3 in terms of the hm setting in base-to-novel generalization.}
    \label{fig:inverse_hm}}
    \vspace{-0.4cm}
\end{figure*}

\begin{figure*}
\centering
    \includegraphics[width=0.9\linewidth]{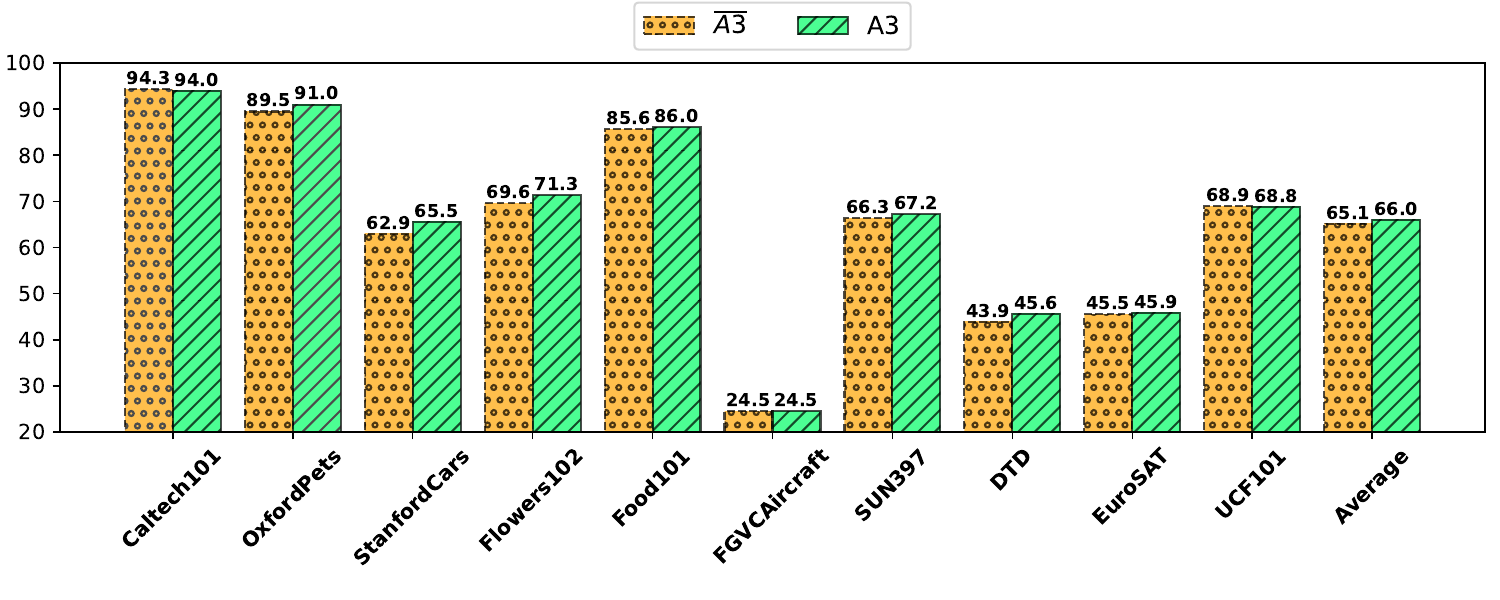}
    \vspace{-0.5cm}
    \textbf{\caption{Performance comparison of $\overline{A3}$ and A3 in cross-dataset evaluation.}
    \label{fig:inverse_cross}}
    \vspace{-0.4cm}
\end{figure*}

\begin{figure}[t]
\centering
    \includegraphics[width=0.9\linewidth]{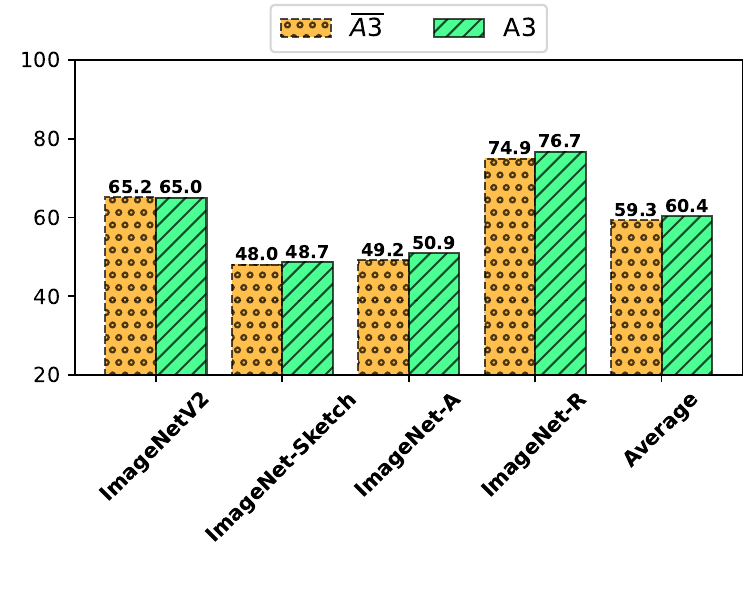}
    \vspace{-0.9cm}
    \textbf{\caption{Performance comparison of $\overline{A3}$ and A3 in domain generalization.}
    \label{fig:inverse_domain}}
    \vspace{-0.4cm}
\end{figure}

\subsection{Theoretical Support}
\label{sec:theoretical}

We build upon the theoretical analysis developed in our previous work~\cite{zhou2026beyond}. A critical design choice arises when integrating this low-rank factorization with MoE principles to handle heterogeneous data distributions. We analyze two possible asymmetric architectures:
\begin{itemize}[leftmargin=*, topsep=2pt, itemsep=2pt]
    \item \textbf{One-Down-Many-Ups}: A single, shared dimension-reducing projection $W^{\text{down}}$ feeds into multiple specialized dimension-increasing expert heads $\{W^{\text{up}}_h\}_{h=1}^H$.
    \item \textbf{Many-Downs-One-Up}: Multiple specialized projections $\{W^{\text{down}}_h\}_{h=1}^H$ are aggregated and passed through a single shared reconstruction layer $W^{\text{up}}$.
\end{itemize}
We found that the first configuration is theoretically superior, and we ground our analysis in the Information Bottleneck (IB) principle~\cite{koch1987shifts}.

The IB principle frames learning as a problem of compression. Given an input random variable $X$ and a target variable $Y$, the goal is to learn a compressed representation, or bottleneck, $Z$, that is maximally informative about $Y$ while being minimally informative about $X$. This trade-off is formalized by minimizing the Lagrangian:
\begin{equation}
    \mathcal{L}_{\text{IB}}(p(z|x)) = I(X; Z) - \beta I(Z; Y)
    \label{eq:ib_lagrangian}
\end{equation}
where $I(\cdot; \cdot)$ denotes mutual information and $\beta$ is a Lagrange multiplier controlling the trade-off between compression ($I(X;Z)$) and prediction ($I(Z;Y)$). The optimal representation $Z$ is thus a \textbf{minimal sufficient statistic} of $X$ for predicting $Y$.

\subsubsection{Information-Theoretic Analysis of Asymmetric Architectures}

We now apply the IB framework to analyze the two competing architectures. Let the input features be $X$ and the final output be $Y$.

\paragraph{The One-Down-Many-Ups Configuration as a Unified Bottleneck}

This architecture, as defined in $A_3B_2$, is given by:
\begin{equation}
    Y = \sum_{h=1}^{H} G_h(X) \cdot W^{\text{up}}_h\left(W^{\text{down}}(X)\right)
    \label{eq:one_down}
\end{equation}
Let us define the bottleneck variable as the output of the shared projection: $Z \triangleq W^{\text{down}}(X)$. The IB objective for this architecture is to learn the parameters of $W^{\text{down}}$ (which define the mapping $p(z|x)$) that minimize $\mathcal{L}_{\text{IB}}$ from Eq.~\ref{eq:ib_lagrangian}.

\paragraph{Theoretical Analysis.}
The one-down-many-ups architecture imposes a single shared bottleneck: all information from $X$ to $Y$ must pass through the same low-dimensional $Z$. This means $Z$ must serve as the representation for the \emph{entire mixture} of $H$ experts. Consequently, to maximize the predictive information $I(Z; Y)$, $Z$ is forced to encode only those features of $X$ that are salient for predicting $Y$ across all expert domains, while filtering out any spurious or task-specific details~\cite{ye2024spurious,zhou2025disentangled,zhou2025multimodal}. In effect, the model learns to focus on the joint task distribution $p(y|x) = \sum_{h=1}^H p(h|x)\,p(y|x,h)$, where $p(h|x)$ is given by the gating function $G_h(X)$. Since the architecture essentially forms a Markov chain $X \to Z \to Y$~\cite{norris1998markov}, the Data Processing Inequality guarantees that $I(Z; Y) \le I(X; Y)$. The IB objective drives $Z$ to approach this upper bound on predictive power while minimizing $I(X; Z)$, yielding a minimal sufficient representation of $X$ for the entire problem.

This unified bottleneck encourages $Z$ to become a robust, canonical representation of shared knowledge that all experts can rely on. Each expert head $W^{\text{up}}_h$ then performs a simpler, decoupled task: mapping the common $r$-dimensional representation $Z$ to a specialized output subspace in $\mathbb{R}^n$. Notably, although $Z$ has dimension $r$, the one-down-many-ups design is not limited to a single rank-$r$ mapping. Because each $W^{\text{up}}_h$ can independently span an output subspace of dimension up to $r$, the combined model can cover the union of these subspaces. Through gating, different inputs activate different experts, allowing the overall output $Y$ to lie in a rich space of dimension up to roughly $rH$, far greater than any single low-rank projector could achieve. In other words, the architecture dedicates most of its parameters to the expert-specific “up” transformations, thereby boosting expressiveness where needed while keeping the shared encoder compact.

Another major advantage of the unified bottleneck is in gradient flow during training. The error gradients from all expert outputs are propagated back into the single projection $W^{\text{down}}$. This shared pathway means that every expert's feedback refines the same encoder $W^{\text{down}}$, aligning the experts on learning a common representation. Such unified credit assignment avoids conflicting updates—improvements for one expert cannot come at the expense of degrading the shared features for another~\cite{samejima2003inter,mu2025comprehensive}. This synergy makes optimization more stable and efficient.

\paragraph{The Many-Downs-One-Up Configuration as Competing Bottlenecks}

The inverse architecture is formulated as:
\begin{equation}
    Y = W^{\text{up}}\left(\sum_{h=1}^{H} G_h(X) \cdot W^{\text{down}}_h(X)\right)
    \label{eq:many_downs}
\end{equation}
Here, we have $H$ distinct bottleneck variables, $Z_h \triangleq W^{\text{down}}_h(X)$, which are aggregated to form an intermediate representation, $Z_{\text{agg}} \triangleq \sum_{h=1}^{H} G_h(X) Z_h$, before the final reconstruction.

\paragraph{Theoretical Analysis.}
The many-downs-one-up configuration is fundamentally limited from an information-theoretic perspective. According to the Data Processing Inequality, for any Markov chain $A \to B \to C$, we have $I(A; C) \le I(A; B)$. In our case, the processing chain of interest is:
\begin{equation}
    (Z_1, \dots, Z_H) \to Z_{\text{agg}} \to Y
\end{equation}
Therefore, $I((Z_1, \dots, Z_H); Y) \ge I(Z_{\text{agg}}; Y)$. The aggregation $Z_{\text{agg}} = \sum_{h=1}^H G_h(X)\, Z_h$ is a \emph{lossy} operation that conflates information from the experts. It mixes specialized features from different expert channels into a single vector before reconstruction, making it impossible for the shared $W^{\text{up}}$ layer to discern the contribution of each expert. Moreover, this architecture imposes a strict capacity bottleneck: all expert outputs must funnel through the same rank-$r$ reconstruction matrix $W^{\text{up}}$. As $W^{\text{up}}$ can only produce outputs in an $r$-dimensional subspace of $\mathbb{R}^n$, the final prediction $Y$ is confined to a fixed low-dimensional subspace regardless of how diverse the individual $Z_h$ are. In contrast to the one-down-many-ups design (where multiple $W^{\text{up}}_h$ heads could span different output subspaces), here the presence of $H$ distinct projections $W^{\text{down}}_h$ does not increase the overall representational rank—any benefit of having multiple experts is squandered by the single shared up-projection.

Training this architecture also poses a severe credit assignment problem~\cite{fedus2022switch,zhang2025more}. The single reconstruction layer $W^{\text{up}}$ must learn to invert a complex mixture $Z_{\text{agg}}$ back to $Y$ without access to the individual expert outputs $Z_h$. During backpropagation, the gradient of the loss with respect to $Z_{\text{agg}}$ is shared by all experts. By the chain rule one can show $\frac{\partial \mathcal{L}}{\partial Z_h} = G_h(X) \frac{\partial \mathcal{L}}{\partial Z_{\text{agg}}}$ for each expert $h$. Thus, each $W^{\text{down}}_h$ only receives a fraction $G_h(X)$ of the total feedback signal, weighted by its gating activation. When multiple experts are active, their parameter updates become entangled and often conflicting, since the model cannot clearly know which expert was responsible for which aspect of the prediction error. This ambiguity in credit assignment leads to unstable optimization and hampers the learning of effective specialized representations. In practice, the easiest path for the model is to collapse the experts into an effectively single expert to eliminate the ambiguity. For example, it may drive all $Z_h$ to become nearly identical or favor one expert exclusively, so that $W^{\text{up}}$ deals with a single dominant bottleneck. Such outcomes defeat the purpose of having $H$ experts in the first place, confirming the fundamental difficulty of the many-downs-one-up design.

\subsubsection{Conclusion of Architectural Design}

The ``one-down-many-ups'' architectural design is not just empirically successful but also theoretically sound when analyzed through the lens of the Information Bottleneck principle. It creates a unified optimization objective that promotes the learning of a minimal, sufficient, and shared representation. In contrast, the ``many-downs-one-up'' structure introduces information loss through premature aggregation and creates a challenging credit assignment problem, hindering effective specialization. Therefore, we conclude that enforcing a single, shared information bottleneck before expert-level specialization is a superior design principle for building efficient and robust low-rank models.

\begin{table*}[th!]
\centering
\setlength{\tabcolsep}{0.00001mm}
\begin{tabular}{@{}c|c|cccccccccc|c@{}}
\toprule[1.5pt]
\midrule
\rotatebox{40}{\textbf{Method}}       & \rotatebox{40}{\textbf{ImageNet}} & \rotatebox{40}{\textbf{Caltech101}} & \rotatebox{40}{\textbf{OxfordPets}} & \rotatebox{40}{\textbf{StanfordCars}} & \rotatebox{40}{\textbf{Flowers102}} & \rotatebox{40}{\textbf{Food101}} & \rotatebox{40}{\textbf{FGVCAircraft}} & \rotatebox{40}{\textbf{SUN397}} & \rotatebox{40}{\textbf{DTD}} & \rotatebox{40}{\textbf{EuroSAT}} & \rotatebox{40}{\textbf{UCF101}} & \rotatebox{40}{\textbf{Average}} \\ \midrule 
\textbf{CoCoOp}       & 70.62             & 94.52               & 90.47               & 65.91                 & 71.92               & 86.02            & 23.34                 & 66.54           & 45.51        & 44.43            & 68.24           & 65.69            \\
\textbf{CLIP-Adapter} & 68.64             & 93.50               & 88.91               & 64.90                 & 70.48               & 85.88            & 24.66                 & 65.50           & 45.21        & 48.22            & 66.79           & 65.41            \\
\textbf{RPO}          & 69.69             & 93.75               & 89.13               & 66.25                 & 70.60               & 85.07            & 24.25                 & 66.95           & 45.86        & 43.95            & 67.29           & 65.31            \\ \bottomrule[1.5pt]
\end{tabular}%
\vspace{-0.3cm}
\caption{Performance of three methods in cross-dataset evaluation.}
\label{tab:cross_other}
\end{table*}

\begin{table}[th!]
\centering
\setlength{\tabcolsep}{0.35mm}
\begin{tabular}{@{}c|c|cccc|c@{}}
\toprule[1.5pt]
\midrule
\textbf{Method}       & \textbf{ImageNet} & \textbf{-V2} & \textbf{-S} & \textbf{-A} & \textbf{-R} & \textbf{Average} \\ \midrule
\textbf{CoCoOp}       & 70.62             & 49.14        & 51.62       & 76.84       & 76.69       & 63.57            \\
\textbf{CLIP-Adapter} & 68.64             & 47.56        & 48.94       & 75.39       & 76.60       & 62.12            \\
\textbf{RPO}          & 69.69             & 49.18        & 50.67       & 76.83       & 76.47       & 63.29            \\ \bottomrule[1.5pt]
\end{tabular}%
\vspace{-0.3cm}
\caption{Performance of three methods in domain generalization.}
\label{tab:domain_other}
\end{table}

\section{Theoretical Guarantees of the Proposed Method}
\label{sec:guarantee}

The concept of \emph{Branch Bias} is inspired by \emph{Data Bias} in data mining. \emph{Data Bias} refers to the unequal contribution of different data components to a task (caused by distribution imbalance), and can be defined as the deviation between the actual and ideal distributions:
\[
\operatorname{Data\ Bias} = D\big(P(Y), P^{\text{ideal}}(Y)\big),
\]
where $D(\cdot)$ can be a divergence measure (\textit{e.g.}, KL divergence).

\emph{Branch Bias} differs fundamentally from \emph{Data Bias}:

\begin{table}[h]
\centering
\vspace{-0.4cm}
\resizebox{\columnwidth}{!}{
\begin{tabular}{c|c}
\toprule[1.5pt]
\textbf{Data Bias} & \textbf{Branch Bias} \\
\midrule
Imbalanced data proportion & Imbalanced representation contribution \\ \midrule
Different class weights & Different encoder importance \\ \bottomrule[1.5pt]
\end{tabular}
}
\vspace{-0.3cm}
\end{table}

We now formally define \emph{Branch Bias} and its theoretical connection to the proposed UAAD mechanism.

\begin{definition}[Branch Contribution]
Let $\mathcal{T}$ denote a downstream few-shot task with data distribution $(x,y)\sim \mathcal{T}$. Given a vision-language model with image encoder $V$, text encoder $T$, and loss $\ell$, the branch contributions are defined as:
\[
\begin{aligned}
C_V(\mathcal{T}) &= \mathbb{E}_{(x,y)\sim\mathcal{T}}\!\left[\left\|\nabla_{V(x)} \ell(f(V(x),T(y)),y)\right\|\right], \\
C_T(\mathcal{T}) &= \mathbb{E}_{(x,y)\sim\mathcal{T}}\!\left[\left\|\nabla_{T(y)} \ell(f(V(x),T(y)),y)\right\|\right],
\end{aligned}
\]
where $f(\cdot,\cdot)$ is the prediction function.
\end{definition}

\begin{definition}[\emph{Branch Bias}]
\emph{Branch Bias} for task $\mathcal{T}$ is defined as:
\[
\operatorname{Branch\ Bias}(\mathcal{T}) = D\big(P_{\mathcal{T}}, P^{\text{uniform}}\big),
\]
where
\[
P_{\mathcal{T}}=\left(\frac{C_V}{C_V+C_T}, \frac{C_T}{C_V+C_T}\right), \quad
P^{\text{uniform}}=(0.5,0.5).
\]
\end{definition}

\begin{proposition}[Performance Gain Imbalance Implies \emph{Branch Bias}]
Let $\Delta_V(\mathcal{T})$ and $\Delta_T(\mathcal{T})$ denote performance gains from adapting the image and text encoders. If
\[
\Delta_V(\mathcal{T}) \neq \Delta_T(\mathcal{T}),
\]
then
\[
\operatorname{Branch\ Bias}(\mathcal{T}) > 0.
\]
\end{proposition}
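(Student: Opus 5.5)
The plan is to prove the contrapositive: if $\operatorname{Branch\ Bias}(\mathcal{T}) = 0$, then $\Delta_V(\mathcal{T}) = \Delta_T(\mathcal{T})$. The statement only becomes provable once we fix what a ``performance gain from adapting an encoder'' means, so the first step is to adopt an explicit first-order model of it. The paper does not give one, so this is an assumption I am adding, not something the paper supplies.

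\textbf{Step 1 (divergence side).} Since $D$ is a divergence, such as KL, $D(P,Q)=0$ holds if and only if $P=Q$. Hence $\operatorname{Branch\ Bias}(\mathcal{T})=0$ forces
\[
\frac{C_V}{C_V+C_T}=\frac{1}{2},
\]
that is, $C_V(\mathcal{T})=C_T(\mathcal{T})$. Conversely, $C_V\neq C_T$ gives $P_{\mathcal{T}}\neq P^{\text{uniform}}$, and therefore a strictly positive divergence. Throughout I assume $C_V+C_T>0$, so that $P_{\mathcal{T}}$ is well defined. It therefore suffices to show that $\Delta_V\neq\Delta_T$ implies $C_V\neq C_T$.

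\textbf{Step 2 (linking gains to contributions).} I will model the adaptation of a branch $B\in\{V,T\}$ as a small perturbation of that branch's representation under a common budget $\eta>0$. For each sample, the perturbation is taken along the normalized descent direction $-\nabla_{B}\ell/\|\nabla_B\ell\|$. I measure the gain as the expected decrease in loss. I assume $\ell$ is $L$-smooth in each branch representation. A first-order Taylor expansion then gives, per sample,
\[
\ell\big(B - \eta\,\nabla_B\ell/\|\nabla_B\ell\|\big) = \ell(B) - \eta\|\nabla_B\ell\| + O(L\eta^2).
\]
Taking expectations over $(x,y)\sim\mathcal{T}$ yields
\[
\Delta_B(\mathcal{T}) = \eta\, C_B(\mathcal{T}) + O(L\eta^2).
\]
In the small-budget regime, which matches the small adaptation strength $\alpha$ used in the paper, $\Delta_B$ is therefore a strictly increasing function of $C_B$, up to a shared higher-order term. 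If $C_V=C_T$, the leading terms coincide and $\Delta_V=\Delta_T$ to first order. Contrapositively, a first-order imbalance $\Delta_V\neq\Delta_T$ forces $C_V\neq C_T$, and Step 1 then gives $\operatorname{Branch\ Bias}(\mathcal{T})>0$.

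\textbf{Main obstacle.} The hard part is Step 2, namely justifying that gains are monotone in the expected gradient norm. This can fail in two ways.
\begin{itemize}
\item The $O(\eta^2)$ remainders of the two branches can differ. This could make $\Delta_V\neq\Delta_T$ even when $C_V=C_T$.
\item Adapters act on parameters, not directly on the representations $V(x)$ and $T(y)$.
\end{itemize}
I would first state the result as a first-order statement: either take $\eta\to 0$ and compare the rescaled gains $\Delta_B/\eta$, or define the gains directly as first-order gains. This removes the remainder issue exactly. For the parameter issue, I would note that the adapter outputs are added residually with a common factor $\alpha$. By the chain rule, the adapter-induced change in each representation is then proportional to the representation-level gradient, with comparable Jacobian scaling. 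If time permits, I would replace the exact monotonicity with a strictly increasing link function $g$, assumed as a hypothesis, such that $\Delta_B=g(C_B)$. Under that hypothesis the proposition follows immediately from injectivity of $g$ together with Step 1.
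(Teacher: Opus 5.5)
Your proposal is correct and is essentially the paper's argument. The paper assumes $C_V\propto\Delta_V$ and $C_T\propto\Delta_T$, then uses that $D(P_{\mathcal{T}},P^{\text{uniform}})>0$ whenever $C_V/(C_V+C_T)\neq 0.5$; this is your Step~1 plus your fallback hypothesis $\Delta_B=g(C_B)$ with $g$ strictly increasing, proportionality being the linear case. Your version is slightly more careful: it makes explicit that the link must be shared by both branches (the paper's ``$\propto$'' tacitly needs a common constant) and that $C_V+C_T>0$, while your first-order Taylor derivation is extra motivation the paper does not attempt.
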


\begin{proof}
Assume $C_V \propto \Delta_V$ and $C_T \propto \Delta_T$. If $\Delta_V \neq \Delta_T$, then
\[
\frac{C_V}{C_V+C_T} \neq 0.5 \Rightarrow P_{\mathcal{T}} \neq (0.5,0.5),
\]
thus
\[
D(P_{\mathcal{T}}, P^{\text{uniform}}) > 0.
\]
\end{proof}

\begin{proposition}[Negative Transfer Amplifies \emph{Branch Bias}]
If adapting one branch causes negative transfer, \textit{e.g.}, $\Delta_V(\mathcal{T})<0$, then \emph{Branch Bias} increases.
\end{proposition}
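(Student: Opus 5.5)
We follow the preceding proposition and adopt its working assumption as an explicit modeling hypothesis. For a task $\mathcal{T}$ we set $C_V = c\,|\Delta_V(\mathcal{T})|$ and $C_T = c\,|\Delta_T(\mathcal{T})|$ for some common constant $c>0$, so that branch contribution is proportional to the magnitude of the effect of adapting that branch. We also fix $D$ to be a divergence that is monotone in the deviation $|p-0.5|$ of the first coordinate $p = C_V/(C_V+C_T)$. KL divergence to $(0.5,0.5)$ has this property, since $D(p) = \log 2 - H_b(p)$, where $H_b$ is the binary entropy, and this is strictly increasing in $|p-0.5|$. With these choices, $\operatorname{Branch\ Bias}(\mathcal{T})$ reduces to a scalar function $\phi(p) = \log 2 - H_b(p)$, and the claim becomes a comparison of $|p-0.5|$ between a reference configuration and a negative-transfer configuration.

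The key step is to make precise what ``increases'' compares, since the statement has no explicit baseline. We compare against the reference task $\mathcal{T}_0$, where the image adapter yields a nonnegative gain $\Delta_V^0 \ge 0$ and the text gain $\Delta_T>0$ is held fixed. Negative transfer is modeled as follows: in the signed sense, the effective useful contribution of the image branch is $\max(\Delta_V,0)$, and any harm pushes reliance further toward the text branch. So we define the effective share as $p = \max(\Delta_V,0)/(\max(\Delta_V,0)+\Delta_T)$. When $\Delta_V<0$ this gives $p=0$, which is the extreme point, and $\phi(0)=\log 2$ is the maximal value of $\phi$. Then, for any reference with $\Delta_V^0 \ge 0$ and $\Delta_V^0 \ne \Delta_T$, the reference share $p_0$ lies in $(0,1)$ with $p_0 \neq 0.5$. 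Strict monotonicity of $\phi$ on $[0,0.5]$ and on $[0.5,1]$, together with $|p_0 - 0.5| < 0.5 = |0-0.5|$, gives $\phi(0) > \phi(p_0)$. In the degenerate case $\Delta_V^0 = \Delta_T$ we have $\phi(p_0)=0$, and the inequality is still strict. This establishes that Branch Bias strictly increases.

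We also plan a continuous version, which avoids the truncation $\max(\cdot,0)$. Define the contribution by the raw ratio $C_V/C_T = \Delta_V/\Delta_T$ while $\Delta_V$ decreases from a positive value toward zero. Then $p$ decreases monotonically toward $0$, moving away from $0.5$ once $\Delta_V < \Delta_T$, and $\phi(p)$ increases by the derivative $\phi'(p) = \log\frac{p}{1-p} < 0$ on $(0,0.5)$. Passing through $\Delta_V=0$ into negative values saturates at the maximum.

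The main obstacle is the modeling step. $C_V$ is a gradient norm and is therefore nonnegative, whereas $\Delta_V$ can be negative, so the proportionality $C_V\propto\Delta_V$ cannot hold literally. I would handle this by stating explicitly that the contribution reflects the beneficial part of the gain (the truncation), and by noting that the result is relative to a baseline with $\Delta_V \ge 0$. Everything else is a monotonicity check on the binary KL.
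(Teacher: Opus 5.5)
Your argument follows the paper's route. The paper's proof asserts that a negative gain pushes the image share $C_V/(C_V+C_T)$ toward $0$, and concludes that $D(P_{\mathcal{T}},P^{\text{uniform}})$ goes up. Your truncation $\max(\Delta_V,0)$, your explicit reference task, and your choice of KL (with $D=\log 2 - H_b(p)$ strictly increasing in $|p-\tfrac12|$) make that two-line heuristic precise. They also supply the baseline and monotonicity the paper leaves implicit. You are right that $C_V$, being a gradient norm, cannot literally be proportional to a negative $\Delta_V$; the paper glosses over this. Your conclusion, like the paper's, rests on the modeling hypothesis that harmful adaptation forces the image share to zero.

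Two slips need fixing.

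First, your opening sets $C_V=c\,|\Delta_V|$. This contradicts the truncated share you actually use, and if kept it would break the claim. With $\Delta_V=-\Delta_T$ it gives $p=\tfrac12$ and zero Branch Bias. A reference with $\Delta_V^0=\Delta_T/2$ has $p_0=\tfrac13$ and positive Branch Bias, so negative transfer would have lowered it. Drop the absolute value and commit to the truncation.

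Second, the strict inequality $\phi(0)>\phi(p_0)$ needs $\Delta_V^0>0$. If $\Delta_V^0=0$, then $p_0=0\notin(0,1)$ and $\phi(p_0)=\phi(0)=\log 2$, so you only get ``does not decrease.'' State the reference as a task where the image adapter genuinely helps.
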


\begin{proof}
Negative gain shifts optimal contribution toward $C_T$:
\[
\frac{C_V}{C_V+C_T} \rightarrow 0,
\]
thus increasing divergence:
\[
D(P_{\mathcal{T}}, P^{\text{uniform}}) \uparrow.
\]
\end{proof}

\begin{theorem}[Uncertainty-Aware Suppression Reduces Effective \emph{Branch Bias}]
Let $\kappa(x)$ denote prediction confidence. Consider:
\[
\mathcal{L}_{\text{bias}}=\mathbb{E}_{x\sim\mathcal{T}}\left[(1-\kappa(x)) \sum_i \|\Delta v_i(x)\|^2\right].
\]
Minimizing $\mathcal{L}_{\text{bias}}$ yields:
\[
C_V^{\mathrm{eff}}(\mathcal{T}) \le C_V(\mathcal{T}),
\]
and
\[
\operatorname{Branch\ Bias}(\mathcal{T}) \rightarrow \operatorname{Branch\ Bias}^{\text{controlled}}(\mathcal{T}).
\]
\end{theorem}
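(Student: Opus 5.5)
The plan is to make the informal statement precise and then prove it in two stages: first an adapter-level contraction, then propagation to the branch contribution.

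\textbf{Effective contribution.} Set
\[
C_V^{\mathrm{eff}}(\mathcal{T})=\mathbb{E}_{(x,y)\sim\mathcal{T}}\bigl[\|\nabla_{V(x)}\ell(f(V_\theta(x),T(y)),y)\|\bigr],
\]
where $V_\theta(x)$ is the image representation with adapter outputs $\Delta v_i(x)=\alpha\mathcal{A}_i^v(z_i)$ added at each layer, and $\theta$ is obtained by minimizing $\mathcal{L}_{CE}+\lambda_{bias}\mathcal{L}_{bias}$. Compare it with $C_V(\mathcal{T})$, the contribution of the unregularized minimizer ($\lambda_{bias}=0$). Both are measured through the adapted pathway, since the frozen backbone carries no trainable signal. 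I will state the following standing assumptions explicitly.

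\textbf{Assumptions.}
\begin{itemize}
\item The image-branch gradient norm is monotone, and Lipschitz, in the total adapter displacement $\sum_i\|\Delta v_i(x)\|$. Locally, $\|\nabla_{V(x)}\ell\|\le g_0(x)+\beta\sum_i\|\Delta v_i(x)\|$, where $g_0$ is the zero-adapter gradient.
\item Following the convention already used for Proposition~1, contribution is taken as proportional to the adapted portion of this gradient.
\end{itemize}

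\textbf{Step 1: the penalty shrinks adapter outputs.} The weight $1-\kappa(x)$ is nonnegative, so adding $\lambda_{bias}\mathcal{L}_{bias}$ is a standard Tikhonov-type penalty. Let $\theta_0$ and $\theta_\lambda$ be the minimizers for $\lambda=0$ and $\lambda>0$. Compare the two objectives at $\theta_0$ and at $\theta_\lambda$ and add the two optimality inequalities. This gives the monotonicity result
\[
\mathcal{L}_{bias}(\theta_\lambda)\le\mathcal{L}_{bias}(\theta_0).
\]
The weighted expected squared displacement therefore does not increase, and it strictly decreases on the uncertain region $\{\kappa<1\}$ unless the adapter is already zero there. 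For high-confidence samples ($\kappa\to1$) the weight vanishes, and those samples are left essentially unconstrained, consistent with Insight~II.

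\textbf{Step 2: propagate to $C_V^{\mathrm{eff}}$.} Write $C_V$ as the sum of a term supported near $\kappa\approx1$ and a term supported on the uncertain region. Apply the Lipschitz assumption and Cauchy--Schwarz to convert the bound on $\mathbb{E}[(1-\kappa)\sum_i\|\Delta v_i\|^2]$ into a bound on $\mathbb{E}[\sum_i\|\Delta v_i\|]$ on the region where $1-\kappa\ge\epsilon$.

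\textbf{Main obstacle.} Steps 1 and 2 are not enough on their own. The penalty reduces only a \emph{weighted} aggregate, not each sample's displacement, and $\kappa$ itself depends on $\theta$. Neither inequality from Step 1 therefore gives pointwise control. My first attempt is to treat $\kappa$ as a stop-gradient weight fixed at $\theta_0$, which matches how it would be implemented. I would then work in a neighbourhood where the cross-entropy term is locally strongly convex. Standard perturbation of minimizers then gives
\[
\|\Delta v(\theta_\lambda)\|_{(1-\kappa)}\le\|\Delta v(\theta_0)\|_{(1-\kappa)},
\]
and from this $C_V^{\mathrm{eff}}\le C_V$ follows by monotonicity.

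\textbf{Step 3: the branch-bias conclusion.} Only $\mathcal{A}^v$ appears in $\mathcal{L}_{bias}$, and the text adapters are untouched, so $C_T$ is unchanged to first order. The ratio $C_V/(C_V+C_T)$ is thus replaced by the smaller ratio $C_V^{\mathrm{eff}}/(C_V^{\mathrm{eff}}+C_T)$. This lower ratio is what I define as $\operatorname{Branch\ Bias}^{\text{controlled}}(\mathcal{T})$: a divergence governed by $\lambda_{bias}$ and the confidence profile rather than by unconstrained image adaptation. It interpolates continuously between the unregularized value at $\lambda_{bias}=0$ and the text-dominant limit as $\lambda_{bias}\to\infty$ on OOD-like samples.
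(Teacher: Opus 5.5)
\textbf{There is a genuine gap: the step ``$C_V^{\mathrm{eff}}\le C_V$ follows by monotonicity'' does not go through.} Everything you actually establish (Step 1, and again the ``perturbation of minimizers'' in your obstacle paragraph) is a single aggregate inequality:
\[
\mathbb{E}_x\bigl[(1-\kappa(x))\,s(x;\theta_\lambda)\bigr]\le\mathbb{E}_x\bigl[(1-\kappa(x))\,s(x;\theta_0)\bigr],\qquad s(x;\theta)=\sum_i\|\Delta v_i(x;\theta)\|^2 .
\]
This controls only a $(1-\kappa)$-weighted norm over samples. With $\kappa$ frozen it already follows from adding the two optimality inequalities. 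Local strong convexity adds only $\|\theta_\lambda-\theta_0\|=O(\lambda)$ and the expansion $\theta_\lambda-\theta_0\approx-\lambda H^{-1}\nabla_\theta\mathcal{L}_{\text{bias}}(\theta_0)$, where $H=\nabla^2_\theta\mathcal{L}_{CE}(\theta_0)$. Under that expansion, the change in a single sample's displacement is $-\lambda\langle\nabla_\theta s(x;\theta_0),H^{-1}\nabla_\theta\mathcal{L}_{\text{bias}}(\theta_0)\rangle$. This has no definite sign; only its $(1-\kappa)$-weighted average is guaranteed non-positive.

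$C_V$, however, is an unweighted expectation that includes the region $\kappa\approx 1$, where the penalty exerts no control. The image adapter is a shared parametric map trained jointly with the unpenalized text adapters, so nothing prevents the regularized minimizer from shifting displacement onto confident samples. Concluding $C_V^{\mathrm{eff}}\le C_V$ therefore needs per-sample, or at least unweighted, control of the displacement. That is exactly the ingredient you flagged as missing, and you never supply it.

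\textbf{The monotonicity hypothesis is also doubtful on its own.} First, $\nabla_{V(x)}\ell$ depends on the direction of the displacement, not just its norm. Second, for cross-entropy the natural relation runs the other way. Since $\mathcal{L}_{CE}(\theta_0)\le\mathcal{L}_{CE}(\theta_\lambda)$, the unregularized minimizer fits the training data at least as well. The feature gradient satisfies $\|\nabla_{V(x)}\ell\|\le 2(1-p_y)/(\tau\|V(x)\|)$, where $p_y$ is the predicted probability of the true class, and this bound shrinks as the fit improves. So under your minimizer-to-minimizer definition one would typically expect $C_V^{\mathrm{eff}}\ge C_V$ on the training data. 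The assumption is not a harmless regularity condition: it encodes the conclusion, in the direction opposite to what the loss suggests.

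\textbf{Step 3 has a similar gap.} $\nabla_{T(y)}\ell$ depends on $V(x)$ through the cosine logits, and the text adapters are re-optimized jointly. So $C_T$ moves at the same order in $\lambda_{bias}$ as $C_V$, and the ratio need not decrease unless you control $C_T$ as well.

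\textbf{Comparison with the paper.} The paper never compares minimizers. At fixed parameters it:
\begin{itemize}
\item forms the augmented per-sample loss $\ell'=\ell+(1-\kappa)\|\Delta v\|^2$;
\item writes $\nabla_{V(x)}\ell'=\nabla_{V(x)}\ell+(1-\kappa)\nabla_{V(x)}\|\Delta v\|^2$;
\item asserts that, because the added penalty term is non-negative, the norm cannot increase;
\item takes expectations.
\end{itemize}
Its $C_V^{\mathrm{eff}}$ is thus the expected gradient norm of $\ell'$, not the cross-entropy gradient at a different minimizer. That pointwise route avoids your aggregate-versus-pointwise problem entirely. But its key step is itself an unstated hypothesis: $\|a+b\|\le\|a\|$ holds if and only if $2\langle a,b\rangle+\|b\|^2\le 0$, and no sign property of the penalty provides this.

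In either formulation, a rigorous argument needs an explicit assumption of this kind. That could be an alignment condition between the loss and penalty gradients, or per-sample shrinkage of the displacement together with a correctly oriented gradient--displacement relation. The result should then be stated as conditional on that assumption.
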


\begin{proof}
Define:
\[
\ell'(x,y)=\ell(f(V(x),T(y)),y)+(1-\kappa(x))\|\Delta v(x)\|^2.
\]
Then:
\[
\nabla_{V(x)}\ell'=\nabla_{V(x)}\ell+(1-\kappa(x))\nabla_{V(x)}\|\Delta v(x)\|^2.
\]
The second term penalizes large updates and is non-negative, hence:
\[
\|\nabla_{V(x)}\ell'\|\le \|\nabla_{V(x)}\ell\|.
\]
Taking expectation:
\[
C_V^{\text{eff}}(\mathcal{T}) \le C_V(\mathcal{T}).
\]
Thus, image-branch contribution is reduced, rebalancing contributions and lowering \emph{Branch Bias}.
\end{proof}

\textbf{Remark.} \textbf{Modality misalignment is a consequence of \emph{Branch Bias}}, while \emph{Branch Bias} is one of its underlying causes.

\section{Detailed Motivation Experiment}
\label{sec:motivation}
In this section, we present the details of our motivation experiments. Our experiments are based on MMA~\cite{yang2024mma} without shared layers, with the intermediate feature dimension set to 32 and the scaling factor set to 0.001. Figures~\ref{fig:motivation_base2new}, \ref{fig:motivation_cross_dataset}, and \ref{fig:motivation_generalization} show the results on various datasets for the base-to-novel generalization, cross-dataset evaluation, and domain generalization tasks, respectively. From Figures~\ref{fig:motivation_base2new}, \ref{fig:motivation_cross_dataset}, and \ref{fig:motivation_generalization}, we observe that \textbf{most examples follow the three insights we identified, which strongly supports their generalizability.}

\begin{figure*}[t]
    \centering
    \begin{subfigure}[b]{0.495\linewidth}
        \includegraphics[width=\linewidth]{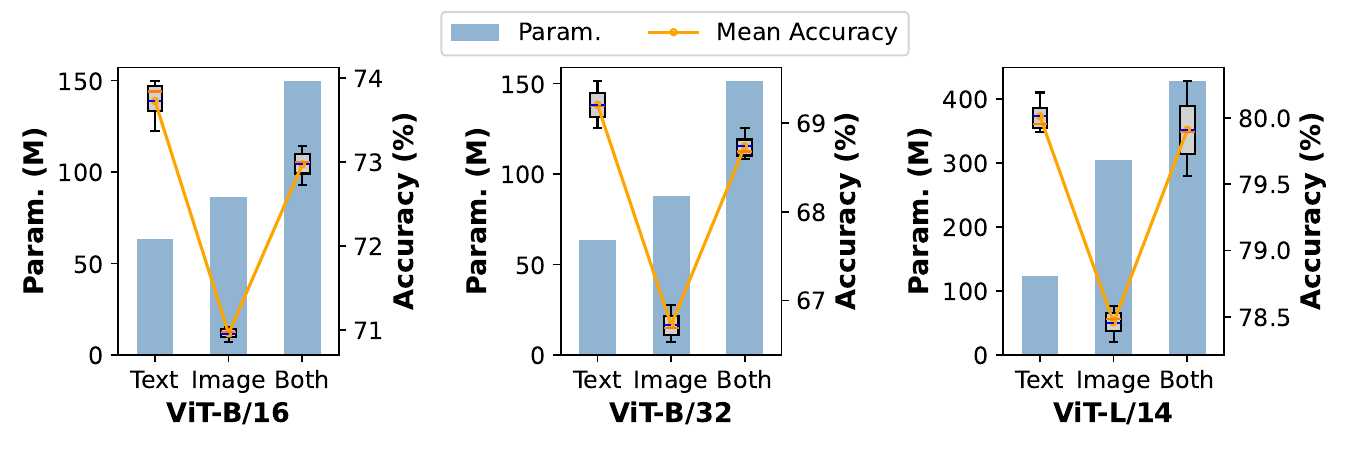}
        \vspace{-0.7cm}
        \caption{ImageNet}
    \end{subfigure}
    \hfill
    \begin{subfigure}[b]{0.495\linewidth}
        \includegraphics[width=\linewidth]{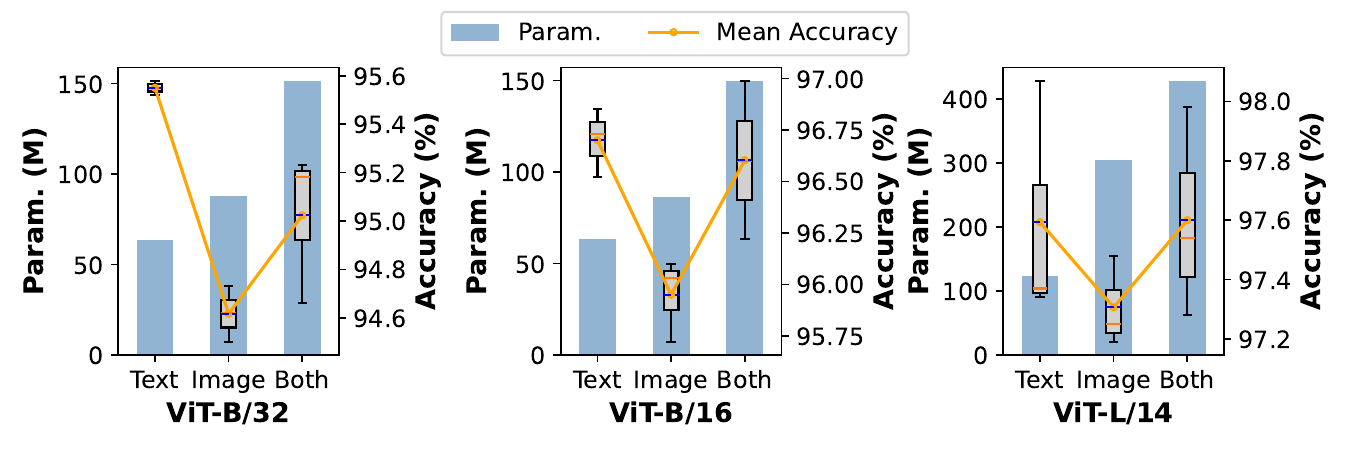}
        \vspace{-0.7cm}
        \caption{Caltech101}
    \end{subfigure}

    \begin{subfigure}[b]{0.495\linewidth}
        \includegraphics[width=\linewidth]{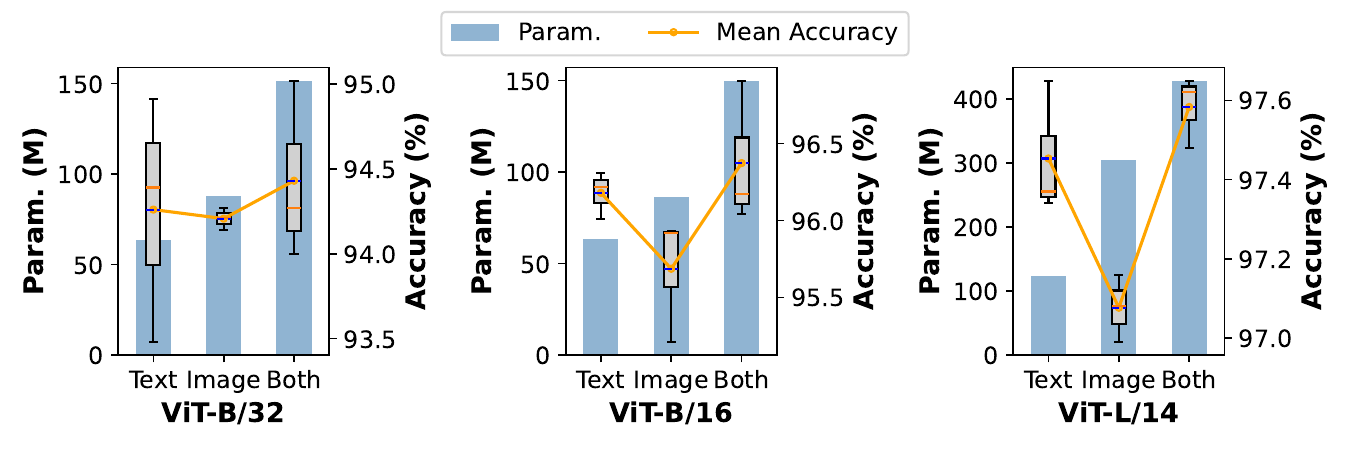}
        \vspace{-0.7cm}
        \caption{OxfordPets}
    \end{subfigure}
    \hfill
    \begin{subfigure}[b]{0.495\linewidth}
        \includegraphics[width=\linewidth]{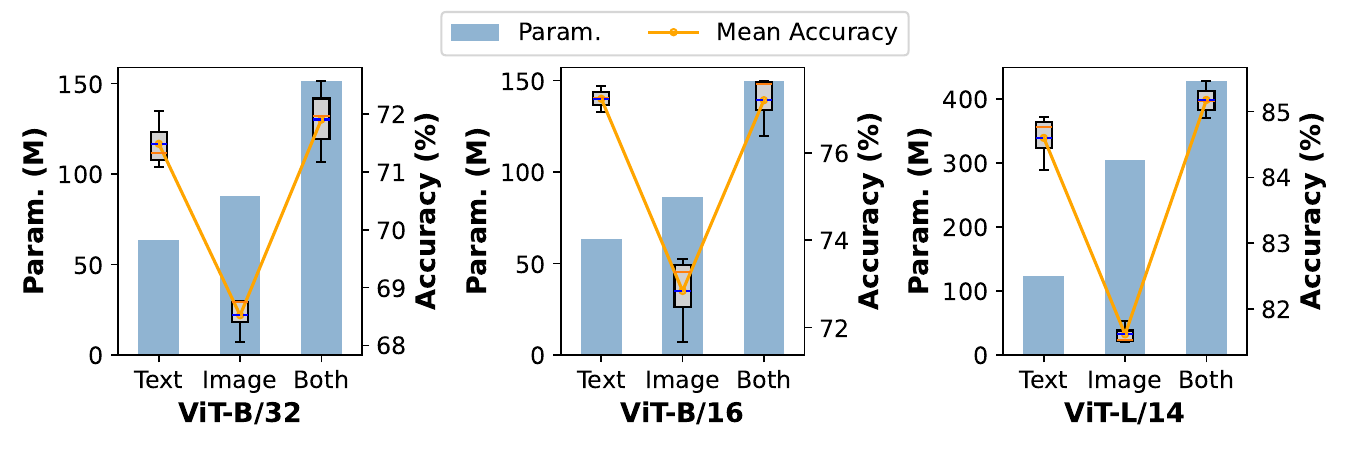}
        \vspace{-0.7cm}
        \caption{Cars}
    \end{subfigure}

    \begin{subfigure}[b]{0.495\linewidth}
        \includegraphics[width=\linewidth]{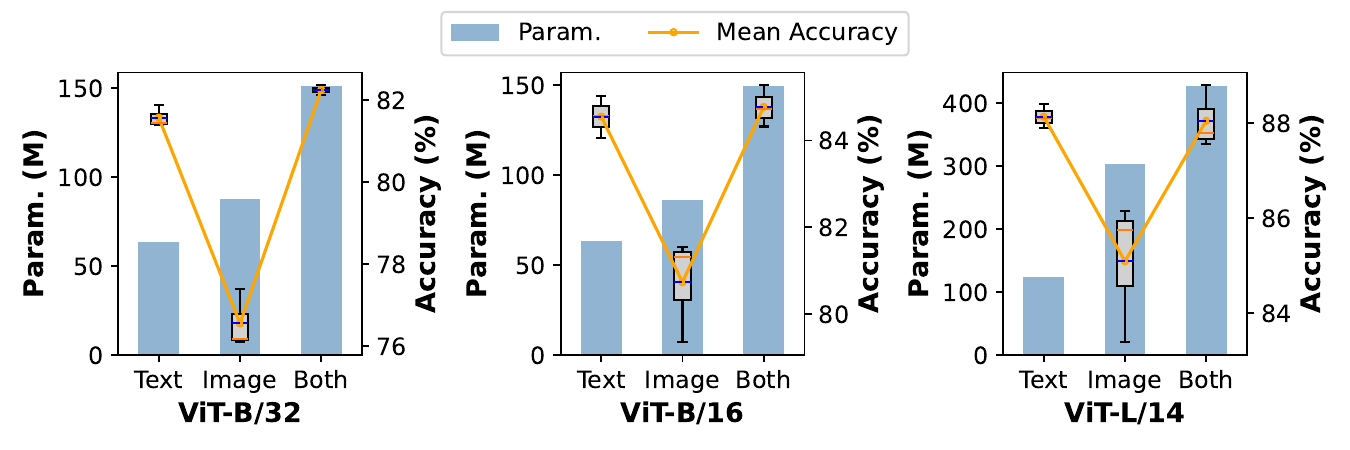}
        \vspace{-0.7cm}
        \caption{Flowers}
    \end{subfigure}
    \hfill
    \begin{subfigure}[b]{0.495\linewidth}
        \includegraphics[width=\linewidth]{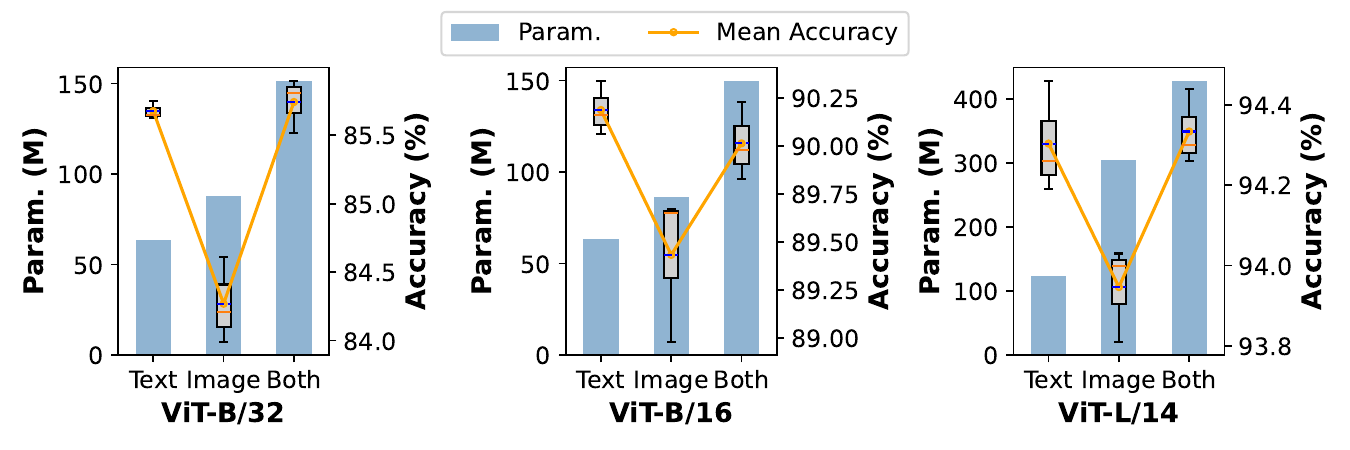}
        \vspace{-0.7cm}
        \caption{Food101}
    \end{subfigure}

    \begin{subfigure}[b]{0.495\linewidth}
        \includegraphics[width=\linewidth]{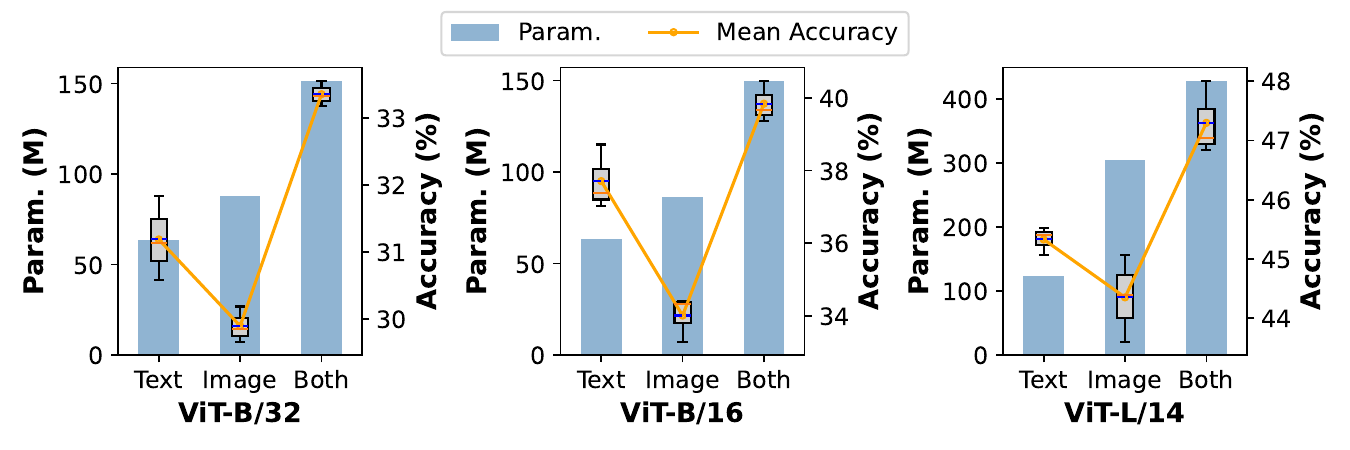}
        \vspace{-0.7cm}
        \caption{Aircraft}
    \end{subfigure}
    \hfill
    \begin{subfigure}[b]{0.495\linewidth}
        \includegraphics[width=\linewidth]{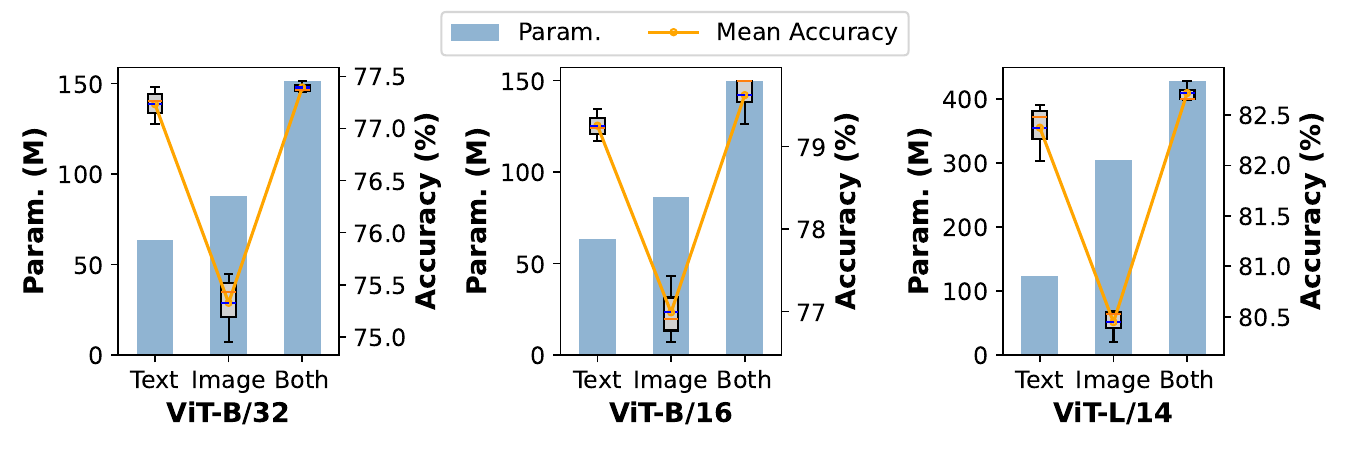}
        \vspace{-0.7cm}
        \caption{SUN397}
    \end{subfigure}

    \begin{subfigure}[b]{0.495\linewidth}
        \includegraphics[width=\linewidth]{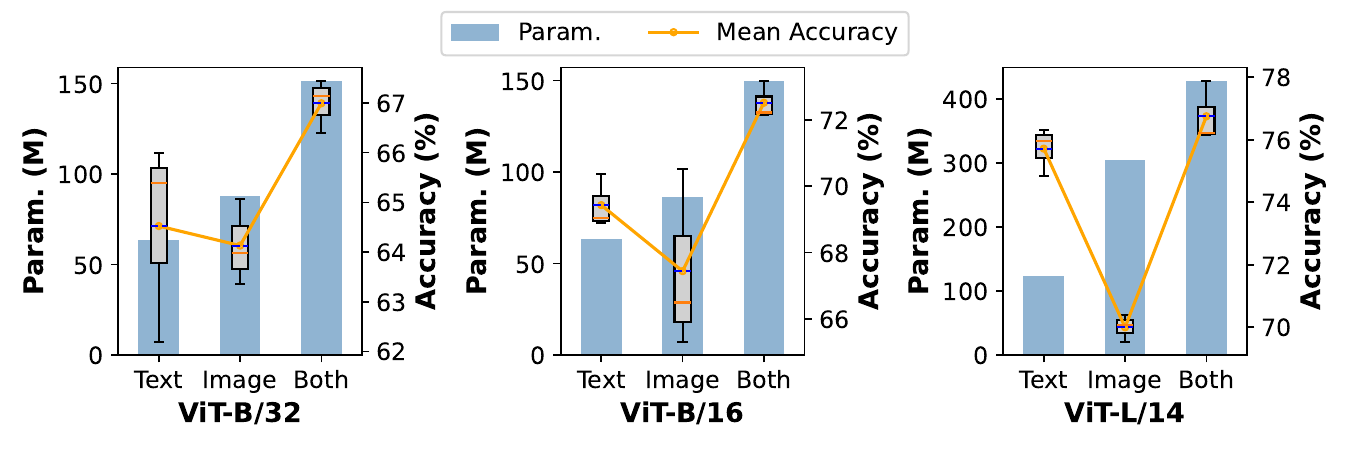}
        \vspace{-0.7cm}
        \caption{DTD}
    \end{subfigure}
    \hfill
    \begin{subfigure}[b]{0.495\linewidth}
        \includegraphics[width=\linewidth]{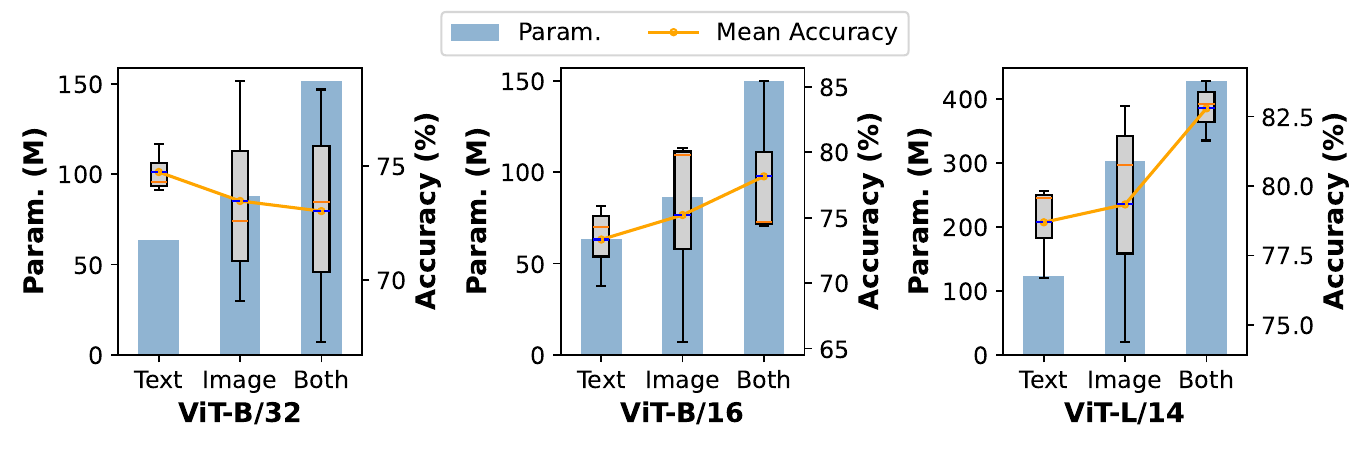}
        \vspace{-0.7cm}
        \caption{EuroSAT}
    \end{subfigure}

    \begin{subfigure}[b]{0.495\linewidth}
        \includegraphics[width=\linewidth]{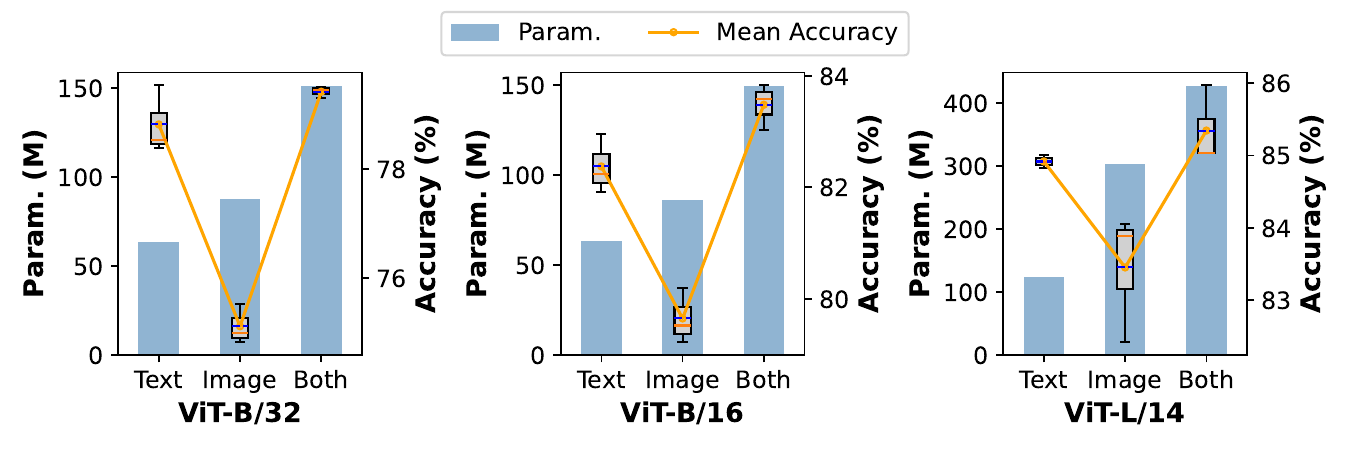}
        \vspace{-0.7cm}
        \caption{UCF101}
    \end{subfigure}

    \vspace{-0.2cm}
    \caption{The performance of symmetric (both) and asymmetric (text and image) adapters in the Base-to-Novel Generalization task across 11 datasets with three pretrained transformer-based CLIP models.}
    \label{fig:motivation_base2new}
\end{figure*}

\begin{figure*}[t]
    \centering
    
    \begin{subfigure}[b]{0.495\linewidth}
        \includegraphics[width=\linewidth]{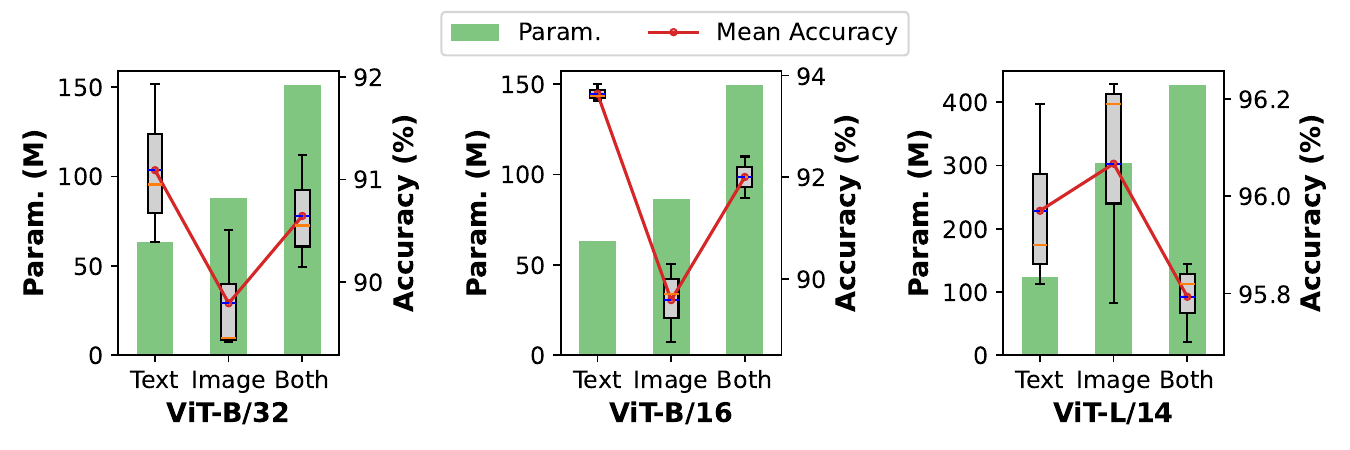}
        \vspace{-0.7cm}
        \caption{Caltech101}
    \end{subfigure}
    \hfill
    \begin{subfigure}[b]{0.495\linewidth}
        \includegraphics[width=\linewidth]{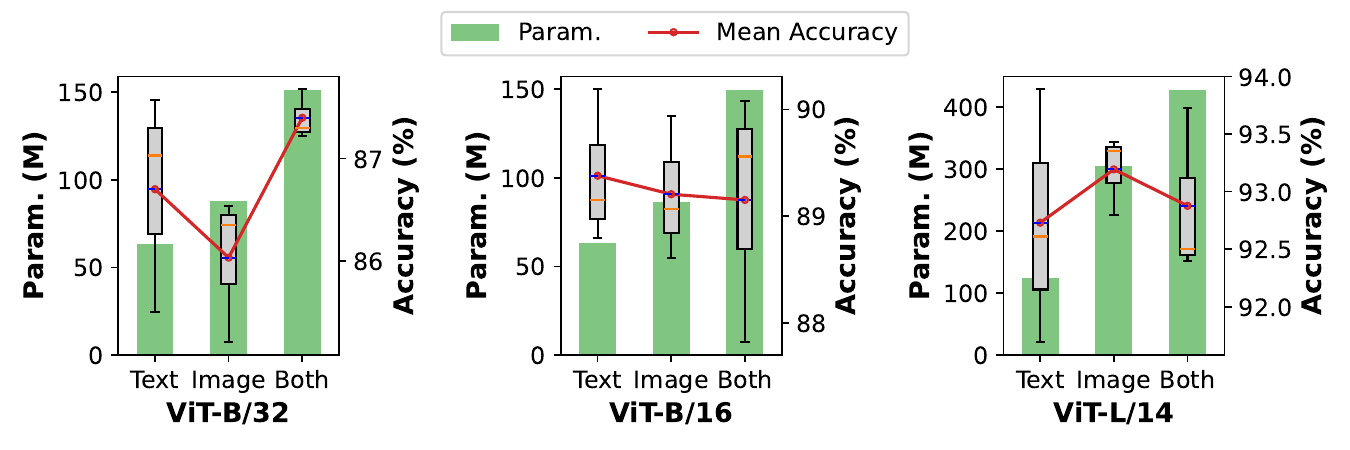}
        \vspace{-0.7cm}
        \caption{OxfordPets}
    \end{subfigure}
    \begin{subfigure}[b]{0.495\linewidth}
        \includegraphics[width=\linewidth]{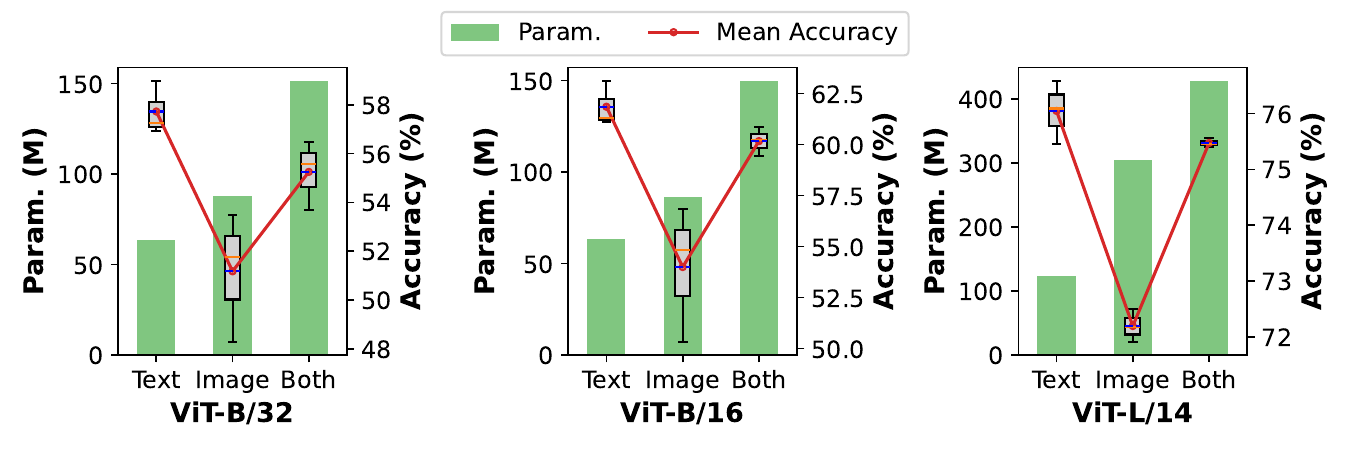}
        \vspace{-0.7cm}
        \caption{Cars}
    \end{subfigure}
    \hfill
    \begin{subfigure}[b]{0.495\linewidth}
        \includegraphics[width=\linewidth]{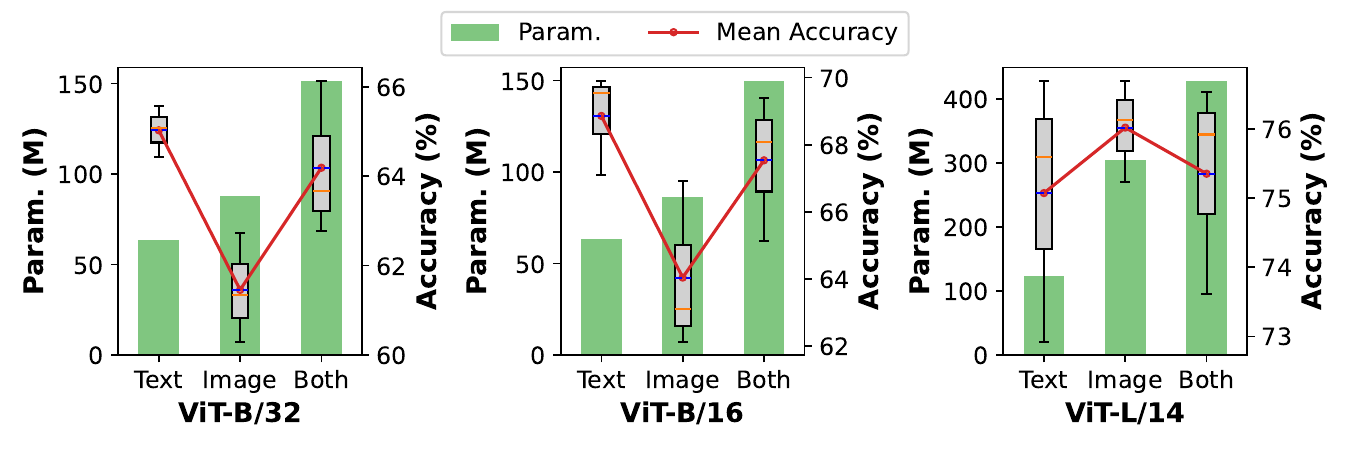}
        \vspace{-0.7cm}
        \caption{Flowers}
    \end{subfigure}
    \begin{subfigure}[b]{0.495\linewidth}
        \includegraphics[width=\linewidth]{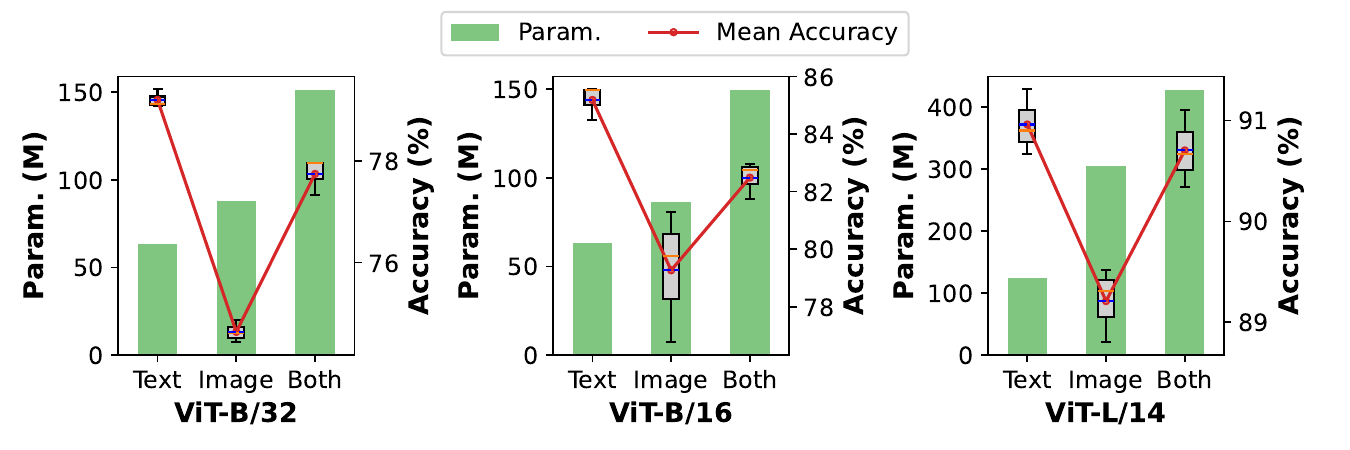}
        \vspace{-0.7cm}
        \caption{Food101}
    \end{subfigure}
    \hfill
    \begin{subfigure}[b]{0.495\linewidth}
        \includegraphics[width=\linewidth]{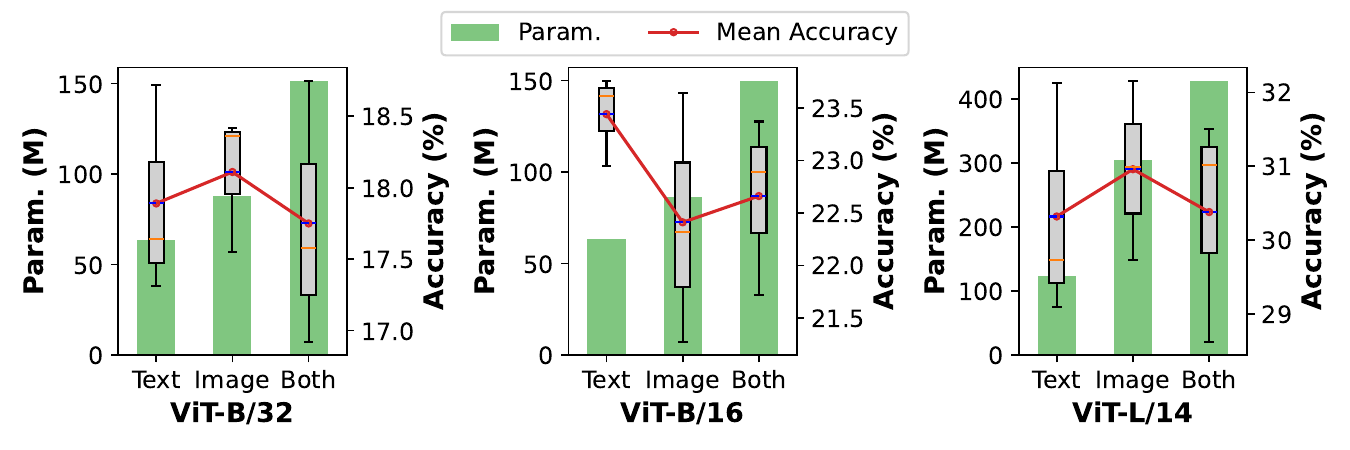}
        \vspace{-0.7cm}
        \caption{Aircraft}
    \end{subfigure}
    \begin{subfigure}[b]{0.495\linewidth}
        \includegraphics[width=\linewidth]{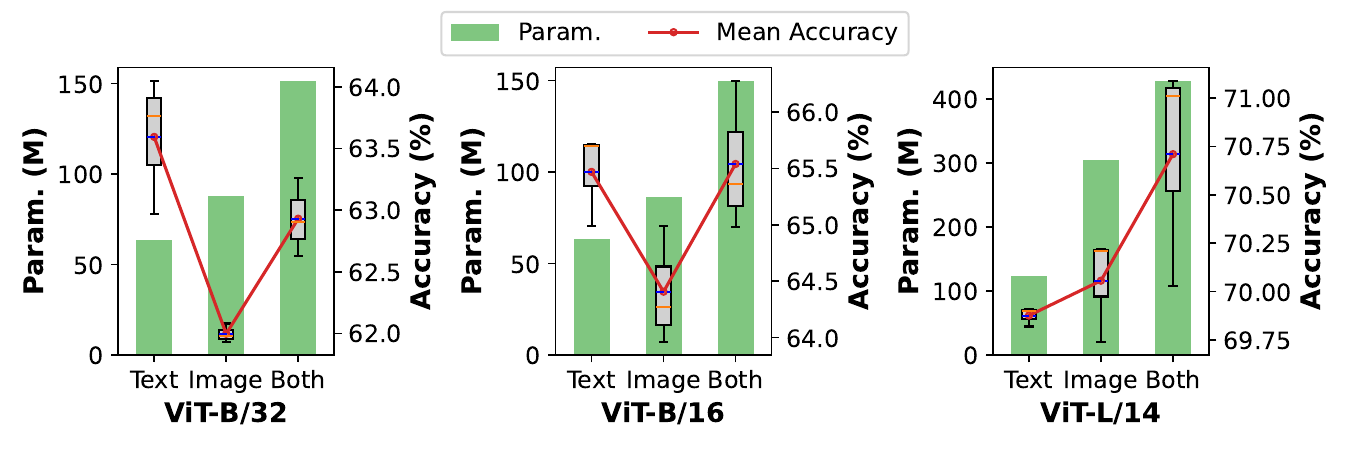}
        \vspace{-0.7cm}
        \caption{SUN397}
    \end{subfigure}
    \hfill
    \begin{subfigure}[b]{0.495\linewidth}
        \includegraphics[width=\linewidth]{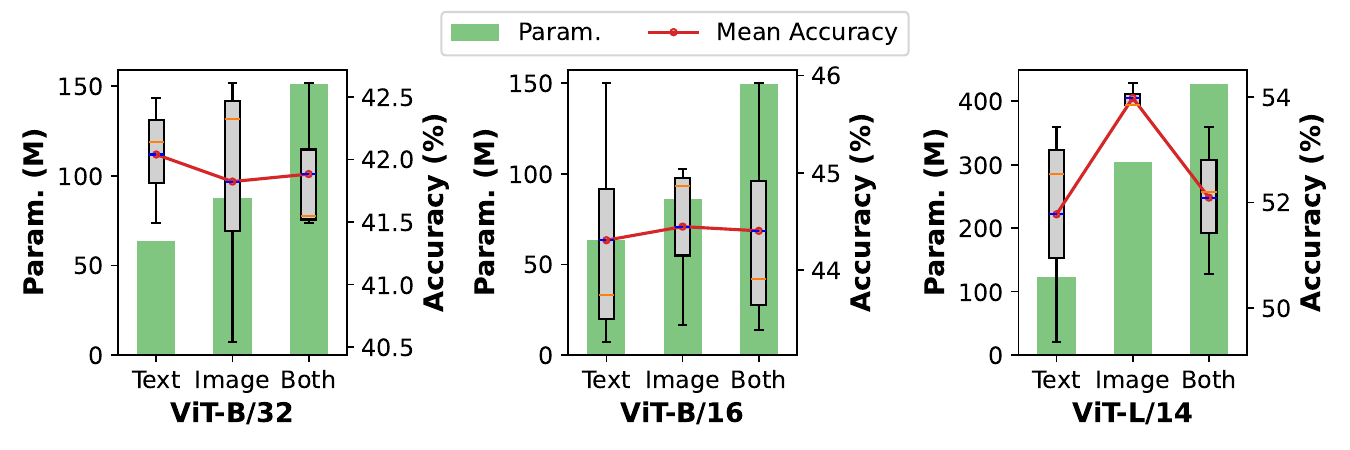}
        \vspace{-0.7cm}
        \caption{DTD}
    \end{subfigure}
    \begin{subfigure}[b]{0.495\linewidth}
        \includegraphics[width=\linewidth]{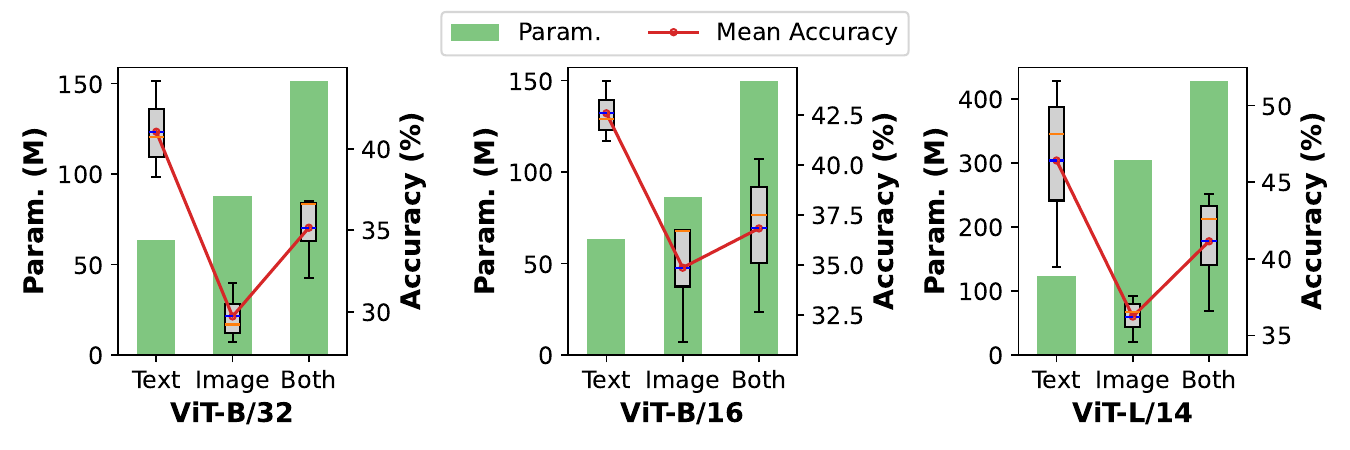}
        \vspace{-0.7cm}
        \caption{EuroSAT}
    \end{subfigure}
    \hfill
    \begin{subfigure}[b]{0.495\linewidth}
        \includegraphics[width=\linewidth]{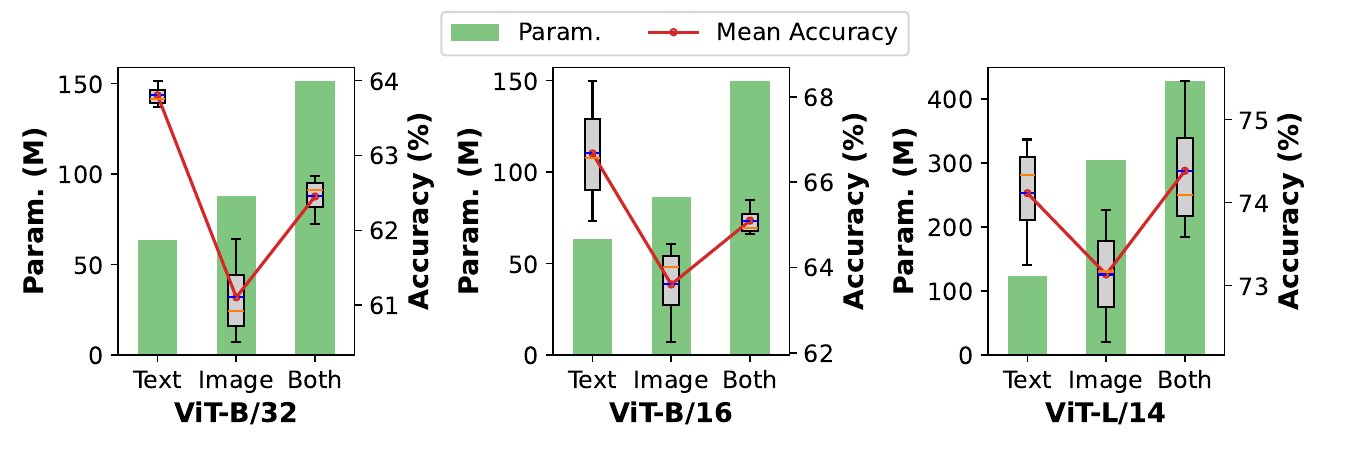}
        \vspace{-0.7cm}
        \caption{UCF101}
    \end{subfigure}

    \vspace{-0.2cm}
    \caption{The performance of symmetric (both) and asymmetric (text and image) adapters in the Cross-Dataset Evaluation task across 10 datasets with three pretrained transformer-based CLIP models.}
    \label{fig:motivation_cross_dataset}
\end{figure*}

\begin{figure*}[t!]
    \centering
    
    \begin{subfigure}[b]{0.495\linewidth}
        \includegraphics[width=\linewidth]{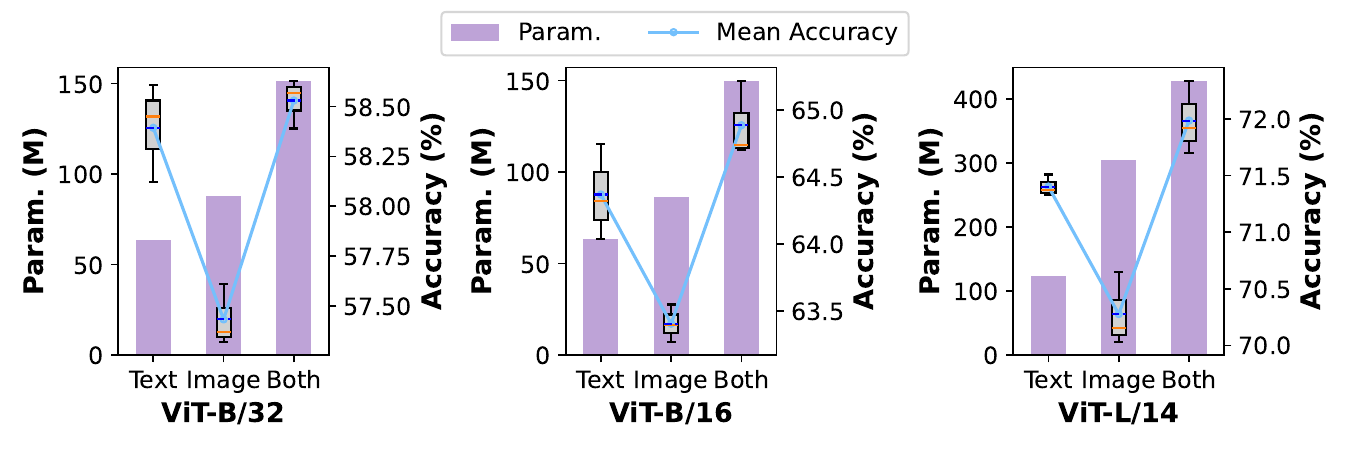}
        \vspace{-0.7cm}
        \caption{ImageNetV2}
    \end{subfigure}
    \hfill
    \begin{subfigure}[b]{0.495\linewidth}
        \includegraphics[width=\linewidth]{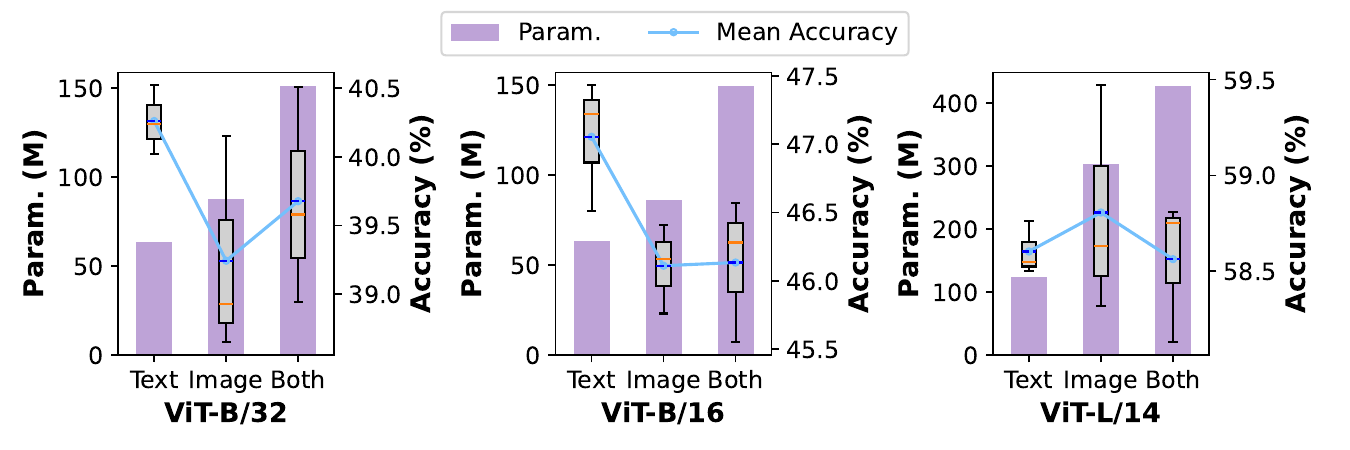}
        \vspace{-0.7cm}
        \caption{ImageNet-Sketch}
    \end{subfigure}
    \begin{subfigure}[b]{0.495\linewidth}
        \includegraphics[width=\linewidth]{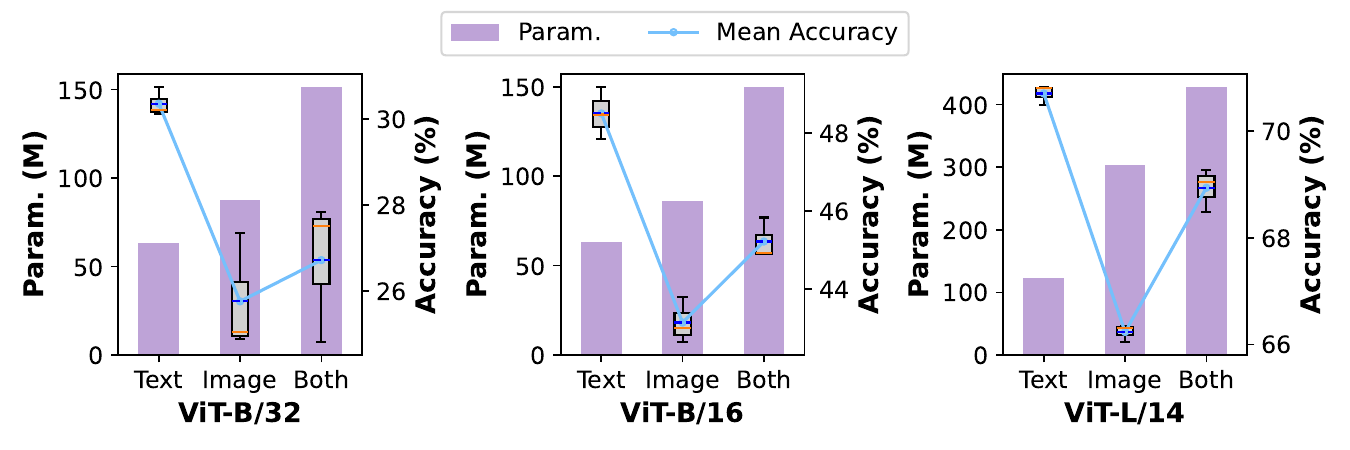}
        \vspace{-0.7cm}
        \caption{ImageNet-A}
    \end{subfigure}
    \hfill
    \begin{subfigure}[b]{0.495\linewidth}
        \includegraphics[width=\linewidth]{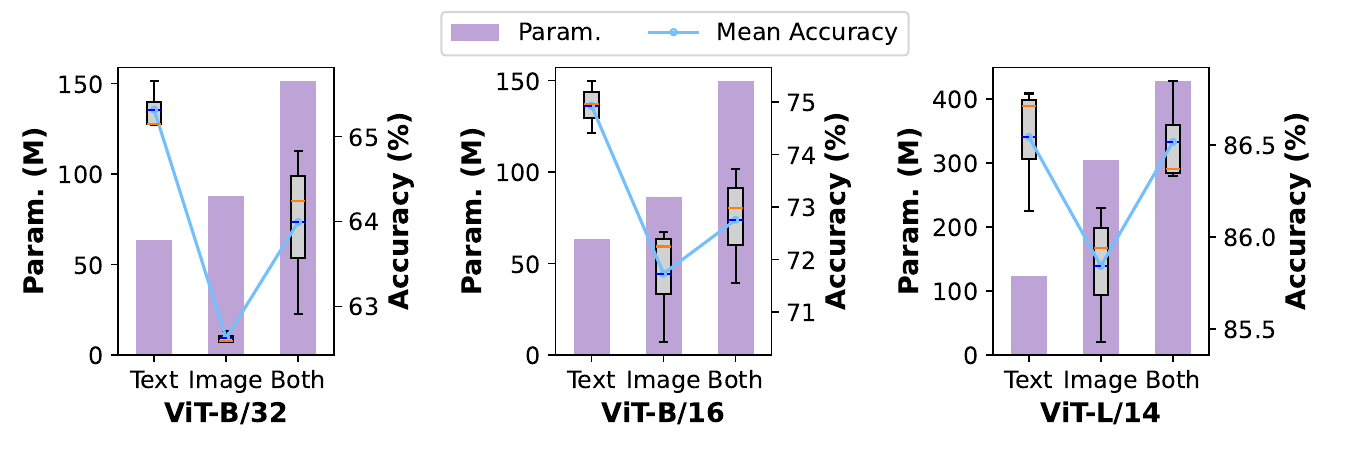}
        \vspace{-0.7cm}
        \caption{ImageNet-R}
    \end{subfigure}

    \vspace{-0.2cm}
    \caption{The performance of symmetric (both) and asymmetric (text and image) adapters in the Domain Generalization task across 4 datasets with three pretrained transformer-based CLIP models.}
    \label{fig:motivation_generalization}
\end{figure*}

\section{Additional Experimental Results}

\label{sec:additional}

In this section, we present additional experimental results mentioned in the main text, including The results of three methods (CoCoOp, CLIP-Adapter, and RPO) on the cross-dataset evaluation and domain generalization tasks (Tables~\ref{tab:cross_other} and~\ref{tab:domain_other}). Additionally, we report results for \emph{A$_3$B$_2$} and 7 leading few-shot learning methods across 11 datasets (Tables~\ref{tab:shot-base}, \ref{tab:shot-novel} and \ref{tab:shot-hm}).

\section{Computational Overhead}
\label{sec:cost}
We provide supplementary information on the training/inference time (minutes) for the Novel setting of the base-to-novel generalization task on a single A800 GPU. The comparison includes the baseline CLIP model, strong symmetric adapter-style MMA, prompt-learning-based MMRL benchmarks, and the proposed \emph{A$_3$B$_2$} on the EuroSAT dataset:

\begin{table}[htbp]
\centering
\begin{tabular}{lcc}
\toprule[1.5pt]
\midrule
Method & Train time & Inference time \\
\midrule
CLIP & -- & 2.37 \\
MMA & 10.25 & 2.96 \\
MMRL & 18.72 & 3.58 \\
\emph{A$_3$B$_2$} & 13.39 & 3.24 \\
\bottomrule[1.5pt]
\end{tabular}
\vspace{-0.3cm}
\caption{Training and inference time comparison of different methods}
\label{tab:time_comparison}
\end{table}

Table~\ref{tab:time_comparison} shows that although MMA achieves the shortest training and inference time for this task, its performance is inferior to MMRL. Additionally, on out-of-distribution tasks, MMA has more parameters than the proposed \emph{A$_3$B$_2$}. While MMRL consistently delivers strong performance across all benchmarks, it incurs higher time costs due to the complexity of training loss optimization and inference functions. In contrast, the proposed \emph{A$_3$B$_2$} strikes a balance between training and inference time while achieving the best overall performance.

\section{CNN-based CLIP Model Experiment}
\label{sec:cnn}

To our knowledge, most existing efficient transfer learning methods for VLMs have not been implemented on CNN-based CLIP models. This is because CNN-based CLIP models generally perform worse than ViT-based CLIP models with significantly fewer parameters. Therefore, applying these methods to CNN-based CLIP models is typically not cost-effective. However, to further demonstrate the applicability of the proposed method, we have included comparative experiments on the base-to-novel generalization task using a ResNet-50-based model on the EuroSAT dataset:

\begin{table*}[htbp]
\centering
\resizebox{\textwidth}{!}{%
\begin{tabular}{l|cccccccccccc}
\toprule[1.5pt]
\midrule
Setting & CLIP & CoOp & CoCoOp & KgCoOp & RPO & MaPLe & CLIP-Adapter & TCP & MMA & MMRL++ & MMRL & \emph{A$_3$B$_2$} \\
\midrule
Base  & 49.64 & 69.56 & 65.83 & 73.28 & 51.38 & 75.39 & 50.36 & 73.95 & 77.56 & 77.64 & 77.92 & 78.94 \\
Novel & 53.26 & 54.73 & 55.93 & 50.35 & 55.46 & 59.47 & 52.68 & 60.82 & 57.83 & 62.59 & 62.64 & 63.15 \\
HM    & 51.39 & 61.26 & 60.48 & 59.69 & 53.34 & 66.49 & 51.49 & 66.75 & 66.26 & 69.31 & 69.45 & 70.17 \\
\bottomrule[1.5pt]
\end{tabular}
}
\vspace{-0.3cm}
\caption{Experimental results of ResNet-50-based models.}
\label{tab:setting_comparison}
\end{table*}
Table~\ref{tab:setting_comparison} validates the robustness of the proposed \emph{A$_3$B$_2$}.

\end{document}